\newtheorem{theorem}{Theorem}
\newtheorem{lemma}{Lemma}
\newtheorem{corollary}{Corollary}
\theoremstyle{definition}
\newtheorem{definition}{Definition}
\title{\bf Linear Convergence of SVRG　 in Statistical Estimation }
\date{}
\author[1]{Chao Qu}
\author[2]{Yan Li}
\author[2]{Huan Xu}
\affil[1]{Department of Mechanical Engineering, National University of Singapore}
\affil[2]{H. Milton Stewart School of Industrial and Systems Engineering, Georgia Institute of Technology}
\begin{document}

\maketitle

\begin{abstract}
The last several years has witness the huge success on the stochastic variance reduction method  in the finite sum problem.  However assumption on strong convexity to have linear rate limits its applicability. In particular, it does not include several important formulations such as Lasso, group Lasso, logistic regression,  and some non-convex models including corrected Lasso and SCAD. In this paper, we prove that, for a class of statistical M-estimators covering examples  mentioned above,  SVRG solves the formulation with {\em a linear convergence rate}  without strong convexity or even convexity.
Our analysis makes use of {\em restricted strong convexity}, under which we show that SVRG converges linearly to   the fundamental statistical precision of the model, i.e., the difference between true unknown parameter $\theta^*$ and the optimal solution $\hat{\theta}$ of the model.
\end{abstract}

\section{Introduction}
In this paper we establish fast convergence rate of stochastic variance reduction gradient (SVRG)  for a class of  problems motivated by applications in high dimensional statistics where {\em the problems are not strongly convex, or even non-convex}. 	
High-dimensional statistics has achieved remarkable success  in the last decade,
including   results on consistency and rates for various estimator under non-asymptotic high-dimensional scaling, especially when the problem dimension $p$ is larger than the number of data $n$ \citep[e.g.,][and many others \citep{candes2006stable,wainwright2006sharp,chen2011robust}]{negahban2009unified, candes2009exact}  . It is now well known  that while this setup appears   ill-posed, the estimation or recovery is indeed possible by exploiting the {\em underlying structure} of the parameter space~--~notable examples include sparse vectors, low-rank matrices,  and structured regression functions, among others.  Recently, estimators leading to non-convex optimizations have gained fast growing attention. Not only it typically has better statistical properties in the high dimensional regime, but also in contrast to common belief, under many cases there exist efficient algorithms that provably find   near-optimal solutions \cite{loh2011high,zhang2012general,loh2013regularized} .

Computation challenges of statistical estimators and machine learning algorithms have been an active area of study, thanks to countless applications involving {\em big data}~--~datasets where both $p$ and $n$ are large. In particular, there are renewed interests in first order methods to solves the following class of optimization problems:
\begin{equation}\label{obj1}
\begin{split}
\underset{\theta\in \Omega}{\mbox{Minimize:}}\quad G(\theta) &\triangleq F(\theta)+\lambda\psi(\theta)\\
&=\frac{1}{n}\sum_{i=1}^{n} f_i(\theta)+\lambda\psi(\theta).
\end{split}
\end{equation}
Problem~\eqref{obj1} naturally arises in statistics and machine learning. In supervised learning, we are
given a sample of $n$ training data $ (x_1,y_1), (x_2,y_2),..., (x_n,y_n)$, and thus  $f_i (\theta)$ is the corresponding loss, e.g., the squared loss $f_i(\theta)=(y_i-\theta^Tx_i)^2$, $\Omega$ is a convex set corresponding to the class of hypothesis, and $\psi(\theta)$ is the (possibly non-convex) regularization. 
Many widely applied statistical formulations are examples of Problem~\eqref{obj1}. A partial list includes:
\begin{itemize}  
	\item Lasso: $f_i(\theta)=\frac{1}{2} (\langle \theta, x_i \rangle-y_i)^2$ and $\psi(\theta)=\|\theta\|_1$.
	\item Group Lasso: $f_i(\theta)=\frac{1}{2} (\langle \theta, x_i \rangle-y_i)^2$,  $\psi(\theta)=\|\theta\|_{1,2}$.
	\item Logistic Regression with $l_1$ regularization: $f_i(\theta)=\log (1+\exp(-y_i \langle x_i,\theta \rangle)$ and $\psi(\theta)=\|\theta\|_1$.
	\item Corrected Lasso \cite{loh2011high}: $G(\theta)=\sum_{i=1}^{n}\frac{1}{2n} (\langle \theta, x_i \rangle-y_i)^2-\frac{1}{2}\theta^T\Sigma \theta+\lambda \|\theta\|_1,$ where $\Sigma$ is some positive definite matrix.
	\item Regression with SCAD  regularizer \cite{fan2001variable}:
	$G(\theta)=\sum_{i=1}^{n}\frac{1}{2n} (\langle \theta, x_i \rangle-y_i)^2+SCAD(\theta)$.
\end{itemize}
In the first three examples, the 
objective functions $G(\theta)$ are {\em not strongly convex} when $p> n$. Example 4 is {\em non-convex} when $p>n$, and the last example is  {\em non-convex} due to the SCAD regularizer.

Projected gradient method, proximal gradient method,  dual averaging method \cite{nesterov2009primal} and several variants of them have been proposed to solve Problem~\eqref{obj1}.  However, at each step, these  batched gradient descent methods  need to evaluate all $n$ derivatives, corresponding to each $f_i(\cdot)$, which can be expensive for large $n$.  Accordingly,   stochastic gradient descent (SGD) methods have gained attention, because of its significantly lighter computation load for each iteration: at iteration $t$, only one data point~--~sampled from $\{1,...,n\}$ and indexed by $i_t$~--~is used to update the parameter $\theta$ according to
$$\theta_{t+1}:=\theta_t-\beta\nabla f_{i_t} (\theta_{t}),$$
or its proximal counterpart (w.r.t.\ the regularization function $\psi(\cdot)$)
$$\theta_{t+1}:=Prox_{\beta\lambda\psi}(\theta_t-\beta\nabla f_{i_t} (\theta_{t})).$$

Although the computational cost in each step is low, SGD often suffers from slow convergence, i.e., sub-linear convergence rate even with strong assumptions (strong convexity and smoothness).  Recently, one state-of-art technique to improve the convergence of SGD called the \textit{variance-reduction-gradient} has been proposed \cite{johnson2013accelerating,xiao2014proximal}. As the name suggests,  it devises a better unbiased estimator of stochastic gradient $v_t$ such that the variance $ E \|v_t-\nabla F(\theta_t)\|_2 $ diminishes when $t\rightarrow \infty$. In particular, in SVRG and its variants, the algorithm keeps a snapshot $\tilde{\theta}$ after every $m$ SGD iterations and calculate the full gradient $F(\tilde{\theta}) $  just for this snapshot, then the variance reduced gradient is computed by
$$  v_t=\nabla f_{i_t}(\theta_t)-\nabla f_{i_t}(\tilde{\theta})+\nabla F(\tilde{\theta}). $$
It is shown in \citet{johnson2013accelerating} that when $G(\theta)$ is strongly convex and $F(\theta)$ is smooth, SVRG and its variants  enjoy  linear convergence, i.e.,  $O(\log(1/\epsilon))$ steps suffices to obtain  an $\epsilon$-optimal solution.  Equivalently, the gradient complexity (i.e., the number of gradient evaluation needed) is $O( (n+\frac{L}{\mu})\log (1/\epsilon))$, where $L$ is the smoothness of $F(\theta)$ and $\mu$ is the strong convexity of $F(\theta)$.

What if $G$ is not strongly convex or even not convex? As we discussed above, many popular machine learning models belongs to this. When $G$ is not strongly convex,   existing theory only  guarantees that SVRG will converge {\em sub-linearly}.
A folklore method is to add a dummy strongly convex term $ \frac{\sigma}{2}\|\theta\|_2^2$  to the objective function and then apply the algorithm \cite{shalev2014accelerated,allen2015improved}. This  undermines the performance of the model, particularly its ability to recover {\em sparse} solutions. One may attempt to reduce $\sigma$ to zero in the hope of reproducing the optimal solution of the original formulation, but the convergence will still be sub-linear via this approach. As for the non-convex case , to the best of our knowledge, no work provides linear convergence guarantees for the above mentioned examples using SVRG. 
 \subsubsection*{Contribution of the paper} 
 We show that for a class of problems, SVRG achieves linear convergence  \textit{without} strong convexity or  convexity assumption. In particular we prove the gradient complexity of SVRG is $O ( (n+\frac{L}{\bar{\sigma}}) \log\frac{1}{\epsilon})$  when $\epsilon$ is larger than the statistical tolerance, where $\bar{\sigma}$ is the modified restricted stronly convex parameter defined in Theorem \ref{main_theorem} and Theorem \ref{main_theorem_non_convex}. Notice If we replace modified restricted stronly convex parameter by the strong convexity, above result becomes standard result of SVRG.   Indeed, in the proof, our effort is to replace strong convexity by Restricted strong convexity.  Our analysis is general and covers many formulations of interest, including all examples mentioned above.  Notice  that RSC is known to hold with high probability for a broad class of statistical models including sparse linear model,  group sparsity model  and low rank matrix model. Further more, the batched gradient method with RSC assumption by \cite{loh2013regularized} has the gradient complexity $ O\big( nL/\bar{\sigma}\log\frac{1}{\epsilon}   \big) $ ($\epsilon>statistical~tolerance $). Thus our result is  better than the batched one, especially when the problem is ill-conditioned ($L/\bar{\sigma} \gg 1$).

  We also remark that while we present analsyis  for the vanilla SVRG \cite{xiao2014proximal} , the analysis for   variants of SVRG \cite{nitanda2014stochastic,harikandeh2015stopwasting,nitanda2014stochastic} is similar and indeed such extension is straightforward.
 \subsubsection*{Related work}
 There is a line of work establishing fast convergence rate without strong convexity assumptions for {\em batch gradient methods}:
 \cite{xiao2013proximal} proposed a homotopy method to solve Lasso with  RIP condition.      \citet{agarwal2010fast} analyzed the convergence rate of batched composite gradient method on several models, such as Lasso, logistic regression with $\ell_1$ regularization and noisy matrix decomposition,  showed that the convergence is linear under mild condition (sparse or low rank). \citet{loh2011high,loh2013regularized} extended above work to the non-convex case. Conceptually, our work can be thought as the stochastic counterpart of it, albeit with more involved analysis due to the stochastic nature of SVRG.

 In general, when the function is not strongly convex,  stochastic variance-reduction type method has shown to converge with a sub-linear rate: SVRG\cite{johnson2013accelerating}, SAG \cite{mairal2013optimization}, MISO \cite{mairal2015incremental}, and SAGA \cite{defazio2014saga} are shown to converge with gradient complexity for non-strongly convex functions with a  {\em sub-linear rate} of $ O(\frac{n+L}{\epsilon})$.  \citet{allen2015improved} propose $SVRG^{++}$ which solves the non-strongly convex problem with gradient complexity $O(n\log \frac{1}{\epsilon}+\frac{L}{\epsilon}) $.  \citet{shalev2016sdca} analyzed SDCA~--~another stochastic gradient type algorithm with variance reduction~--~and established similar results. He allowed each $f_i(\theta)$ to be non-convex  but  $F(\theta)$ needs to be strongly convex for linear convergence to hold.
 Neither work establishes  linear convergence of the above mentioned examples, especially when $G(\theta)$ is non-convex.
 
 Recently, several papers revisit an old idea called Polyak-Lojasiewica inequality and use it to replace the strongly convex assumption \cite{karimi2016linear,reddi2016fast,gong2014linear}, to establish fast rates. They established  linear convergence of SVRG without strong convexity for Lasso and Logistic regression. The contributions of our work  differs  from theirs in two aspects. First, the linear convergence rate they established does not depend on sparsity $r$, which does not agree with the empirical observation. We report simulation results  on solving Lasso using SVRG  in the Appendix, which shows  a phase transition on rate: when $\theta^*$ is dense enough, the rate becomes sub-linear. A careful analysis of their result shows that that  the convergence result using P-L inequality depends on a so-called {\em Hoffman parameter}. Unfortunately it is not clear how to characterize or bound the Hoffman parameter, although from the simulation results it is conceivable that such parameter must correlated with the sparsity level. In contrast, our results   state that the algorithm converges faster with sparser $\theta^*$ and a phase transition happens when   $\theta^*$ is dense enough, which clearly fits better with the empirical observation. Second,  their results require the epigraph of $\psi(\theta)$ to be a polyhedral set, thus are not applicable to popular models such as group Lasso.

 \citet{li2016stochastic} consider the sparse linear problem with $\ell_0$ ``norm'' constraint and solve it using {\em stochastic variance reduced gradient hard thresholding} algorithm (SVR-GHT), where the proof also uses the idea of RSC.  In contrast,
 we establish a unified framework that  provides more general result which covers not only sparse linear regression, but also group sparsity, corrupted data model (corrected Lasso), SCAD we mentioned above but not limited to these.
 \section{Problem Setup and Notations}
 
 In this paper, we consider two setups, namely the convex but not strongly convex case, and the non-convex case. For the first one we consider the following form:
 \begin{equation}\label{obj2}
 \begin{split}
 \hat{\theta} &=\arg\min_{\psi(\theta)\leq \rho} G(\theta)\quad   \mbox{where} \\
 &G(\theta) \triangleq       F(\theta)+    \lambda\psi(\theta)           =  \frac{1}{n}\sum_{i=1}^{n} f_i(\theta)+\lambda\psi(\theta),
 \end{split}
 \end{equation}
 where $\rho>0$ is a pre-defined radius, and the regularization function $\psi(\cdot)$ is a norm. The functions $f_i(\cdot)$, and consequently  $G(\cdot)$, are convex. Yet, neither $f_i(\cdot)$ nor  $G(\cdot)$ are necessarily strongly convex.  We remark that   the side-constraint $\psi(\theta)\leq\rho$ in (\ref{obj2}) is included without loss of generality: it is easy to see that for the unconstrained case, the optimal solution $\hat{\theta}'$ satisfies $\psi(\hat{\theta}')\leq \rho'\triangleq ( \sum_{i=1}^n f_i(0) - nK)/n\lambda$, where $K\in \mathbb{R}$ lower bounds $f_i(\theta)$ for all $i$.

 For the second case we consider the following non-convex estimator.
 \begin{equation}\label{obj3}
 \begin{split}
 \hat{\theta} &=\arg\min_{g_\lambda(\theta)\leq \rho} G(\theta) \quad\mbox{where} \\
 &  G(\theta) \triangleq F(\theta)+ g_{\lambda,\mu}(\theta) =   \frac{1}{n}\sum_{i=1}^{n} f_i(\theta)+g_{\lambda,\mu} (\theta),
 \end{split}
 \end{equation}
 where $f_i(\cdot)$ is convex, $g_{\lambda,\mu}（(\cdot)$ is a non-convex regularizer depending on a tuning parameter $\lambda$ and a parameter $\mu$ explained in section \ref{section:nonconvex_regularizer}. This M-estimator also includes a side constraint depending on $g_\lambda(\theta) $ , which needed to be a convex function and have a lower bound $g_\lambda(\theta)\geq \|\theta\|_1 $.  This $g_\lambda (\theta)$ is close related to $g_{\lambda,\mu} (\theta)$, for more details we defer to section \ref{section:nonconvex_regularizer}. Similarly as the first case, the side constraint is added without loss of generality.
 
 \subsection{RSC}
 A central concept we use in this paper is {\em Restricted strong convexity} (RSC), initially proposed in \citet{negahban2009unified} and explored in  \citet{agarwal2010fast,loh2013regularized}.
 A  function $F(\theta)$ satisfies restricted strong convexity with respect to $ \mathcal{\psi} $ and with parameter $(\sigma,\tau_{\sigma})$ over the set $\Omega$ if for all $\theta_2,\theta_1 \in \Omega,$
 \begin{equation}\label{RSC}
 \begin{split}
 & F(\theta_2)-F(\theta_1)-\langle \nabla F(\theta_1),\theta_2-\theta_1 \rangle\\
 \geq &\frac{\sigma}{2} \|\theta_2-\theta_1\|_2^2-\tau_{\sigma}  \mathcal{\psi}^2 (\theta_2-\theta_1),
 \end{split}
 \end{equation}
 where the second term on the right hand side  is called the tolerance, which essentially measures how far $F(\cdot)$ deviates from being strongly convex. Clearly, when $ \tau_{\sigma}=0$, the RSC condition reduces to strong convexity. However, strong convexity can be restrictive in some cases. For example, it is well known that strong convexity does not hold for Lasso or logistic regression in the {\em high-dimensional regime} where the dimension $p$  is larger than the number of data $n$. In contrast, in many of such problems, RSC holds with relatively small tolerance.  Recall $F(\cdot)$ is convex,     which implies $ \frac{\mathcal{\psi}^2 (\theta_2-\theta_1) }{\|\theta_2-\theta_1\|_2^2}\leq \frac{\sigma}{2\tau_{\sigma} }.$ We remark that in our analysis, we only require RSC to  hold for $F(\theta)=\frac{1}{n}\sum_{i=1}^{n} f_i(\theta)$, rather than on individual loss functions $f_i(\theta)$. This agrees with the case in practices, where  RSC does not hold on $f_i(\theta)$ in general.
 
 \subsection{Assumptions on $\psi(\theta)$}
 RSC is a useful property because for many formulations, the tolerance is small {\em along some directions}. To this end, we need the concept of {\em decomposable regularizers}. 
 Given a pair of subspaces $ M\subseteq \bar{M} $ in $ \mathbb{R}^p$, the orthogonal complement of $\bar{M}$ is 
 $$ \bar{M}^{\perp}=\{ v\in \mathbb{R}^p | \langle u,v \rangle=0 \text{~for all~} u\in \bar{M} \}.$$ $M$ is known as the model subspace, where $\bar{M}^{\perp}$ is called the {\em perturbation subspace}, representing the deviation from the model subspace. A  regularizer $\psi $ is {\em decomposable} w.r.t.\ $(M,\bar{M}^{\perp})$ if 
 $$ \psi(\theta+\beta)=\psi(\theta)+\psi(\beta) $$ 
 for all $ \theta\in M $ and $\beta\in \bar{M}^{\perp}.$
 Given the regularizer $\psi(\cdot)$, the subspace compatibility $H(\bar{M})$ is given by
 $$ H(\bar{M})=\sup_{\theta\in \bar{M} \backslash \{0\}} \frac{\psi(\theta)}{\|\theta\|_2}.$$
 For more discussions and intuitions on decomposable regularizer, we refer reader to \citet{negahban2009unified}. 
 Some examples of decomposable regularizers are in order.
 
 \subsubsection*{$\ell_1$ norm regularization }
 $\ell_1$ norm are widely used as a regularizer  to encourage sparse solutions. As such,  the subspace $M$ is chosen according to  the $ r$-sparse vector in $p$ dimension space. Specifically,  given a subset $S\subset \{1,2,...,p\}$ with cardinality $r$, we let
 $$ M(S):=\{ \theta\in \mathbb{R}^p | \theta_j=0 \text{~for all ~} j\notin S   \}. $$ 
 In this case,  we let $ \bar{M}(S)=M(S)$ and it is easy to see that 
 $$\|\theta+\beta\|_1=\|\theta\|_1+\|\beta\|_1,\quad \forall \theta\in M,\,\beta\in \bar{M}^{\bot},$$ 
 which implies that $\|\cdot\|_1$ is decomposable with $M(S)$ and $\bar{M}(S)$.
 \subsubsection*{Group sparsity regularization }
 Group sparsity extends the concept of sparsity, and has found a wide variety of   applications \cite{yuan2006model}. For simplicity, we consider the case of non-overlapping groups.
 Suppose all features are grouped into disjoint blocks, say, $\mathcal{G}=\{G_1,G_2,...,G_{N_{\mathcal{G}}}\}$. The $(1,\bar{\gamma})$ grouped norm is defined as
 $$ \|\theta\|_{\mathcal{G},\bar{\gamma}}=\sum_{i=1}^{N_{\mathcal{G}}}\|\theta_{G_i}\|_{\gamma_i}, $$
 where $\bar{\gamma}=(\gamma_1,\gamma_2,...,\gamma_{N_{\mathcal{G}}})$. Notice that  group Lasso is thus a special case where $\bar{\gamma}=(2,2,...,2)$. Since blocks are disjoint, we can define the subspace in the following way. For a subset $S_{\mathcal{G}}\subset \{1,..,N_{\mathcal{G}} \} $ with cardinality $s_{\mathcal{G}}=|S_{\mathcal{G}}|$, we define the subspace
 $$ M(S_{\mathcal{G}})=\{\theta|\theta_{G_i}=0 \text{~for all~} i\notin S_{\mathcal{G}}\}. $$
 Similar to Lasso we have $\bar{M}(S_{\mathcal{G}})=M(S_{\mathcal{G}})$.
 The orthogonal complement is 
 $$ M^\perp (S_{\mathcal{G}})=\{\theta |\theta_{G_i}=0 \text{~for all~} i\in S_{\mathcal{G}} \}. $$
 It is not hard to see that
 $$\|\alpha+\beta\|_{\mathcal{G},\gamma}=\|\alpha\|_{\mathcal{G},\gamma}+\|\beta\|_{\mathcal{G},\gamma}, $$
 for any $\alpha
 \in M(S_{\mathcal{G}})$ and $\beta\in \bar{M}^{\perp}(S_\mathcal{G})$.

 \subsection{Assumptions on Nonconvex regularizer $g_{\lambda,\mu} (\theta)$}\label{section:nonconvex_regularizer}
 In the non-convex case, we consider regularizers that  are separable across coordinates, i.e., $g_{\lambda,\mu}(\theta)=\sum_{j=1}^{p} \bar{g}_{\lambda,\mu} (\theta_j)$ . 
 Besides the separability, we have additional assumptions on $g_{\lambda,\mu}(\cdot)$. For the univariate function $\bar{g}_{\lambda,\mu} (t)$, we assume
 \begin{enumerate}
 	\item  $\bar{g}_{\lambda,\mu}(\cdot)$ satisfies $\bar{g}_{\lambda,\mu}(0)=0$ and is symmetric around zero (i.e.,~$\bar{g}_{\lambda,\mu}(t)=\bar{g}_{\lambda,\mu}(-t)$).
 	\item On the nonnegative real line, $\bar{g}_{\lambda,\mu} $ is nondecreasing.
 	\item For $t>0$,  $\frac{\bar{g}_{\lambda,\mu} (t)}{t}$ is nonincreasing in t.
 	\item $\bar{g}_{\lambda,\mu}(t)$ is differentiable at all $t\neq0$ and subdifferentiable at $t=0$, with $\lim_{t\rightarrow0^+} g_{\lambda,\mu}'(t)=\lambda L_g$ for a constant $L_g$.
 	\item  $ \bar{g}_{\lambda}(t):= (\bar{g}_{\lambda,\mu}(t)+\frac{\mu}{2}t^2)/\lambda$ is convex.	
 \end{enumerate}
 
 We provide two examples satisfying above assumptions.
 
 $(1)\quad \mathbf{SCAD_{\lambda,\zeta} (t) \triangleq }$ 
 $$
 \begin{cases}
 \lambda |t|,   & \quad \text{for}~ |t|\leq \lambda,\\
 -(t^2-2\zeta\lambda|t|+\lambda^2)/(2(\zeta-1)), & \quad \text{for} ~\lambda<|t|\leq \zeta\lambda,\\
 (\zeta+1)\lambda^2/2, &\text{for} ~|t|>\zeta\lambda,
 \end{cases}
 $$
 where $\zeta>2$ is a fixed parameter. It satisfies the assumption with $L_g=1$ and $\mu=\frac{1}{\zeta-1}$ \cite{loh2013regularized}.\\
 $$(2)\quad \mathbf{ MCP_{\lambda,b}(t)}\triangleq \mbox{sign}(t)\lambda \int_{0}^{|t|} (1-\frac{z}{\lambda b})_{+} dz ,$$
 where $b>0$ is a fixed parameter. MCP satisfies the assumption with $L_g=1$ and $\mu=\frac{1}{b}$~\cite{loh2013regularized}.
 
 \section{Main Result}
 
 In this section, we present our main theorems, which asserts linear convergence of SVRG under RSC, for both the convex and non-convex setups. We then instantiate it on the sparsity model, group sparsity model, linear regression with corrupted covariate and linear regression with SCAD regularizer. All proofs are deferred in Appendix.
 
 We analyze the (vanilla) SVRG (See Algorithm~\ref{alg:convex}) proposed in \citet{xiao2014proximal} 
 to solve Problem (\ref{obj2}). We remark that our proof can  easily be adapted to other accelerated versions of SVRG, e.g., non-uniform sampling. The algorithm contains an inner loop and an outer loop. We use the superscript $s$ to denote the step in the outer iteration and subscript $k$ to denote the step in the inner iteration throughout the paper.  For the non-convex problem~ (\ref{obj3}), we adapt SVRG to Algorithm \ref{alg:non-convex}. The idea of Algorithm \ref{alg:non-convex} is to solve 
 
 $$ \min_{g_\lambda (\theta)\leq \rho} \big(F(\theta)-\frac{\mu}{2} \|\theta\|_2^2 \big)+\lambda g_\lambda (\theta).  $$
 
 Since $g_\lambda (\theta)$ is convex, the proximal step in the algorithm is well defined. Also notice $\theta^s$ is randomly picked from $\theta_1 $ to $\theta_m$ rather than average.
 
 \begin{algorithm}[t]
 	\caption{Convex Proximal SVRG }
 	\label{alg:convex}
 	\begin{algorithmic}
 		\STATE {\bfseries Input:}  update frequency $m$, stepsize $\beta$, initialization $\theta_0$ 
 		\FOR{$s=1,2,...$}
 		\STATE{ $\tilde{\theta}={\theta}^{s-1}$, $\tilde{v}=\nabla F(\tilde{\theta})$, $\theta_0=\tilde{\theta}$}
 		\FOR {$k=1$ \bfseries to $m$}
 		\STATE {Pick $i_k$ uniformly random from $\{1,...,n\}$}
 		\STATE  $v_{k-1}=\nabla f_{i_k}(\theta_{k-1})-\nabla f_{i_k}(\tilde{\theta} )+\tilde{v}$
 		\STATE $\theta_k= \arg\min_{\psi(\theta)\leq \rho}\frac{1}{2}\|\theta -(\theta_{k-1}-\beta v_{k-1})\|_2^2+\beta \lambda\psi(\theta)$ 
 		\ENDFOR    
 		\STATE ${\theta}^s=\frac{1}{m}\sum_{k=1}^{m}\theta_k$
 		\ENDFOR
 	\end{algorithmic}
 \end{algorithm}
 
 \begin{algorithm}[t]
 	\caption{Non-Convex Proximal SVRG }
 	\label{alg:non-convex}
 	\begin{algorithmic}
 		\STATE {\bfseries Input:}  update frequency $m$, stepsize $\beta$, initialization $\theta_0$ 
 		\FOR{$s=1,2,...$}
 		\STATE{ $\tilde{\theta}={\theta}^{s-1}$, $\tilde{v}=\nabla F(\tilde{\theta})-\mu \tilde{\theta}$, $\theta_0=\tilde{\theta}$}
 		\FOR {$k=1$ \bfseries to $m$}
 		\STATE {Pick $i_k$ uniformly random from $\{1,...,n\}$}
 		\STATE  $v_{k-1}=\nabla f_{i_k}(\theta_{k-1})-\mu \theta_{k-1}-\nabla f_{i_k}(\tilde{\theta} )+\mu \tilde{\theta} +\tilde{v}$
 		\STATE $\theta_k= \arg\min_{g_{\lambda}(\theta)\leq \rho}\frac{1}{2}\|\theta -(\theta_{k-1}-\beta v_{k-1})\|_2^2+\beta \lambda g_{\lambda} (\theta)$ 
 		\ENDFOR    
 		\STATE ${\theta}^s=\theta_t$ for random chosen $t\in \{1,2,...,m \}$. 
 		\ENDFOR
 	\end{algorithmic}
 \end{algorithm}
 
 \subsection{Results for convex $G(\theta)$}
 To avoid notation clutter, we  define the following terms that appear  frequently in our theorem and corollaries. 
 
 \begin{definition}[List of notations]
 	\begin{itemize}
 		\item[]
 		\item Dual norm of $\psi(\theta)$: $\psi^*(\theta)$.
 		\item Unknown true parameter: $\theta^*$.
 		\item Optimal solution of Problem \eqref{obj2}: $ \hat{\theta}$.
 		\item Modified restricted strongly convex parameter:
 		$$\bar{\sigma}=\sigma-64\tau_{\sigma} H^2(\bar{M}).$$
 		\item Contraction factor: $ \alpha=(\frac{1}{Q(\beta,\bar{\sigma},L,m)}+\frac{4L\beta (m+1)}{(1-4L\beta)m}),$
 		where $Q(\beta,\bar{\sigma},L,M)=\bar{\sigma} \beta(1-4L\beta)m.$
 		\item
 		Statistical tolerance: 
 		$$ e^2=\frac{8\tau_\sigma}{Q(\beta,\bar{\sigma},L,m)}(8H(\bar{M})\|\hat{\theta}-\theta^*\|_2+8\psi(\theta^*_{M^\perp}))^2. $$
 	\end{itemize}
 \end{definition}
 
 The main theorem bounds the optimality gap  $  G(\theta^s)-G(\hat{\theta})$.
 \begin{theorem}\label{main_theorem}
 	In Problem (\ref{obj2}), suppose each $f_i(\theta)$ is $L$ smooth, $\theta^*$ is feasible, i.e., $\psi(\theta^*)\leq \rho$ , $F(\theta)$ satisfies RSC with parameter $(\sigma, \tau_\sigma)$,
 	the regularizer $\psi$ is decomposable w.r.t.\
 	$(M,\bar{M}^\perp)$, such that  $ \bar{\sigma}>0, \alpha\in [0,1) $ and suppose $n>c\rho^2\log p$ for some constant c. Consider any  the regularization parameter $\lambda$ satisfies $\lambda\geq 2\psi^*(\nabla F(\theta^*))$, 
 	then for any tolerance $\kappa^2\geq \frac{e^2}{1-\alpha} $, if 
 	$ s>3\log  \big(\frac{G(\theta^0)-G(\hat{\theta})}{\kappa^2}\big)\big/\log(1/\alpha) $
 	then
 	$ G(\theta^s)-G(\hat{\theta})\leq \kappa^2, $
 	with probability at least $1-\frac{c_1}{n}$, where $c_1$ is universal positive constant.
 \end{theorem}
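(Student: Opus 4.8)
The plan is to run the standard prox-SVRG potential analysis of \citet{xiao2014proximal}, but to replace every use of strong convexity by restricted strong convexity of $F$, absorbing part of the RSC tolerance into the curvature term (this is where $\bar{\sigma}=\sigma-64\tau_\sigma H^2(\bar{M})$ comes from) and carrying the rest along as the additive statistical term $e^2$. Throughout I would condition on the event that $F$ satisfies RSC with parameters $(\sigma,\tau_\sigma)$ and that the chosen $\lambda$ obeys $\lambda\ge 2\psi^*(\nabla F(\theta^*))$; for the design ensembles of interest both hold with probability at least $1-c_1/n$ once $n>c\rho^2\log p$, and this is the only source of the probabilistic qualifier in the statement. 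On this event the usual decomposability argument gives the statistical cone inequality $\psi\big((\hat\theta-\theta^*)_{\bar{M}^\perp}\big)\le 3\psi\big((\hat\theta-\theta^*)_{\bar{M}}\big)+4\psi(\theta^*_{M^\perp})$, hence $\psi(\hat\theta-\theta^*)\le 4H(\bar{M})\|\hat\theta-\theta^*\|_2+4\psi(\theta^*_{M^\perp})$.

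Step one is the inner-loop analysis for a fixed outer round $s$ with snapshot $\tilde\theta=\theta^{s-1}$. From the optimality conditions of the proximal step defining $\theta_k$, $L$-smoothness of the $f_i$, convexity of $F$ and $\psi$, and Young's inequality, one gets the deterministic bound, valid at the feasible point $\hat\theta$,
\begin{equation*}
G(\theta_k)-G(\hat\theta)\le \frac{1}{2\beta}\Big(\|\theta_{k-1}-\hat\theta\|_2^2-\|\theta_k-\hat\theta\|_2^2\Big)+\beta\,\big\|v_{k-1}-\nabla F(\theta_{k-1})\big\|_2^2 .
\end{equation*}
Taking the expectation over $i_k$, inserting the prox-SVRG variance bound $\mathbb{E}\|v_{k-1}-\nabla F(\theta_{k-1})\|_2^2\le 4L\big(G(\theta_{k-1})-G(\hat\theta)\big)+4L\big(G(\tilde\theta)-G(\hat\theta)\big)$, summing over $k=1,\dots,m$ with $\theta_0=\tilde\theta$, telescoping, moving the $(1-4L\beta)$ factor to the left-hand side, and using Jensen on $\theta^s=\frac1m\sum_{k}\theta_k$ gives
\begin{equation*}
\mathbb{E}\big[G(\theta^s)-G(\hat\theta)\big]\le \frac{1}{2\beta m(1-4L\beta)}\,\|\tilde\theta-\hat\theta\|_2^2+\frac{4L\beta(m+1)}{(1-4L\beta)m}\,\big(G(\tilde\theta)-G(\hat\theta)\big).
\end{equation*}

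The heart of the argument is to convert $\|\tilde\theta-\hat\theta\|_2^2$ back into the objective gap. Applying RSC at $\hat\theta$ together with the first-order optimality of $\hat\theta$ over $\{\psi\le\rho\}$ yields $G(\tilde\theta)-G(\hat\theta)\ge \frac{\sigma}{2}\|\tilde\theta-\hat\theta\|_2^2-\tau_\sigma\psi^2(\tilde\theta-\hat\theta)$, so I must localize the iterate by bounding $\psi(\tilde\theta-\hat\theta)$. Splitting $\tilde\theta-\hat\theta=(\tilde\theta-\theta^*)-(\hat\theta-\theta^*)$, the second piece is handled by the cone bound above; for the first I would use that the hard constraint $\psi(\theta)\le\rho$ is enforced at every proximal step, so $\tilde\theta$ is feasible, and combine feasibility with decomposability of $\psi$ w.r.t.\ $(M,\bar{M}^\perp)$ --- feeding back the current gap $\big(G(\tilde\theta)-G(\hat\theta)\big)/\lambda$ in the constrained $M$-estimator fashion when $\rho>\psi(\theta^*)$ --- to get $\psi(\tilde\theta-\theta^*)\le 4H(\bar{M})\|\tilde\theta-\theta^*\|_2+4\psi(\theta^*_{M^\perp})$. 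Combining with the triangle inequality $\|\tilde\theta-\theta^*\|_2\le\|\tilde\theta-\hat\theta\|_2+\|\hat\theta-\theta^*\|_2$ and $(a+b)^2\le 2a^2+2b^2$ gives $\tau_\sigma\psi^2(\tilde\theta-\hat\theta)\le 32\tau_\sigma H^2(\bar{M})\|\tilde\theta-\hat\theta\|_2^2+2\tau_\sigma\big(8H(\bar{M})\|\hat\theta-\theta^*\|_2+8\psi(\theta^*_{M^\perp})\big)^2$, and since $\bar{\sigma}=\sigma-64\tau_\sigma H^2(\bar{M})>0$,
\begin{equation*}
\|\tilde\theta-\hat\theta\|_2^2\le \frac{2}{\bar{\sigma}}\big(G(\tilde\theta)-G(\hat\theta)\big)+\frac{4\tau_\sigma}{\bar{\sigma}}\big(8H(\bar{M})\|\hat\theta-\theta^*\|_2+8\psi(\theta^*_{M^\perp})\big)^2 .
\end{equation*}

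Finally I would substitute this into the inner-loop bound; collecting the $L,\beta,m$ factors into $Q(\beta,\bar{\sigma},L,m)=\bar{\sigma}\beta(1-4L\beta)m$ and $\alpha$, and noting that the remaining statistical term is at most $e^2$, produces the one-round recursion $\mathbb{E}\big[G(\theta^s)-G(\hat\theta)\mid\theta^{s-1}\big]\le \alpha\big(G(\theta^{s-1})-G(\hat\theta)\big)+e^2$. Iterating over $s$ and using $\alpha\in[0,1)$ gives $\mathbb{E}\big[G(\theta^s)-G(\hat\theta)\big]\le \alpha^s\big(G(\theta^0)-G(\hat\theta)\big)+e^2/(1-\alpha)$, so as soon as $\kappa^2\ge e^2/(1-\alpha)$ the stated lower bound on $s$ drives the transient term below the prescribed level and the claim follows on the high-probability RSC event. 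I expect the localization step to be the main obstacle: that $\hat\theta-\theta^*$ obeys a cone inequality is classical, but showing that the running iterates $\theta^s$ stay in the region where $\tau_\sigma\psi^2(\cdot)\lesssim H^2(\bar{M})\|\cdot\|_2^2$ up to the irreducible statistical radius requires simultaneously exploiting feasibility, optimality of $\hat\theta$, and the decreasing objective-gap bound, and it is this step that fixes the precise constant in $\bar{\sigma}$ and the exact form of $e^2$ rather than merely their existence.
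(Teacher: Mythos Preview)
Your inner-loop analysis and the use of RSC at $\hat\theta$ are exactly what the paper does, and you have correctly identified that the whole proof hinges on the localization of $\tilde\theta$.  However, the one-round recursion you write down,
\[
\mathbb{E}\bigl[G(\theta^s)-G(\hat\theta)\mid\theta^{s-1}\bigr]\le \alpha\bigl(G(\theta^{s-1})-G(\hat\theta)\bigr)+e^2,
\]
with the \emph{fixed} additive term $e^2$, is not what the argument actually gives.  The cone inequality for a feasible iterate $\tilde\theta$ is not $\psi(\tilde\theta-\theta^*)\le 4H(\bar M)\|\tilde\theta-\theta^*\|_2+4\psi(\theta^*_{M^\perp})$; one only gets (cf.\ the paper's Lemma~\ref{lemma.cone_optimization})
\[
\psi(\tilde\theta-\theta^*)\le 4H(\bar M)\|\tilde\theta-\theta^*\|_2+4\psi(\theta^*_{M^\perp})+2\min\Bigl\{\tfrac{G(\tilde\theta)-G(\hat\theta)}{\lambda},\rho\Bigr\},
\]
so the additive term in the recursion is $\frac{1}{Q}\,\epsilon^2(\Delta^*,M,\bar M)=\frac{2\tau_\sigma}{Q}(\delta_{\mathrm{stat}}+\delta)^2$ with $\delta=2\min\{\epsilon'/\lambda,\rho\}$ depending on a \emph{high-probability} bound $\epsilon'$ on the previous gap.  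Simply iterating and summing a geometric series therefore does not work: one needs to close the loop between ``how small is the gap'' and ``how tight is the cone bound.''  The paper does this by an epoch argument: split the outer iterations into intervals $[S_0,S_1),[S_1,S_2),\dots$, start with the trivial $\delta_0=2\rho$, run enough rounds to halve the gap in expectation, apply Markov to turn this into a high-probability bound $\epsilon_1'=k_1\epsilon_1$, feed $\delta_1=2\epsilon_1'/\lambda$ back into the cone inequality for the next interval, and repeat with geometrically decreasing $\epsilon_i$ and increasing $k_i$.  This bootstrapping is the missing idea in your plan.

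A second, related gap: you attribute the probability $1-c_1/n$ entirely to the RSC event, but $G(\theta^s)$ is random through the SVRG sampling, and your argument only yields a bound in expectation conditional on RSC.  In the paper the $1-c_1/n$ comes from the Markov inequalities applied at each epoch boundary (union-bounded over epochs, with $k_i=2^{i-1}k_1$ and $k_1\asymp \sqrt{\lambda^2 Q/(\tau_\sigma\epsilon_1)}$, which is of order $n$ once $n>c\rho^2\log p$), not from the RSC event, which holds with probability $1-\exp(-cn)$.  Without this Markov-per-epoch mechanism you do not obtain the stated high-probability conclusion.
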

 To put Theorem~\ref{main_theorem} in context, some remarks are in order.
 
 \begin{enumerate}
 	\item If we compare with result in standard SVRG (with $u$ strong convexity) \cite{xiao2014proximal}, the difference is that we use modified restricted strongly convex $ \bar{\sigma}$ rather than strongly convex parameter $u$. Indeed the high level idea of the proof is to replace strong convexity by RSC. Set $m\approx C\frac{L}{\bar{\sigma}}$ where $C$ is some universal positive constant, $\beta=\frac{1}{16L} $ as that in\cite{xiao2014proximal} such that $\alpha\in (0,1) $, we have the gradient complexity $O\big( (n+\frac{L}{\bar{\sigma}}   \big) \log \frac{1}{\epsilon})$ when $\epsilon>e^2/(1-\alpha)$  (2m gradients in inner loop and and n gradients for outer loop ). 
 	\item In many statistical models (see corollaries for concrete examples),  we can  choose suitable subspace $M$, step size $\beta$ and $m$ to obtain $\bar{\sigma}$ and $\alpha$ satisfying $\bar{\sigma}>0$, $\alpha<1$. For instance in Lasso, since $\tau_\sigma H^2(\bar{M})\approx \frac{r\log p}{n} $ and $\sigma=1/2$ (suppose the feature vector $x_i$ is sampled from $N(0,I)$ ), when $\theta^*$ is sparse (i.e., $r$ is small) we can set $\bar{\sigma}>0$, e.g., $1/4$, if $\frac{64 r \log p}{n}\leq \frac{1}{4}.$ 
 	\item Smaller $r$ leads to larger $\bar{\sigma}$ , thus smaller $\alpha$ and $\frac{L}{\bar{\sigma}}$, which leads to faster convergence.
 	\item In terms of the tolerance, notice that in cases like sparse regression we can choose $M$ such that $ \theta^*\in M$, and hence the tolerance equals to $\frac{c\tau_\sigma}{Q(\beta,\bar{\sigma},L,m)}H^2(\bar{M})\|\hat{\theta}-\theta^*\|_2^2$. Under above setting in 1 and 2, and combined with the fact that $\tau_\sigma H^2(\bar{M})\approx \frac{r\log p}{n} $ (in Lasso),  we have $e^2 =o(\|\hat{\theta}-\theta^*\|_2^2)$, i.e., the tolerance is dominated by the statistical error of the model.
 \end{enumerate}
 Therefore, Theorem~\ref{main_theorem} indeed states that
 the optimality gap decreases geometrically until it reaches the statistical tolerance. Moreover, this statistical tolerance is dominated by $\|\hat{\theta}-\theta^*\|_2^2$, and thus can be ignored from a statistic perspective when solving formulations such as sparse regression via Lasso. 
  It is instructive to instantiate the above general results to several concrete statistical models, by choosing appropriate  subspace pair $(M,\bar{M} )$ and  check the RSC conditions, which we detail in the following subsections. 
 
 \subsubsection{Sparse regression}
 The first model we consider is Lasso, where $f_i(\theta)=\frac{1}{2} (\langle \theta, x_i \rangle-y_i)^2$ and $\psi(\theta)=\|\theta\|_1$. More concretely, we consider a model where each data point $x_i$ is i.i.d.\  sampled from a zero-mean normal distribution, i.e., $x_i\sim N(0,\Sigma)$. We denote the data matrix by $X\in \mathbb{R}^{n\times p}$   and   the smallest eigenvalue of $\Sigma$ by $\sigma_{\min} (\Sigma)$, and  let $\nu(\Sigma)\triangleq \max_{i=1,...,p}\Sigma_{ii}$.  The observation is generated by $y_i=x_i^T\theta^*+\xi_i$, where $\xi_i$ is the zero mean sub-Gaussian noise with variance $u^2$. We use $X_j\in \mathbb{R}^n$ to denote $j$-th column of $X$. Without loss of generality, we require $X$ is column-normalized, i.e., $\frac{\|X_j\|_2}{\sqrt{n}}\leq 1  \quad \text{for all} \quad j=1,2,...,p$. Here, the constant $1$ is chosen arbitrarily to simplify the exposition, as we can always rescale the data.

 \begin{corollary}[Lasso]\label{cor.lasso}
 	Suppose $\theta^*$ is supported on a subset of cardinality at most r,  $n>c\rho^2\log p$ and  we choose $\lambda$ such that  $\lambda \geq 6u\sqrt{\frac{\log p}{n}}$,
 	then $\bar{\sigma}=\frac{1}{2}\sigma_{\min}(\Sigma)- c_1\nu(\Sigma)\frac{ r\log p}{n} $, $ \alpha=(\frac{1}{\bar{\sigma} \beta(1-4L\beta)m}+\frac{4L\beta (m+1)}{(1-4L\beta)m}), c,c_1$ are some universal positive constants.    
 	For any $$\kappa^2\geq  \frac{c_2}{(1-\alpha)Q(\beta,\bar{\sigma},L,m)}\frac{r\log p}{n}\|\hat{\theta}-\theta^*\|_2^2 $$
 	we have
 	$$  G(\theta^s)-G(\hat{\theta})\leq \kappa^2,$$
 	with probability $1-\frac{c_3}{n}$, for
 	$ s> 3\frac{\log ( \frac{G(\theta_0)-G(\hat{\theta})}{\kappa^2} )}{\log(1/\alpha)}, $ where $c_2$ $c_3$ are universal positive  constants.
 	
 \end{corollary}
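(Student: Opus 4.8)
The plan is to obtain Corollary~\ref{cor.lasso} as a direct instantiation of Theorem~\ref{main_theorem}, so the real work is to (i) pick the subspace pair, (ii) certify that least squares with Gaussian design satisfies RSC, (iii) check the regularization condition $\lambda\ge 2\psi^*(\nabla F(\theta^*))$, and (iv) unwind the statistical tolerance $e^2$ for this choice. For the subspace, let $S\subseteq\{1,\dots,p\}$ with $|S|\le r$ be the support of $\theta^*$ and take $M=\bar M=M(S)$ as in the $\ell_1$ example; then $\psi=\|\cdot\|_1$ is decomposable w.r.t.\ $(M,\bar M^\perp)$, its dual norm is $\psi^*=\|\cdot\|_\infty$, the compatibility constant is $H(\bar M)=\sup_{v\in M(S)\setminus\{0\}}\|v\|_1/\|v\|_2=\sqrt r$, and crucially $\theta^*\in M$, so $\psi(\theta^*_{M^\perp})=0$. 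Each $f_i(\theta)=\tfrac12(\langle\theta,x_i\rangle-y_i)^2$ has Hessian $x_ix_i^\top$, hence is $L$-smooth with $L=\max_i\|x_i\|_2^2$, a quantity that concentrates for Gaussian rows and is bounded on a high-probability event.

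The key step is verifying RSC. Since $F(\theta)=\tfrac{1}{2n}\|X\theta-y\|_2^2$ is quadratic, the left-hand side of \eqref{RSC} equals $\tfrac{1}{2n}\|X(\theta_2-\theta_1)\|_2^2$, so RSC is exactly a one-sided restricted-eigenvalue bound on $\widehat\Sigma=X^\top X/n$. For $x_i$ i.i.d.\ $N(0,\Sigma)$ this is the standard Gaussian-design bound from the cited literature: with probability at least $1-c\exp(-c'n)$,
\[
\frac{\|Xv\|_2^2}{n}\;\ge\;\tfrac12\sigma_{\min}(\Sigma)\,\|v\|_2^2\;-\;c_1\,\nu(\Sigma)\,\frac{\log p}{n}\,\|v\|_1^2\qquad\text{for all }v\in\mathbb{R}^p,
\]
which gives RSC with $\sigma=\tfrac12\sigma_{\min}(\Sigma)$ and $\tau_\sigma=c_1\nu(\Sigma)\log p/n$. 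Substituting into the definition of the modified parameter yields $\bar\sigma=\sigma-64\tau_\sigma H^2(\bar M)=\tfrac12\sigma_{\min}(\Sigma)-64c_1\nu(\Sigma)\,r\log p/n$, the claimed form; the hypothesis $n>c\rho^2\log p$ together with small $r$ keeps $\bar\sigma>0$, and choosing $\beta=1/(16L)$ with $m\asymp L/\bar\sigma$ as in the remarks following Theorem~\ref{main_theorem} keeps $\alpha\in[0,1)$.

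Next I would check the choice of $\lambda$. Because $y=X\theta^*+\xi$, we have $\nabla F(\theta^*)=\tfrac1nX^\top(X\theta^*-y)=-\tfrac1nX^\top\xi$, so $\psi^*(\nabla F(\theta^*))=\|\tfrac1nX^\top\xi\|_\infty=\max_j\tfrac1n|X_j^\top\xi|$. Conditioned on $X$, each $\tfrac1nX_j^\top\xi$ is zero-mean sub-Gaussian with parameter at most $u^2\|X_j\|_2^2/n^2\le u^2/n$ by the column normalization $\|X_j\|_2\le\sqrt n$; a sub-Gaussian tail bound and a union bound over $j=1,\dots,p$ give $\max_j\tfrac1n|X_j^\top\xi|\le 3u\sqrt{\log p/n}$ with probability at least $1-c_3/n$. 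Hence the stated $\lambda\ge 6u\sqrt{\log p/n}$ guarantees $\lambda\ge 2\psi^*(\nabla F(\theta^*))$ on this event.

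Finally I would assemble the pieces. On the intersection of the RSC event, the noise event, and the event inside Theorem~\ref{main_theorem}, all hypotheses of that theorem hold, and a union bound keeps the failure probability $O(1/n)$. Since $\theta^*\in M$, the statistical tolerance collapses to $e^2=\frac{8\tau_\sigma}{Q(\beta,\bar\sigma,L,m)}\big(8H(\bar M)\|\hat\theta-\theta^*\|_2\big)^2=\frac{8\cdot 64\,\tau_\sigma r}{Q(\beta,\bar\sigma,L,m)}\|\hat\theta-\theta^*\|_2^2$, and substituting $\tau_\sigma r\asymp\nu(\Sigma)\,r\log p/n$ gives $e^2\asymp\frac{1}{Q(\beta,\bar\sigma,L,m)}\frac{r\log p}{n}\|\hat\theta-\theta^*\|_2^2$; dividing by $1-\alpha$ recovers the threshold on $\kappa^2$, and Theorem~\ref{main_theorem} then gives $G(\theta^s)-G(\hat\theta)\le\kappa^2$ for $s>3\log\!\big((G(\theta_0)-G(\hat\theta))/\kappa^2\big)/\log(1/\alpha)$. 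The main obstacle is step (ii): securing the uniform-in-$v$ lower bound on $\|Xv\|_2^2/n$ with the correct $\ell_1$-tolerance scaling, which is what produces the clean form of $\bar\sigma$. This is not proved from scratch here—it is the Gaussian-design RSC estimate from the referenced work—so what remains is bookkeeping: tracking universal constants, confirming the three good events can be intersected, and checking that the regime $n>c\rho^2\log p$ with $r$ small enough leaves room for both $\bar\sigma>0$ and $\alpha<1$.
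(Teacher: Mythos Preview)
Your proposal is correct and follows essentially the same approach as the paper's own proof: both choose $M=\bar M=M(S)$ with $H(\bar M)=\sqrt r$ and $\psi(\theta^*_{M^\perp})=0$, invoke the Gaussian-design restricted-eigenvalue lemma (from \citet{raskutti2010restricted}) to read off the RSC parameters, bound $\|\tfrac1nX^\top\xi\|_\infty$ via a sub-Gaussian tail plus union bound to justify $\lambda\ge 6u\sqrt{\log p/n}$, and then substitute into Theorem~\ref{main_theorem}. Your write-up is in fact slightly more careful than the paper's in tracking the union of the three high-probability events and in noting the per-sample smoothness constant.
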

 We offer some discussions to put this corollary in context. To achieve statistical consistency for Lasso, it is necessary to have  $\frac{r \log p}{n}=o(1)$ \cite{negahban2009unified}. Under such a condition, we have $\frac{r\log p}{n} \nu (\Sigma)c_1\leq\frac{1}{4}\sigma_{\min}(\Sigma)$ which implies $\bar{\sigma}\geq \frac{1}{4}\sigma_{\min}(\Sigma)$. Thus  $\bar{\sigma}$ is bounded away from zero. Moreover, if we set $m\approx C\frac{L}{\bar{\sigma}}$   following standard practice of SVRG \cite{johnson2013accelerating,xiao2014proximal} and set  $\beta=\frac{1}{16L}$, then  $ \alpha <1 $  which guarantees the convergence of the algorithm. The  requirement of $\lambda$ is commonly used to prove the statistical property of Lasso \cite{negahban2009unified}. Further notice that under this setting, we have $  \frac{c_2}{(1-\alpha)Q(\beta,\bar{\sigma},L,m)}\frac{r\log p}{n}=o(1)$, which implies that the statistical tolerance  is of a  lower order to $\|\hat{\theta}-\theta^*\|_2^2$ which is the statistical error of the {\em optimal solution} of Lasso. Hence it can be ignored from the statistical view.  Combining these together, Corollary~\ref{cor.lasso} states that the objective gap decreases geometrically until it achieves the fundamental statistical limit  of Lasso.

 \subsubsection{Group sparsity model}
 
 In many applications, we need to consider the group sparsity, i.e., a group of coefficients are set to zero simultaneously. We assume features are partitioned into disjoint groups, i.e., $\mathcal{G}=\{G_1,G_2,...,G_{N_{\mathcal{G}}}\}$, and assume $\bar{\gamma}=(\gamma,\gamma,...,\gamma)$. That is, the regularization is $\psi(\theta)= \|\theta\|_{\mathcal{G},\bar{\gamma}} \triangleq\sum_{g=1}^{N_{\mathcal{G}}} \|\theta_{g}\|_\gamma$.  For example, group Lasso corresponds to  $\gamma=2$. Other choice of $\gamma$ may include $\gamma=\infty$, which is suggested in \citet{turlach2005simultaneous}.

 Besides RSC condition, we need the following group counterpart of the column normalization condition: Given a group $G$ of size $q$, and $X_G\in \mathbb{R}^{n\times q}$, we define the associated operator norm $||| X_G  |||_{\gamma\rightarrow2}=\max_{\|\theta\|_\gamma=1} \|X_G\theta\|_2 $, and require that
 $$ \frac{|||X_{G_i} |||_{\gamma\rightarrow 2}}{\sqrt{n}}\leq 1 \quad \text{for all} \quad i=1,2,..., N_{\mathcal{G}}.$$
 Observe that when $G_i$ are all singleton, this condition reduces to column normalization condition.  We assume the data generation model is  $y_i=x_i^T\theta^*+\xi_i$, and $x_i\sim N(0,\Sigma)$.
 
 We discuss the case of $\gamma=2$, i.e., Group Lasso in the following.
 \begin{corollary}
 	Suppose the dimension of $\theta$ is $p$ and each group has $q$ parameters, i.e., $p=qN_\mathcal{G}$, $s_{\mathcal{G}} $ is the cardinality of non-zeros group, $\xi_i$ is zero mean sub-Gaussian noise with variance $u^2 $, $n>c\rho^2\log p$ for some constant c, If we choose  $\lambda\geq 4u (\sqrt{\frac{q}{n}}+\sqrt{\frac{\log N_{\mathcal{G}}}{n}})$,
 	and let
 	\begin{equation*}
 		\begin{split}
 			&\bar{\sigma}=\kappa_1(\Sigma)-c\kappa_2(\Sigma)s_{\mathcal{G}} (\sqrt{\frac{q}{n}}+\sqrt{\frac{3 \log N_{\mathcal{G}}}{n}} )^2;\\
 			&\alpha=(\frac{1}{\bar{\sigma} \beta(1-4L\beta)m}+\frac{4L\beta (m+1)}{(1-4L\beta)m}), 
 		\end{split}
 	\end{equation*}
 	where $\kappa_1$ and $\kappa_2$ are some strictly positive numbers which only depends on $\Sigma$,
 	then for any 
 	$$\kappa^2\geq  \frac{c_2\kappa_2(\Sigma)}{(1-\alpha)Q(\beta,\bar{\sigma,L,\beta})}s_{\mathcal{G}}(\sqrt{\frac{q}{n}}+\sqrt{\frac{3 \log N_{\mathcal{G}}}{n}} )^2 \|\hat{\theta}-\theta^*\|_2^2,$$ 
 	we have
 	$$ G(\theta^s)-G(\hat{\theta})\leq \kappa^2, $$
 	with high probability,
 	for 
 	\begin{equation}
 		\begin{split}
 			s>&3\frac{\log ( \frac{G(\theta_0)-G(\hat{\theta})}{\kappa^2} )}{\log(1/\alpha)} .
 		\end{split}
 	\end{equation}
 \end{corollary}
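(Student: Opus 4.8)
The plan is to derive this corollary as a direct specialization of Theorem~\ref{main_theorem} to the group sparsity model with $\gamma=2$; the substantive work is to (i) pick the subspace pair and compute its compatibility constant, (ii) verify the RSC condition \eqref{RSC} for the squared loss under a group-structured Gaussian design, and (iii) check the regularization condition $\lambda\ge 2\psi^*(\nabla F(\theta^*))$. For (i), let $S_{\mathcal{G}}$ be the set of active groups of $\theta^*$, so $|S_{\mathcal{G}}|=s_{\mathcal{G}}$, and take $M=\bar{M}=M(S_{\mathcal{G}})$. As already recorded in the excerpt, $\|\cdot\|_{\mathcal{G},2}$ is decomposable w.r.t.\ $(M(S_{\mathcal{G}}),\bar{M}^\perp(S_{\mathcal{G}}))$, and $\theta^*\in M$ so that $\psi(\theta^*_{M^\perp})=0$. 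For $\theta\in\bar{M}$, Cauchy--Schwarz over the $s_{\mathcal{G}}$ active groups gives $\|\theta\|_{\mathcal{G},2}=\sum_{g\in S_{\mathcal{G}}}\|\theta_g\|_2\le\sqrt{s_{\mathcal{G}}}\,\|\theta\|_2$, hence $H(\bar{M})=\sqrt{s_{\mathcal{G}}}$ and $H^2(\bar{M})=s_{\mathcal{G}}$.

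For (ii), since $F(\theta)=\frac{1}{2n}\|X\theta-y\|_2^2$ is quadratic, condition \eqref{RSC} is equivalent to a lower bound $\frac{1}{n}\|X\Delta\|_2^2\ge\sigma\|\Delta\|_2^2-2\tau_\sigma\|\Delta\|_{\mathcal{G},2}^2$ valid for all $\Delta\in\mathbb{R}^p$. I would establish this by a one-sided Gordon-type Gaussian comparison inequality: for $x_i\sim N(0,\Sigma)$, with probability at least $1-c_1\exp(-c_2 n)$ one has, uniformly in $\Delta$,
\[
\frac{\|X\Delta\|_2}{\sqrt n}\ \ge\ c_3\sqrt{\sigma_{\min}(\Sigma)}\,\|\Delta\|_2\ -\ c_4\sqrt{\nu(\Sigma)}\,\Big(\sqrt{\tfrac{q}{n}}+\sqrt{\tfrac{\log N_{\mathcal{G}}}{n}}\Big)\|\Delta\|_{\mathcal{G},2},
\]
where the $\sqrt{q/n}$ term is the Gaussian complexity of a single group (a $q$-dimensional Euclidean ball has Gaussian width $\asymp\sqrt q$) and $\sqrt{\log N_{\mathcal{G}}/n}$ is the price of a union bound over the $N_{\mathcal{G}}$ groups. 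Squaring and applying $2ab\le a^2+b^2$ yields \eqref{RSC} with $\sigma=\kappa_1(\Sigma)\propto\sigma_{\min}(\Sigma)$ and $\tau_\sigma\asymp\kappa_2(\Sigma)\big(\sqrt{q/n}+\sqrt{3\log N_{\mathcal{G}}/n}\big)^2$, where $\kappa_1,\kappa_2$ depend only on $\Sigma$. Then $\bar\sigma=\sigma-64\tau_\sigma H^2(\bar{M})=\kappa_1(\Sigma)-c\,\kappa_2(\Sigma)\,s_{\mathcal{G}}\big(\sqrt{q/n}+\sqrt{3\log N_{\mathcal{G}}/n}\big)^2$, exactly the expression in the statement; under $s_{\mathcal{G}}(q+\log N_{\mathcal{G}})/n=o(1)$ it is bounded away from $0$, and choosing $m\asymp L/\bar\sigma$, $\beta=1/(16L)$ gives $\alpha\in(0,1)$ as in the remarks after Theorem~\ref{main_theorem}.

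For (iii), the dual of the $(1,2)$ group norm is $\psi^*(v)=\max_g\|v_g\|_2$, and since $y=X\theta^*+\xi$ we have $\nabla F(\theta^*)=\frac1n X^\top(X\theta^*-y)=-\frac1n X^\top\xi$. Conditionally on $X$, $\frac1n\|(X^\top\xi)_{G}\|_2$ is sub-Gaussian with scale controlled by $u\,|||X_{G}|||_{2\to2}/n\le u\sqrt{q}/\sqrt n$ (using the per-group operator-norm normalization and the $\sqrt q$ Gaussian complexity of the group), so a tail bound together with a union bound over the $N_{\mathcal{G}}$ groups gives $\psi^*(\nabla F(\theta^*))\le 2u\big(\sqrt{q/n}+\sqrt{\log N_{\mathcal{G}}/n}\big)$ with probability at least $1-c/n$; hence $\lambda\ge 4u\big(\sqrt{q/n}+\sqrt{\log N_{\mathcal{G}}/n}\big)$ forces $\lambda\ge2\psi^*(\nabla F(\theta^*))$. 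It remains to assemble: substituting $H(\bar{M})=\sqrt{s_{\mathcal{G}}}$ and $\psi(\theta^*_{M^\perp})=0$ into the statistical tolerance of Theorem~\ref{main_theorem} gives $e^2\asymp\frac{\tau_\sigma}{Q(\beta,\bar\sigma,L,m)}\,s_{\mathcal{G}}\,\|\hat\theta-\theta^*\|_2^2\asymp\frac{c_2\kappa_2(\Sigma)}{Q}\,s_{\mathcal{G}}\big(\sqrt{q/n}+\sqrt{3\log N_{\mathcal{G}}/n}\big)^2\|\hat\theta-\theta^*\|_2^2$, so the condition $\kappa^2\ge e^2/(1-\alpha)$ becomes precisely the threshold in the corollary, and the iteration bound $s>3\log\big((G(\theta_0)-G(\hat\theta))/\kappa^2\big)/\log(1/\alpha)$ is inherited verbatim from the theorem; the failure probabilities of the two concentration steps combine into the stated ``with high probability.''

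The main obstacle is step (ii) (and the closely related tail bound in step (iii)): proving the group-structured RSC lower bound for a correlated Gaussian design with the sharp tolerance $\asymp\kappa_2(\Sigma)\big(\sqrt{q/n}+\sqrt{\log N_{\mathcal{G}}/n}\big)^2$. This requires a Gordon comparison combined with a peeling argument over the cone of directions where $\|\Delta\|_{\mathcal{G},2}$ dominates $\sqrt{s_{\mathcal{G}}}\|\Delta\|_2$, plus a union bound over the $N_{\mathcal{G}}$ groups (each contributing Gaussian complexity $\sqrt q$), while keeping the leading constants dependent only on $\Sigma$ through $\sigma_{\min}(\Sigma)$ and $\nu(\Sigma)$. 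Since these estimates are standard in the structured high-dimensional estimation literature, I would import them from the analyses in \citet{negahban2009unified} and \citet{agarwal2010fast} rather than reprove them, and devote the remaining routine work to tracking constants so that $\kappa_1(\Sigma),\kappa_2(\Sigma)>0$.
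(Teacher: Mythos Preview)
Your proposal is correct and follows essentially the same route as the paper: both specialize Theorem~\ref{main_theorem} by taking $M=\bar M=M(S_{\mathcal G})$ so that $H(\bar M)=\sqrt{s_{\mathcal G}}$ and $\psi(\theta^*_{M^\perp})=0$, invoke the group-structured RSC bound for Gaussian designs from \citet{negahban2009unified} to identify $\sigma=\kappa_1(\Sigma)$ and $\tau_\sigma\asymp\kappa_2(\Sigma)(\sqrt{q/n}+\sqrt{3\log N_{\mathcal G}/n})^2$, and use the $(\infty,2)$-dual-norm tail bound (Lemma~5 of \citet{negahban2009unified}) to verify $\lambda\ge 2\psi^*(\nabla F(\theta^*))$. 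The only difference is cosmetic: you sketch how the RSC estimate would be proved via Gordon/peeling before citing it, whereas the paper simply imports the finished inequality.
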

 Notice that to ensure $\bar{\sigma}\geq 0$, it suffices to have
 $$ s_{\mathcal{G}} \left(\sqrt{\frac{q}{n}}+\sqrt{\frac{3 \log N_{\mathcal{G}}}{n}} \right)^2=o(1)  .$$ This is a mild condition, as it is needed to guarantee the statistical consistency of Group Lasso \cite{negahban2009unified}.  In practice, this condition is not hard to satisfy  when $q$ and $s_{\mathcal{G}}$ are small.
 We can easily adjust $L,\beta, m $ to make $\alpha<1$. Since $ s_{\mathcal{G}} \big(\sqrt{\frac{q}{n}}+\sqrt{\frac{3 \log N_{\mathcal{G}}}{n}} \big)^2=o(1) $ and $Q(\beta,\bar{\sigma},L,m)$ is in the order of $n$ if we set $\beta=\frac{1}{20L}$,$m\approx C\frac{L}{\bar{\sigma}}$  , we have $\kappa^2=o(\|\hat{\theta}-\theta^*\|_2^2).$ Thus, similar as the case of Lasso, the objective gap decreases geometrically up to the scale $o(\|\hat{\theta}-\theta^*\|_2^2)$, i.e., dominated by the  statistical error of the model.
 
 \subsubsection{Extension to Generalized linear model }
 The results  on Lasso and group Lasso are readily extended to generalized linear models, where we consider the model
 $$ \hat{\theta}=\arg\min_{\theta\in \Omega'}\{ \frac{1}{n}\sum_{i=1}^{n} {\Phi(\theta,x_i)-y_i\langle \theta,x_i \rangle}+\lambda \|\theta\|_1   \}, $$
 with $\Omega'=\Omega\cap \mathbb{B}_2 (R)$ and $\Omega=\{ \theta| \|\theta\|_1\leq \rho \}$, where $R$ is a universal constant \cite{loh2013regularized}. This requirement is essential, for instance  for the logistic function , the Hessian function $\Phi''(t)=\frac{\exp(t)}{(1+\exp(t))^2}$ approached to zero as its argument diverges. 
 Notice that when $\Phi(t)={t^2}/{2}$, the problem reduces to Lasso. The RSC condition admit the form 
 $$ \frac{1}{n} \sum_{i=1}^{n} \Phi''(\langle \theta_t,x_i \rangle ) (\langle x_i,\theta-\theta' \rangle )^2\geq \frac{\sigma}{2}\|\theta-\theta'\|_2^2-\tau_\sigma \|\theta-\theta'\|_1, \mbox{for all} \quad \theta,\theta' \in \Omega'$$
 For a board class of log-linear models, the RSC condition holds with $\tau_\sigma=c\frac{\log p}{n}$. Therefore, we obtain same results as those of Lasso, modulus change of  constants. For more details of  RSC conditions in generalized linear model, we refer the readers to \cite{negahban2009unified}.

 \subsection{Results on non-convex $G(\theta)$}
 We define the  following notations.
 \begin{itemize}
 	\item $L_\mu=\max\{\mu, L-\mu\} $
 	\item Modified restricted strongly convex parameter $\bar{\sigma}=\sigma-\mu-64\tau_\sigma r$, where $\tau_\sigma=\tau \log p/n$, $\tau$ is a constant, $r$ is the cardinality of $\theta^*$.
 	\item Contraction factor
 	\begin{equation}\label{contraction factor}
 	\alpha=\frac{8L\beta^2 (m+1) + \frac{2 (1+\beta\mu+4L\mu\beta^2+4L\beta^2 \mu m)}{\bar{\sigma}}}{\beta m (2-8L\beta-\frac{2\mu}{\bar{\sigma}} (1+4L\beta))}
 	\end{equation}
 	\item Statistical tolerance 
 	\begin{equation}
 	e^2=64\tau_\sigma \chi r\|\hat{\theta}-\theta^*\|_2^2,
 	\end{equation}
 	where
 	\begin{equation}\label{chi}
 	\chi= \frac{\frac{2\mu \beta m}{\bar{\sigma}} (1+4L\beta)+ (1+\beta\mu+4L\mu \beta^2+4L\beta^2\mu m) \frac{2}{\bar{\sigma}}}{\beta m (2-8L\beta-\frac{2\mu}{\bar{\sigma}} (1+4L\beta))}.
 	\end{equation}
 	
 \end{itemize}
 
 \begin{theorem}\label{main_theorem_non_convex}
 	In Problem \eqref{obj3}, suppose each $f_i(\theta)$ is $L$ smooth,  $\beta\leq \frac{1}{L_\mu}$,  $\theta^*$ is feasible, $g_{\lambda,\mu}(\cdot)$ satisfies Assumptions in section \ref{section:nonconvex_regularizer}, $F(\theta)$ satisfies RSC with parameter $\sigma$, $\tau_\sigma=\tau \log p/n$, and $\bar{\sigma}>0$, $\alpha\in [0,1]$ by choosing suitable $\beta$ and $m$. Suppose $\hat{\theta}$ is the global optimal, $n>c\rho^2\log p$ for some positive constant c, consider any choice of the regularization parameter $\lambda$ such that $\lambda> \max \{\frac{4}{L_g} \|\nabla F(\theta^*)\|_\infty,16\rho\tau\frac{\log p}{n L_g} \}$, then for any tolerance  $\kappa^2\geq \frac{e^2}{1-\alpha}$ if 
 	$$ s>3\log (\frac{G(\theta^0)-G(\hat{\theta})}{\kappa^2})\big/\log(1/\alpha),$$
 	then
 	$ G(\theta^s)-G(\hat{\theta})\leq \kappa^2, $
 	with probability at least $1-\frac{c_1}{\sqrt{n}}$. 
 \end{theorem}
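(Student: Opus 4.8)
The plan is to turn the non-convex problem into a convex-regularizer problem by absorbing the $\mu$-shift, and then run the proximal-SVRG analysis of \cite{xiao2014proximal} with \emph{restricted} strong convexity in place of strong convexity. By Assumption~5 of Section~\ref{section:nonconvex_regularizer}, $g_{\lambda,\mu}(\theta)=\lambda g_\lambda(\theta)-\tfrac{\mu}{2}\|\theta\|_2^2$; hence, setting $\tilde F(\theta):=F(\theta)-\tfrac{\mu}{2}\|\theta\|_2^2=\tfrac1n\sum_i h_i(\theta)$ with $h_i:=f_i-\tfrac{\mu}{2}\|\cdot\|_2^2$, we have $G(\theta)=\tilde F(\theta)+\lambda g_\lambda(\theta)$, and Algorithm~\ref{alg:non-convex} is exactly the proximal SVRG of Algorithm~\ref{alg:convex} applied to $\min_{g_\lambda(\theta)\le\rho}\tilde F(\theta)+\lambda g_\lambda(\theta)$, whose stochastic components are the $h_i$ and whose regularizer $\lambda g_\lambda$ is now \emph{convex}, so the prox step is well defined. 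I would then record three structural facts. First, since $F$ is convex and $L$-smooth, the eigenvalues of $\nabla^2\tilde F$ lie in $[-\mu,L-\mu]$, so $\tilde F$ is $L_\mu$-smooth, which is precisely why we impose $\beta\le 1/L_\mu$. Second, subtracting $\tfrac{\mu}{2}\|\theta_2-\theta_1\|_2^2$ from both sides of \eqref{RSC} shows $\tilde F$ satisfies RSC with parameter $(\sigma-\mu,\tau_\sigma)$. Third, $g_\lambda$ is separable across coordinates, hence decomposable with respect to the sparsity subspaces $M(S)=\bar M(S)$ for $S=\mathrm{supp}(\theta^*)$, $|S|\le r$; and Assumptions~3--4 give $\bar g_\lambda(t)\le L_g t$ for $t\ge0$, so $H(\bar M)\le L_g\sqrt r$ and the RSC tolerance along the relevant directions is of order $\tau_\sigma r$, matching $\bar\sigma=\sigma-\mu-64\tau_\sigma r$.

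The key technical point that keeps the constants benign is that, although the $h_i$ need not be convex, the $\mu$-terms in $v_{k-1}$ cancel against $\nabla\tilde F-\nabla\tilde F$: one checks $v_{k-1}-\nabla\tilde F(\theta_{k-1})=\nabla f_{i_k}(\theta_{k-1})-\nabla f_{i_k}(\tilde\theta)-(\nabla F(\theta_{k-1})-\nabla F(\tilde\theta))$, which is exactly the error of the \emph{ordinary} SVRG estimator built from the convex $f_i$. Hence the Xiao--Zhang variance bound $\mathbb E\|v_{k-1}-\nabla\tilde F(\theta_{k-1})\|_2^2\le 4L\bigl[G(\theta_{k-1})-G(\hat\theta)+G(\tilde\theta)-G(\hat\theta)\bigr]$ still holds with $L$ rather than $L_\mu$, which is why \eqref{contraction factor} carries $L$ in the variance-related terms and $L_\mu$ only through the step-size restriction. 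With this in hand I would run the one-epoch proximal-SVRG estimate: combine (a) the prox-progress inequality for $\theta_k$ using $L_\mu$-smoothness of $\tilde F$ and convexity of $\lambda g_\lambda$; (b) the variance bound above; and (c) the RSC inequality for $\tilde F$ between $\theta_k$ and $\hat\theta$ together with the cone condition. The cone condition follows, as in the convex case, from optimality of $\hat\theta$, feasibility $g_\lambda(\theta^*)\le\rho$, and the choice $\lambda>\max\{\tfrac4{L_g}\|\nabla F(\theta^*)\|_\infty,\,16\rho\tau\tfrac{\log p}{nL_g}\}$ --- the first term dominating $\psi^*(\nabla\tilde F(\theta^*))$ after accounting for the $-\mu\theta^*$ shift and the slope-$L_g$ behaviour of $g_\lambda$ near $0$, the second absorbing the RSC-tolerance contribution $\tau_\sigma\psi^2$ at $\theta^*$ via $\psi(\cdot)\le2\rho$ and $\tau_\sigma=\tau\log p/n$. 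Step (c) lets $\langle\nabla\tilde F(\theta_k)-\nabla\tilde F(\hat\theta),\theta_k-\hat\theta\rangle$ be bounded below by $\tfrac{\bar\sigma}{2}\|\theta_k-\hat\theta\|_2^2$ minus a remainder of order $\tau_\sigma r\|\hat\theta-\theta^*\|_2^2$, which substitutes for the missing strong convexity.

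Telescoping this estimate over $k=1,\dots,m$, bounding $\|\theta_k-\hat\theta\|_2^2$ and $\|\theta_k-\tilde\theta\|_2^2$ by objective gaps through the restricted strong convexity just obtained, and using that $\theta^s=\theta_t$ for a uniformly random $t$ so that $\mathbb E[G(\theta^s)-G(\hat\theta)]=\tfrac1m\sum_k\mathbb E[G(\theta_k)-G(\hat\theta)]$, yields the per-epoch recursion $\mathbb E[G(\theta^s)-G(\hat\theta)]\le\alpha\,[G(\theta^{s-1})-G(\hat\theta)]+e^2$ with $\alpha$ as in \eqref{contraction factor} and $e^2=64\tau_\sigma\chi r\|\hat\theta-\theta^*\|_2^2$ as in \eqref{chi}; the extra $\mu/\bar\sigma$ factors are exactly the residue of the $\tilde F$-RSC constant being $\sigma-\mu$ and of the $\mu\|\cdot\|_2^2$ cross-terms between the prox inequality and the RSC bound. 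Iterating gives $\mathbb E[G(\theta^s)-G(\hat\theta)]\le\alpha^s[G(\theta^0)-G(\hat\theta)]+e^2/(1-\alpha)$, so for $\kappa^2\ge e^2/(1-\alpha)$ and the stated number of epochs the gap drops below $\kappa^2$, and a Markov-type conversion (the source of the factor $3$) together with the concentration events that guarantee the RSC condition and the validity of the $\lambda$-bound (holding with probability at least $1-c_1/\sqrt n$, the $\sqrt n$ coming from the concentration estimates for the non-convex/corrupted design as in \cite{loh2011high,loh2013regularized}) upgrades this to the high-probability statement. The main obstacle is step (c) combined with the telescoping: unlike the strongly convex case one must keep every displacement vector --- $\theta_k-\hat\theta$, $\theta_k-\theta^*$, $\hat\theta-\theta^*$ --- inside the cone so that $\tau_\sigma\psi^2$ is genuinely of order $\tau_\sigma r\|\cdot\|_2^2$ and can be absorbed into $\bar\sigma$, while simultaneously tracking the $\mu$-shift so that $L$ (not $L_\mu$) survives in the variance term; the bookkeeping that produces the precise forms of $\alpha$ and $e^2$ is where essentially all the work lies.
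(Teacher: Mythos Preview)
Your overall plan --- rewrite $G=\tilde F+\lambda g_\lambda$ with $\tilde F:=F-\tfrac{\mu}{2}\|\cdot\|_2^2$, note that the $\mu$-terms cancel in $v_{k-1}-\nabla\tilde F(\theta_{k-1})$ so the variance is controlled through the convex $f_i$, run the one-epoch proximal-SVRG estimate with $L_\mu$-smoothness, replace strong convexity by an RSC bound on $\|\theta^s-\hat\theta\|_2^2$, and finish with the epoch-splitting/Markov argument --- is exactly the paper's strategy. Two of your technical claims, however, are not correct as stated, and they are precisely where the non-convex constants in \eqref{contraction factor}--\eqref{chi} originate.

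\textbf{The variance bound picks up extra $\mu$-terms.} Co-coercivity of the convex $f_i$ gives
$\tfrac1n\sum_i\|\nabla f_i(\theta)-\nabla f_i(\hat\theta)\|_2^2\le 2L\bigl[F(\theta)-F(\hat\theta)-\langle\nabla F(\hat\theta),\theta-\hat\theta\rangle\bigr]$,
i.e.\ the Bregman divergence of $F$, not of $\tilde F$. To convert this to $G(\theta)-G(\hat\theta)$ you must use optimality of $\hat\theta$ for the \emph{convex} surrogate $\tilde F+\lambda g_\lambda$, which yields
$\langle\nabla F(\hat\theta),\theta-\hat\theta\rangle\ge \lambda g_\lambda(\hat\theta)-\lambda g_\lambda(\theta)+\mu\langle\hat\theta,\theta-\hat\theta\rangle$
and hence
$F(\theta)-F(\hat\theta)-\langle\nabla F(\hat\theta),\theta-\hat\theta\rangle\le G(\theta)-G(\hat\theta)+\tfrac{\mu}{2}\|\theta-\hat\theta\|_2^2$.
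Thus the clean Xiao--Zhang form $4L[G(\theta_{k-1})-G(\hat\theta)+G(\tilde\theta)-G(\hat\theta)]$ you assert does \emph{not} hold; one gets in addition $2L\mu\|\theta_{k-1}-\hat\theta\|_2^2+2L\mu\|\tilde\theta-\hat\theta\|_2^2$. After multiplying by $2\beta^2$ and summing over $k=0,\dots,m-1$, these produce the $4L\mu\beta^2$ and $4L\beta^2\mu m$ pieces of the coefficient $(1+\beta\mu+4L\mu\beta^2+4L\beta^2\mu m)$ in \eqref{contraction factor}. The prox inequality contributes only the $\beta\mu$ piece, so your attribution of all the $\mu$-residue to ``the prox inequality and the RSC bound'' would leave you with the wrong $\alpha$.

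\textbf{The cone condition cannot come from decomposability of $g_\lambda$.} The convex-case route (Lemmas~\ref{lemma.cone}--\ref{lemma.cone_optimization}) needs $\psi$ to be a norm: H\"older with the dual norm $\psi^*$ and the triangle inequality are both used. Here $g_\lambda$ is convex and separable but not positively homogeneous; for SCAD/MCP one has $\bar g_\lambda(t)\sim\tfrac{\mu}{2\lambda}t^2$ for large $|t|$, so $H(\bar M)=\sup_{\bar M\setminus\{0\}}g_\lambda(\theta)/\|\theta\|_2=\infty$ and $g_\lambda^*$ is not the right object. The paper obtains the cone bound directly in $\|\cdot\|_1$ via the structural inequality $g_{\lambda,\mu}(\theta^*)-g_{\lambda,\mu}(\theta)\le\lambda L_g(\|\nu_A\|_1-\|\nu_{A^c}\|_1)$ for $\nu=\theta-\theta^*$ (Lemma~\ref{lemma.non_convex_norm}); combined with RSC and the two-part condition on $\lambda$ this gives $\|\theta^s-\hat\theta\|_1\le 4\sqrt r\,\|\theta^s-\hat\theta\|_2+8\sqrt r\,\|\hat\theta-\theta^*\|_2+2\min\{\epsilon'/(\lambda L_g),\rho\}$ (Lemma~\ref{lemma.non_convex_cone_optimizatin}). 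That bound plugs straight into the $\tau_\sigma\|\cdot\|_1^2$ term of RSC and yields $\bar\sigma=\sigma-\mu-64\tau_\sigma r$ and $e^2=64\tau_\sigma\chi r\|\hat\theta-\theta^*\|_2^2$ without invoking subspace compatibility of $g_\lambda$. Also, the $1-c_1/\sqrt n$ probability comes from the Markov parameters $k_i=2^{i-1}k_1$ with $k_1\asymp\sqrt{n}$ (via $\tau_\sigma=\tau\log p/n$ and $n>c\rho^2\log p$), not from design-concentration events.
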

 We provide some remarks to make the theorem more interpretable.
 \begin{enumerate}
 	\item We require $\sigma\geq \mu+64\tau_\sigma r $ to ensure $\bar{\sigma}>0$. In addition, the non-convex parameter $\mu$ can not be larger than $\bar{\sigma}$. In particular, if $ \frac{\mu}{\bar{\sigma}}\geq \frac{1-4L\beta}{1+4L\beta}$, then $\alpha<0$ and it is not possible to set $\alpha\in (0,1)$ by tunning $m$ and learning rate $\beta$.
 	\item  We consider a concrete case to obtain a sense of the value of different terms we defind. Suppose $n=5000$ and if we set $m\approx 10\frac{L}{\bar{\sigma}}$  which is typical for SVRG, $\beta=\frac{1}{16L}$ and suppose we have $\bar{\sigma}=0.4$, $\mu=0.1$, then we have the contraction factor $\alpha\approx 0.8$. Furthermore, we have
 	$\chi(\beta,\mu,L,m,\sigma)\approx 0.9$, which leads to $ e^2\approx 60  \frac{r\log p}{n} \|\hat{\theta}-\theta^*\|_2^2$. When the model is sparse, the tolerance is dominated by statistical error of the model. 
 \end{enumerate}
 
 \subsubsection{Linear regression with SCAD}
 The first non-convex model we consider is the linear regression with SCAD.  That is, $f_i(\theta)=\frac{1}{2} (\langle \theta, x_i \rangle-y_i)^2$ and $g_{\lambda,\mu}(\cdot)$ is $SCAD(\cdot)$ with parameter $\lambda$ and $\zeta$. The  data $(x_i,y_i)$ are generated in a same way as in the Lasso example.
 \begin{corollary}[Linear regression with SCAD]
 	Suppose we have i.i.d. observations $\{ (x_i,y_i) \}$ ,   $\theta^*$ is supported on a subset of cardinality at most $r$,   $\hat{\theta}$ is the global optimum, $n>c\rho^2\log p$ for some positive constant c, $\bar{\sigma}=\frac{1}{2}\sigma_{\min}(\Sigma)-\mu- c_1\nu(\Sigma)\frac{ r\log p}{n}  $ and $\beta\leq \frac{1}{L_\mu}$ in the algorithm.  We choose  $\lambda $ such that $\lambda\geq \max \{ 12u \sqrt{\frac{\log p}{n}}, 16\rho\tau \frac{\log p}{n} \}$, 
 	Then for any tolerance 
 	$$\kappa^2\geq \frac{c_2\chi   }{ (1-\alpha)}\frac{\tau r\log p}{n}\|\hat{\theta}-\theta^*\|_2^2, $$
 	where $\alpha$  and $\chi$ are defined in  (\ref{contraction factor})  and (\ref{chi}) with $\mu=\frac{1}{\zeta-1}$.
 	if 
 	$$ s>3\log (\frac{G(\theta^0)-G(\hat{\theta})}{\kappa^2})\big/\log(1/\alpha),$$
 	then
 	$ G(\theta^s)-G(\hat{\theta})\leq \kappa^2, $
 	with probability at least $1-\frac{c_3}{\sqrt{n}}$.
 \end{corollary}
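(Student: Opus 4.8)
The plan is to obtain this corollary as a direct specialization of Theorem~\ref{main_theorem_non_convex}: the work reduces to checking that the four structural hypotheses of that theorem hold for the SCAD-regularized least-squares model with the claimed constants, and then reading off the resulting contraction factor, tolerance and success probability. First I would dispatch the regularizer and smoothness requirements. The SCAD penalty is separable across coordinates, and one checks directly on the univariate profile that $\bar g_{\lambda,\mu}(0)=0$, symmetry, monotonicity on $[0,\infty)$, that $t\mapsto\bar g_{\lambda,\mu}(t)/t$ is nonincreasing, differentiability away from $0$ with $\lim_{t\to 0^+}\bar g_{\lambda,\mu}'(t)=\lambda$, and that $\bar g_{\lambda}(t)=(\mathrm{SCAD}_{\lambda,\zeta}(t)+\tfrac{\mu}{2}t^2)/\lambda$ is convex exactly when $\mu\ge\tfrac1{\zeta-1}$; this yields $L_g=1$ and $\mu=\tfrac1{\zeta-1}$, as already recorded in Section~\ref{section:nonconvex_regularizer} following \cite{loh2013regularized}. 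Smoothness is immediate: $\nabla^2 f_i=x_ix_i^\top$, so $f_i$ is $L$-smooth with $L=\max_i\|x_i\|_2^2$ (finite with high probability under the Gaussian design), and the stepsize condition $\beta\le 1/L_\mu$ with $L_\mu=\max\{\mu,L-\mu\}$ is assumed in the statement.

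The second and main step is the RSC verification for the random design. I would invoke the standard Gaussian-ensemble lemma (Raskutti--Wainwright--Yu; see also \cite{loh2013regularized,negahban2009unified}): for $x_i$ i.i.d.\ $N(0,\Sigma)$, column-normalized $X$, and $n>c\rho^2\log p$, with probability at least $1-c'\exp(-c''n)$ the loss $F(\theta)=\tfrac1{2n}\|X\theta-y\|_2^2$ satisfies, for every $\Delta$,
$$\tfrac1n\|X\Delta\|_2^2\ \ge\ \tfrac12\sigma_{\min}(\Sigma)\|\Delta\|_2^2-c_1\nu(\Sigma)\tfrac{\log p}{n}\|\Delta\|_1^2,$$
that is, RSC with $\sigma=\tfrac12\sigma_{\min}(\Sigma)$ and $\tau_\sigma=c_1\nu(\Sigma)\tfrac{\log p}{n}$, so $\tau=c_1\nu(\Sigma)$. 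Taking the model subspace $M(S)$ with $S=\mathrm{supp}(\theta^*)$ gives $H^2(\bar M)\le r$ and $\psi(\theta^*_{M^\perp})=0$, hence $\bar\sigma=\sigma-\mu-64\tau_\sigma r=\tfrac12\sigma_{\min}(\Sigma)-\mu-c_1\nu(\Sigma)\tfrac{r\log p}{n}$ exactly as stated (the constant $64$ being absorbed into $c_1$).

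Third, the choice of $\lambda$. With $L_g=1$, Theorem~\ref{main_theorem_non_convex} asks for $\lambda\ge\max\{4\|\nabla F(\theta^*)\|_\infty,\ 16\rho\tau\tfrac{\log p}{n}\}$. Since $\nabla F(\theta^*)=\tfrac1n X^\top\xi$ with $\xi$ zero-mean sub-Gaussian of parameter $u$ and $X$ column-normalized, a sub-Gaussian tail bound and a union bound over the $p$ coordinates give $\|\nabla F(\theta^*)\|_\infty\le 3u\sqrt{\log p/n}$ with probability at least $1-c/p$; thus $\lambda\ge\max\{12u\sqrt{\log p/n},\ 16\rho\tau\log p/n\}$ suffices, matching the corollary. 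Feasibility of $\theta^*$ is assumed (the side constraint is without loss of generality). Finally I would apply Theorem~\ref{main_theorem_non_convex} verbatim on the intersection of the RSC event and the $\lambda$-event: its $\alpha$ and its tolerance $e^2=64\tau_\sigma\chi r\|\hat\theta-\theta^*\|_2^2=64c_1\nu(\Sigma)\chi\,\tfrac{r\log p}{n}\|\hat\theta-\theta^*\|_2^2$ are precisely $(\ref{contraction factor})$–$(\ref{chi})$ with $\mu=\tfrac1{\zeta-1}$, so $\kappa^2\ge e^2/(1-\alpha)$ becomes the displayed bound with $c_2$ a universal constant, and the iteration count $s>3\log\!\big((G(\theta^0)-G(\hat\theta))/\kappa^2\big)/\log(1/\alpha)$ carries over unchanged; collecting the failure probabilities ($c_1/\sqrt n$ from the theorem, $c'\exp(-c''n)$ from RSC, $c/p$ from the $\lambda$-event) gives overall success probability at least $1-c_3/\sqrt n$.

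The only genuinely non-routine ingredient is the RSC bound for the Gaussian ensemble — the uniform lower bound on $\tfrac1n\|X\Delta\|_2^2$ with the $\|\Delta\|_1^2$ slack and the sharp constants $\tfrac12\sigma_{\min}(\Sigma)$ and $c_1\nu(\Sigma)$ — and I expect that to be the crux; but this is exactly the content of the cited random-matrix lemmas, so here it is quoted rather than reproved, and everything else is bookkeeping of constants and a routine sub-Gaussian noise bound.
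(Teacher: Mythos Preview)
Your proposal is correct and follows essentially the same route as the paper: verify that SCAD satisfies the regularizer assumptions with $L_g=1$ and $\mu=\tfrac{1}{\zeta-1}$, invoke the Gaussian-design RSC lemma (as in the Lasso corollary) to get $\sigma=\tfrac12\sigma_{\min}(\Sigma)$ and $\tau_\sigma=c_1\nu(\Sigma)\tfrac{\log p}{n}$, use the sub-Gaussian tail bound $\|\nabla F(\theta^*)\|_\infty\le 3u\sqrt{\log p/n}$ to justify the choice of $\lambda$, and then apply Theorem~\ref{main_theorem_non_convex}. The paper's own proof is terser but identical in substance, simply pointing back to the Lasso proof for the RSC and gradient bounds and recalling the SCAD constants.
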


 Suppose we have $\zeta=3.7$, $n=5000$, $\bar{\sigma}=1$, $m\approx 10\frac{L}{\bar{\sigma}}$ , $\beta=\frac{1}{40L}$ then $\alpha\approx 0.66$ and $\chi\approx 3$. Notice in this setting we have $ \frac{c_2\chi}{(1-\alpha)} \frac{\tau r\log p}{n} \|\hat{\theta}-\theta^*\|_2^2$=$o(\|\hat{\theta}-\theta^*\|_2^2)$, when the model is sparse.  Thus this corollary asserts that the optimality gap decrease geometrically until it achieves the statistical limit of the model.
 

  \subsubsection{Linear regression with noisy covariate}
  Next we discuss a non-convex M-estimator on linear regression with noisy covariate, termed {\em corrected Lasso} which is proposed by \cite{loh2011high}. Suppose the data are generated according to a  linear model $y_i=x_i^T\theta^*+\xi_i,$ where $\xi_i$ is a random zero-mean sub-Gaussian noise with variance $v^2$. More concretely, we consider a model where each data point $x_i$ is i.i.d.\  sampled from a zero-mean normal distribution, i.e., $x_i\sim N(0,\Sigma)$. We denote the data matrix by $X\in \mathbb{R}^{n\times p}$ , the smallest eigenvalue of $\Sigma$ by $\sigma_{\min} (\Sigma)$ and the largest eigenvalue by $\sigma_{\max} (\Sigma)$  and  let $\nu(\Sigma)\triangleq \max_{i=1,...,p}\Sigma_{ii}$. 
  
  However, $x_i$ are not directly observed. Instead, we observe $z_i$ which is $x_i$ corrupted by addictive noise, i.e.,  $z_i=x_i+w_i$, where $w_i\in \mathbb{R}^p$ is a random vector independent of $x_i$, with zero-mean and known covariance matrix $\Sigma_w$. Define $\hat{\Gamma}=\frac{Z^TZ}{n}-\Sigma_w$ and $\hat{\gamma}=\frac{Z^Ty}{n}$. Then the corrected Lasso is given by 
  $$\hat{\theta}\in \arg \min_{\|\theta\|_1\leq \rho} \frac{1}{2} \theta^T \hat{\Gamma} \theta-\hat{\gamma} \theta+ \lambda \|\theta\|_1. $$
  Equivalently, it solve 
  $$ \min_{\|\theta\|_1\leq \rho}\frac{1}{2n}\sum (y_i-\theta^Tz_i)^2-\frac{1}{2}\theta^T\Sigma_w\theta+\lambda \|\theta\|_1 .$$

  We give the theoretical guarantee for SVRG on corrected Lasso. 
  \begin{corollary}[Corrected Lasso]

  	Suppose we have i.i.d.\ observations $\{ (z_i,y_i) \}$ from the linear model with additive noise, and $\theta^*$ is supported on a subset of cardinality at most $r$, $\Sigma_w=\gamma_w I$ . Let $\hat{\theta}$ denote the global optimal solution, and suppose $n>c\rho^2\log p$ for some positive constant $c$. We choose $\lambda $ such that $\lambda\geq \max \{ c_1 \varphi \sqrt{\frac{\log p}{n}}, 16\rho\tau \frac{\log p}{n} \}$, where $ \varphi=(\sqrt{\sigma_{\max} (\Sigma)}+\sqrt{\gamma_w}) (v+\sqrt{\gamma_w} \|\theta^*\|_2)$.
  	Then for any tolerance 
  	$$\kappa^2\geq \frac{c_1\chi  r  }{ (1-\alpha)}\frac{\tau\log p}{n} $$
  	where 
  	$\bar{\sigma}=\frac{1}{2}\sigma_{\min}(\Sigma)- c_2\frac{ r\log p}{n} $, 
  	$$\alpha=\frac{8L\beta^2 (m+1) + \frac{2 (1+\beta\gamma_w+4L\gamma_w\beta^2+4L\beta^2 \gamma_w m)}{\bar{\sigma}}}{\beta m (2 (1-4L\beta)-\frac{2\gamma_w}{\bar{\sigma}} (1+4L\beta))},$$
  	$$ \chi= \frac{\frac{2\gamma_w \beta m}{\bar{\sigma}} (1+4L\beta)+ (1+\beta\gamma_w+4L\gamma_w \beta^2+4L\beta^2\gamma_w m) \frac{2}{\bar{\sigma}}}{\beta m (2(1-4L\beta)-\frac{2\gamma_w}{\bar{\sigma}} (1+4L\beta))},$$
  	if 
  	$$ s>3\log (\frac{G(\theta^0)-G(\hat{\theta})}{\kappa^2})/\log(1/\alpha),$$
  	then
  	$ G(\theta^s)-G(\hat{\theta})\leq \kappa^2, $
  	with probability at least $1-\frac{c_3}{\sqrt{n}}$, where $c_1,c_2,c_3$ are some positive constant.
  \end{corollary}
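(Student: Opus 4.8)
The plan is to recognize corrected Lasso as a concrete instance of Problem~\eqref{obj3} and then invoke Theorem~\ref{main_theorem_non_convex}. I would take $f_i(\theta)=\tfrac12(y_i-\langle\theta,z_i\rangle)^2$, which is convex and whose average is the ``uncorrected'' quadratic loss $F(\theta)=\tfrac1{2n}\|Z\theta-y\|_2^2$; the non-convex correction $-\tfrac12\theta^\top\Sigma_w\theta=-\tfrac{\gamma_w}{2}\|\theta\|_2^2$ is exactly the $-\tfrac{\mu}{2}\|\theta\|_2^2$ term of the framework with $\mu=\gamma_w$, which Algorithm~\ref{alg:non-convex} implements through the $-\mu\theta_{k-1}$ and $-\mu\tilde\theta$ terms of $v_{k-1}$; the residual penalty is $\lambda\|\theta\|_1$, i.e.\ $g_\lambda(\theta)=\|\theta\|_1$ with $L_g=1$. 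One then checks the routine structural facts: $g_\lambda$ is convex with $g_\lambda(\theta)\ge\|\theta\|_1$, the assumptions of Section~\ref{section:nonconvex_regularizer} hold on the feasible set $\{\|\theta\|_1\le\rho\}$, $\ell_1$ is decomposable with respect to the $r$-sparse coordinate subspace $(M(S),\bar M(S))$ so that $H^2(\bar M)=r$, and feasibility of $\theta^*$ holds without loss of generality by the remark after~\eqref{obj3}.

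It remains to discharge the two data-dependent hypotheses of Theorem~\ref{main_theorem_non_convex}: smoothness of each $f_i$ and RSC of $F$. Smoothness is immediate, since $\nabla^2 f_i(\theta)=z_iz_i^\top$, so $f_i$ is $L$-smooth with $L=\max_i\|z_i\|_2^2$, a quantity bounded with high probability under the Gaussian-plus-noise design. The RSC step is the heart of the argument: because $F$ is quadratic, $F(\theta_2)-F(\theta_1)-\langle\nabla F(\theta_1),\theta_2-\theta_1\rangle=\tfrac1{2n}\|Z(\theta_2-\theta_1)\|_2^2$, so I must show $\tfrac1n\|Z\Delta\|_2^2\ge\sigma\|\Delta\|_2^2-2\tau_\sigma\|\Delta\|_1^2$ for every $\Delta$. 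This is precisely the restricted-eigenvalue/deviation bound for corrupted Gram matrices of \citet{loh2011high}: using $z_i=x_i+w_i$ with $x_i\sim N(0,\Sigma)$ and $w_i$ having covariance $\gamma_w I$, a concentration estimate for the sub-exponential quadratic form $\tfrac1n\|Z\Delta\|_2^2$ around $\Delta^\top(\Sigma+\gamma_w I)\Delta$ yields, with probability at least $1-c_3/\sqrt n$, $\tfrac1n\|Z\Delta\|_2^2\ge(\tfrac12\sigma_{\min}(\Sigma)+\gamma_w)\|\Delta\|_2^2-c_0\tfrac{\log p}{n}\|\Delta\|_1^2$. Hence RSC holds with $\sigma=\tfrac12\sigma_{\min}(\Sigma)+\gamma_w$ and $\tau_\sigma=\tau\log p/n$, so $\bar\sigma=\sigma-\mu-64\tau_\sigma r=\tfrac12\sigma_{\min}(\Sigma)-c_2\tfrac{r\log p}{n}$, strictly positive once $\tfrac{r\log p}{n}$ is small (this is where $n>c\rho^2\log p$ and the feasible $\ell_1$-radius $2\rho$ enter).

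Finally I would verify the prescribed range of $\lambda$. The relevant gradient is that of the smooth-minus-quadratic part at $\theta^*$: $\nabla F(\theta^*)-\mu\theta^*=\tfrac1n Z^\top(Z\theta^*-y)-\gamma_w\theta^*$, and substituting $y=X\theta^*+\xi$, $z_i=x_i+w_i$ turns this into $\tfrac1n X^\top W\theta^*-\tfrac1n X^\top\xi+(\tfrac1n W^\top W-\gamma_w I)\theta^*-\tfrac1n W^\top\xi$ --- a sum of four mean-zero sub-exponential averages, the $\gamma_w\theta^*$ subtraction having cancelled the $\mathbb E[\tfrac1n W^\top W]\theta^*$ bias. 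A union-bound-over-$p$ Bernstein estimate then bounds its $\ell_\infty$ norm by $c_1\varphi\sqrt{\log p/n}$ with $\varphi=(\sqrt{\sigma_{\max}(\Sigma)}+\sqrt{\gamma_w})(v+\sqrt{\gamma_w}\|\theta^*\|_2)$, matching the first term in the stated lower bound on $\lambda$, while the second term $16\rho\tau\log p/n$ is exactly the extra condition of Theorem~\ref{main_theorem_non_convex} with $L_g=1$. With all hypotheses in force, Theorem~\ref{main_theorem_non_convex} yields $G(\theta^s)-G(\hat\theta)\le\kappa^2$ after $s>3\log\!\big((G(\theta^0)-G(\hat\theta))/\kappa^2\big)/\log(1/\alpha)$ outer loops, with $\alpha,\chi$ read off from~\eqref{contraction factor}--\eqref{chi} at $\mu=\gamma_w$ and tolerance $\kappa^2\ge e^2/(1-\alpha)=64\tau_\sigma\chi r\|\hat\theta-\theta^*\|_2^2/(1-\alpha)$, the whole statement holding on the intersection of the RSC and gradient events, of probability at least $1-c_3/\sqrt n$.

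I expect the RSC verification to be the real obstacle: although conceptually it is ``just'' the corrupted-covariate restricted-eigenvalue bound of \citet{loh2011high}, it requires the sub-exponential concentration machinery for $Z^\top Z/n$ restricted to the $\ell_1$-cone cut out by the side constraint, and one must carefully track how the additive-noise level $\gamma_w$ simultaneously raises the effective curvature of $F$ (to $\tfrac12\sigma_{\min}(\Sigma)+\gamma_w$) and is then exactly offset by the $\mu=\gamma_w$ correction, so that $\bar\sigma$ emerges as $\tfrac12\sigma_{\min}(\Sigma)$ up to the tolerance term. A secondary subtlety, easy to overlook, is that the $\gamma_w\theta^*$ bias in $\nabla F(\theta^*)$ must be removed by the $-\mu\theta^*$ term before the $\ell_\infty$ concentration is applied --- otherwise one only gets an $O(1)$ floor for $\lambda$ rather than the $\sqrt{\log p/n}$ rate that renders the statistical tolerance negligible.
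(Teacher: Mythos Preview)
Your proposal is correct and takes essentially the same approach as the paper: cast corrected Lasso into the non-convex framework with $\mu=\gamma_w$, verify RSC for $\hat\Gamma$ and the $\ell_\infty$ bound on $\hat\Gamma\theta^*-\hat\gamma$ by invoking the results of \citet{loh2011high} (the paper cites their Lemma~12 and Lemma~2 respectively), and then apply Theorem~\ref{main_theorem_non_convex}. You are in fact more careful than the paper on the point you flag as the ``secondary subtlety'': the paper's proof writes ``$\nabla F(\theta^*)=\hat\Gamma\theta^*-\hat\gamma$'', which is literally $\nabla F_\mu(\theta^*)$ rather than the gradient of the convex part $F$, whereas you explicitly track the $\gamma_w\theta^*$ bias and its cancellation by the $-\mu\theta^*$ correction---both routes produce the same $\bar\sigma=\tfrac12\sigma_{\min}(\Sigma)-c_2\,r\log p/n$, but your bookkeeping is cleaner.
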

  We offer some discussions to interpret corollary. 
  \begin{itemize}
  	\item We can easily extend the result to to more general $\Sigma_w$ where $\Sigma_w\preceq\gamma_w I$.
  	\item The requirement of $\lambda$ is similar with the batch counterpart in \cite{loh2013regularized}.
  	\item Similar with Lasso, since $ \frac{r\log p}{n}=o(1) $, $\bar{\sigma}>0$ is easy to satisfy.
  	\item Concretely, suppose we have $ \bar{\sigma}=0.3$ $\gamma_w=0.1$ , $m\approx 10\frac{L}{\bar{\sigma}}$  and $\beta=\frac{1}{20L}$ , we have 
  	$ \alpha\approx 0.68 $ and $ \chi\approx 1.2.$ Thus we have $e^2=o( \frac{r\log p}{n}) \|\hat{\theta}-\theta^*\|_2^2.$ Again, it indicates the objective gap decreases geometrically up to the scale $o(\|\hat{\theta}-\theta^*\|_2^2)$, i.e., dominated by the  statistical error of the model.
  \end{itemize}
 \section{Experimental results}
 We report some numerical experimental results on  in this subsection. The main objective of the numerical experiments is to validate our theoretic findings~--~that for a class of non-strongly-convex or non-convex optimization problems, SVRG indeed achieves desirable linear convergence. Further more, when the problem is ill-conditioned, SVRG is much better than the batched gradient method. We test SVRG  on synthetic and real datasets and compare the results with those of several other algorithms. Specifically, we implement the following algorithms.
 
 \begin{itemize}
 	\item SVRG: We implement Algorithm \ref{alg:convex}, which is the proximal SVRG proposed in \cite{xiao2014proximal}.
 	\item Composite gradient method: This is  the full proximal gradient method. \citet{agarwal2010fast} established its linear convergence in a setup similar to the convex case we consider (i.e., without strong  convexity).
 	\item SAG: We adapt the stochastic average gradient method \cite{schmidt2013minimizing} to a proximal variant. Note that to the best of our knowledge,  the convergence Prox-SAG has not been established in literature. In particular, it is not known whether this method converges linearly in our setup. Yet, our numerical results  seem to suggest that the algorithm does enjoy linear convergence. 
 	
 	\item Prox-SGD: Proximal stochastic gradient method. It converges sublinearly in our setting. 
 	\item RDA: Regularized dual averaging method \cite{xiao2010dual}.  It converges sublinearly  in our setting. 
 \end{itemize}
 
 For the algorithm with constant learning rate (SAG, SVRG, Composite Gradient), we tune the learning rate from an exponential grid $\{ 2, \frac{2}{2^1},...,\frac{2}{2^{12}} \}$ and chose the one with the best performance.
 Notice we do not include another popular algorithm SDCA \cite{shalev2014accelerated} in our experiments, because the proximal step in SDCA requires  strong convexity of $\psi(\theta)$ to implement.
 \subsection{Synthetic Data}
 \subsubsection{Lasso}
 We first tested solving Lasso on synthetic data. We generate data as follows: $ y_i=x_i^T\theta^*+\xi_i$, where each data point $x_i\in \mathbb{R}^p$ is drawn from normal distribution $N(0,\Sigma)$, and the noise $ \xi_i$ is   drawn from $N(0,1)$. The coefficient $\theta^*$ is sparse with cardinality $r$, where the non-zero coefficient equals to $\pm 1$ generated from the Bernoulli distribution with parameter $0.5$. For the covariance matrix $\Sigma$, we  set the diagonal entries to $1$, and the off-diagonal entries to $b$ (notice when $b\neq 0$, the problem may be ill). The sample size is $n=2500$, and the dimension of problem is $p=5000$. Since $p>n$, the objective function is clearly not strongly convex.
 
 \begin{figure}
 	\begin{subfigure}[b]{0.5\textwidth}
 		\includegraphics[width=\textwidth]{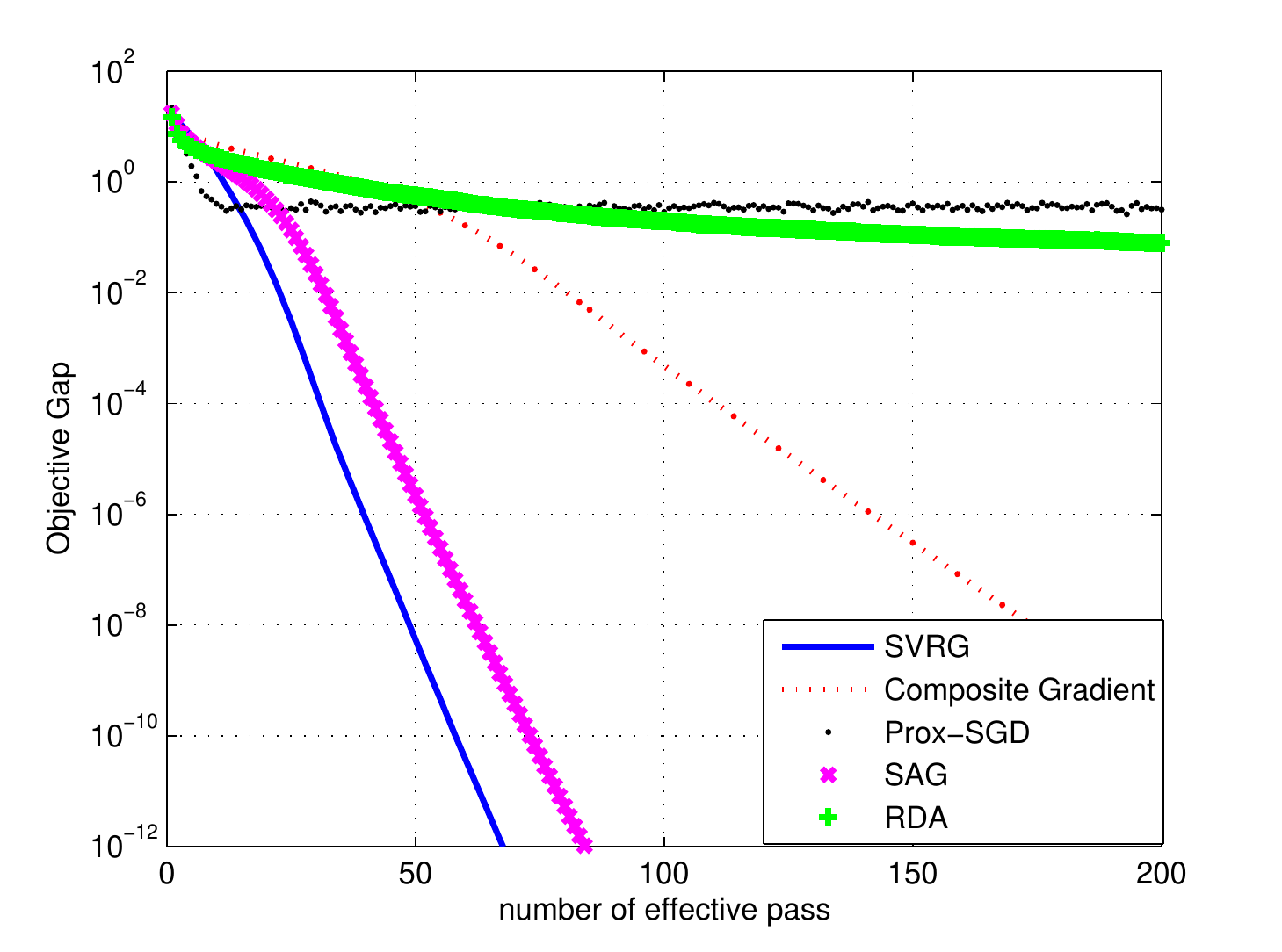}
 		\caption{r=50,b=0}
 	\end{subfigure}
 	\begin{subfigure}[b]{0.5\textwidth}
 		\includegraphics[width=\textwidth]{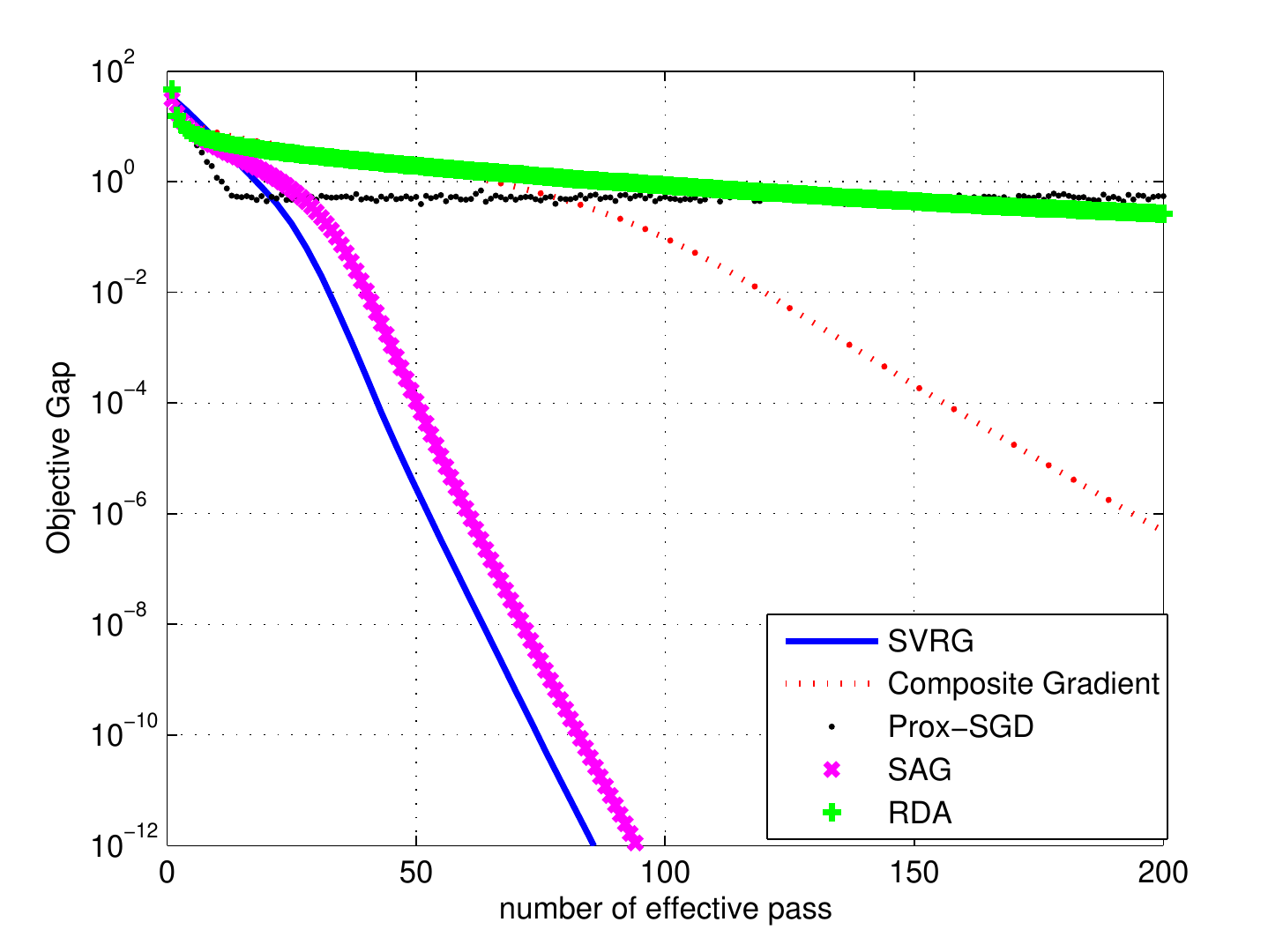}
 		\caption{r=100,b=0}
 	\end{subfigure}
 	\begin{subfigure}[b]{0.5\textwidth}
 		\includegraphics[width=\textwidth]{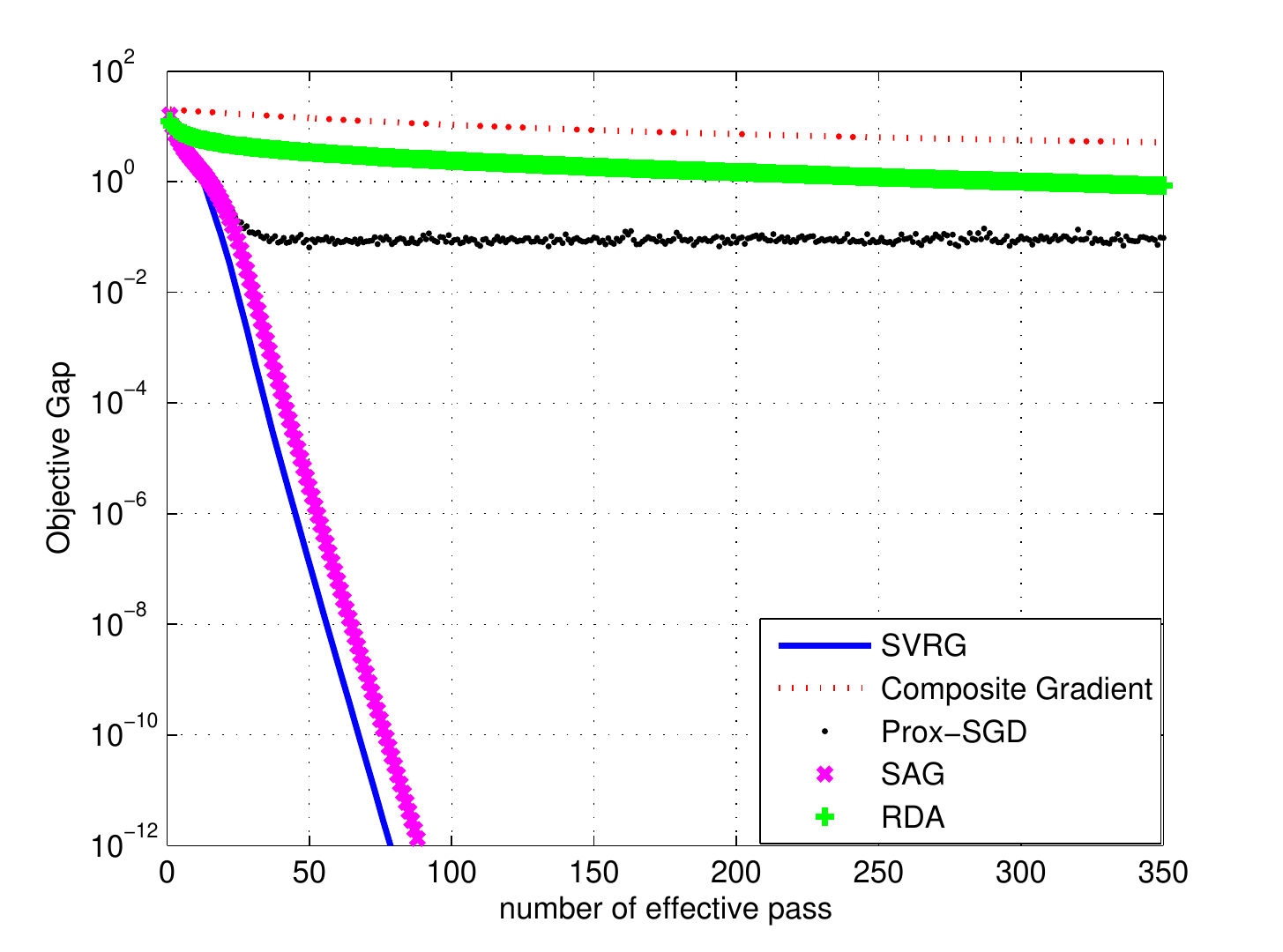}
 		\caption{r=50,b=0.1}
 	\end{subfigure}~~~
 	\begin{subfigure}[b]{0.5\textwidth}
 		\includegraphics[width=\textwidth]{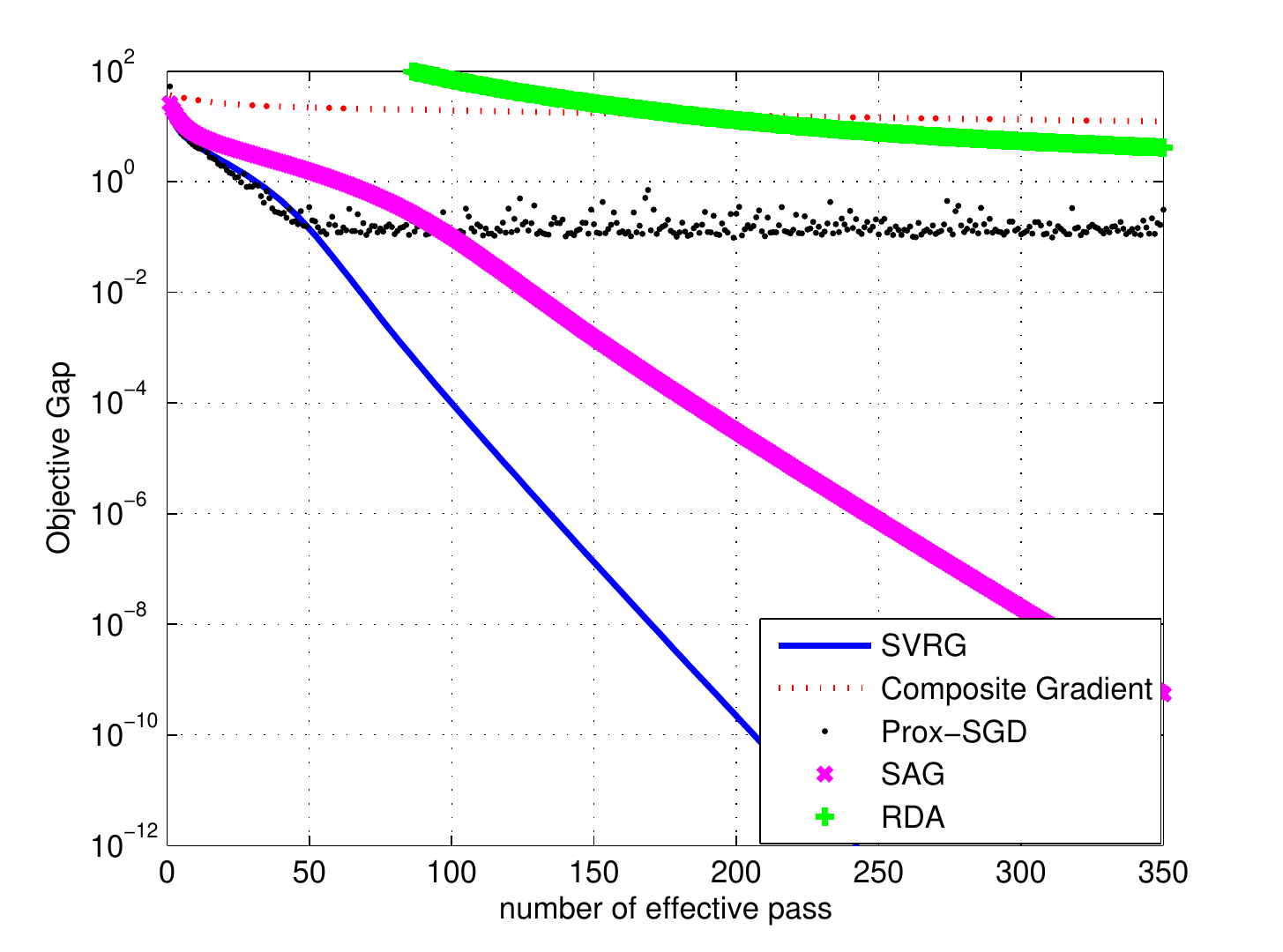}
 		\caption{r=100,b=0.4}
 	\end{subfigure}
 	\caption{Comparison between five algorithms on Lasso. The x-axis is the number of pass over the dataset. y-axis is the objective gap $G(\theta_k)-G(\hat{\theta})$ with a log scale. In figure (a), r=50,b=0 . In figure (b), r=100,b=0. In figure (c), r=50,b=0.1. In figure (d), r=100,b=0.4 . } \label{lasso}
 \end{figure}
 
 In Figure \ref{lasso},  for different values of $r$ and $b$, we report the objective gap $G(\theta_k)-G(\hat{\theta})$ versus the number of passes of the dataset for the algorithms mentioned above. We evaluate $G(\hat{\theta})$ by running SVRG long enough (more than 500 effective passes). Clearly the objective gap of SVRG decreases geometrically, which validates our theoretic findings.  We also observe that when $r$ is larger, SVRG converges slower, compared with  smaller $r$. This agrees with our theorem, as $r$ affects the value of $ \bar{\sigma}$ and hence the contraction factor $ \alpha$. In particular, small $r$ leads to small $\alpha$ thus the algorithm enjoys a faster convergence speed.   The composite gradient method, which uses full information at each iteration, converges linearly in (a) and (b) but with a slower rate. This agrees with the common phenomenon that stochastic variance reduction methods typically converges faster (w.r.t.\ the number of passes of datasets). In (c) and (d), its performance deteriorate significantly due to the large condition number when $b$ is not zero. The optimality gaps of SGD and RDA decrease slowly, indicating lack of linear convergence, due to   large variance of  gradients.  
 We make one interesting observation about SAG: it has a similar performance to that of SVRG in our setting, strongly suggesting that it may be possible to establish linear convergence of SAG under the RSC condition. However, we stress that the goal of the experiments is to validate our analysis of SVRG, rather than comparing SVRG with SAG. 
 
 \subsubsection{Group Lasso}
 We now report results on group sparsity case, in particular the empirical performance of different algorithms to solve Group Lasso. Similar to the above example, we have $p=5000$ and $n=2500$ and each feature is generated from the normal distribution $N(0,\Sigma)$, where $\Sigma_{ii}=1$ and $\Sigma_{ij}=b, i\neq j.$ The cardinality of the non-zero group is $s_{\mathcal{G}}$, and the size of each group is $q$ . In Figure \ref{gp_lasso}, we report results on different settings of  cardinality  $s_{\mathcal{G}}$ and the covariance matrix $\Sigma$ and $q$.
 \begin{figure}
 	\begin{subfigure}[b]{0.5\textwidth}
 		\includegraphics[width=\textwidth]{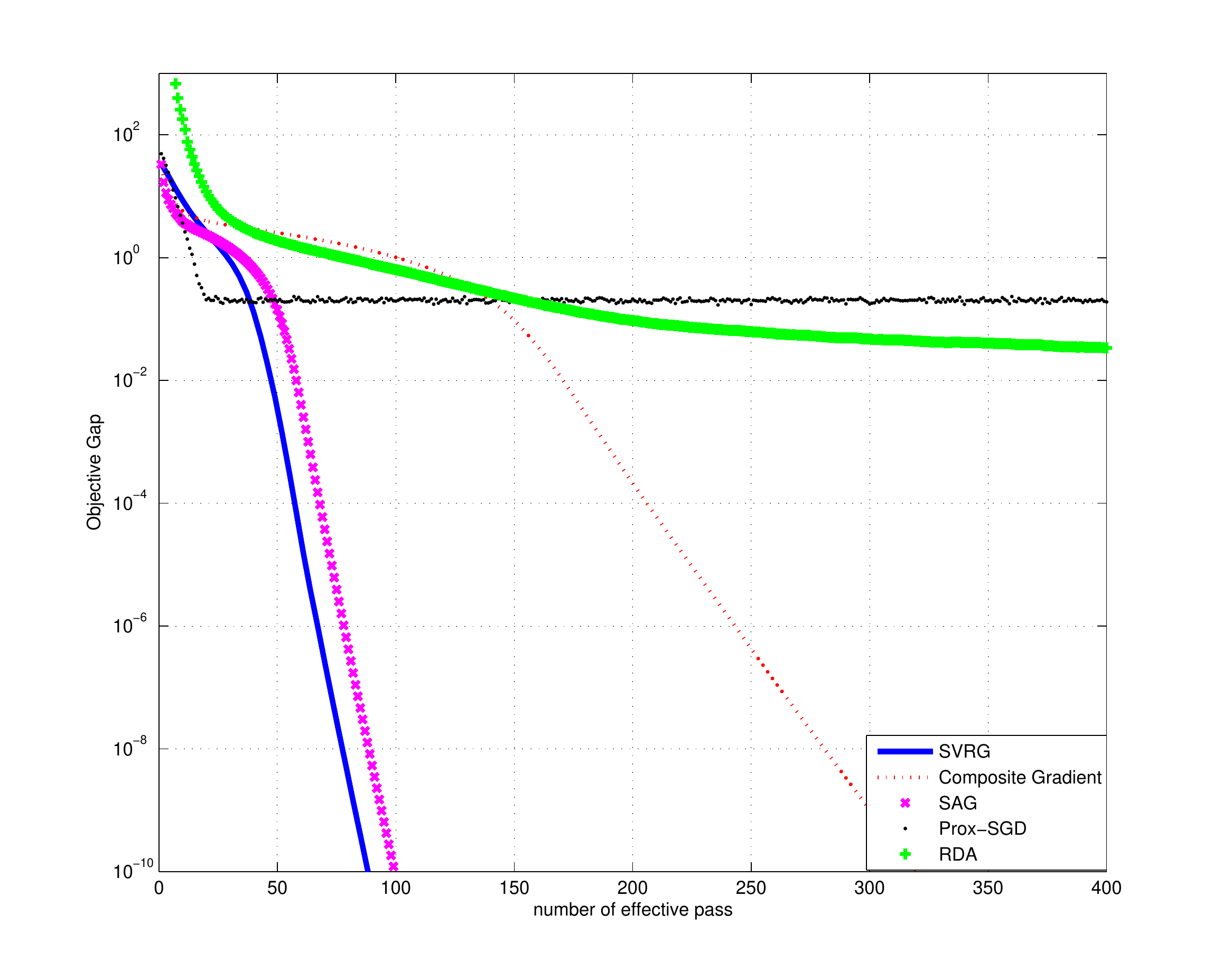}
 		\caption{$s_{\mathcal{G}}$=10,q=10, b=0}
 	\end{subfigure}
 	\begin{subfigure}[b]{0.5\textwidth}
 		\includegraphics[width=\textwidth]{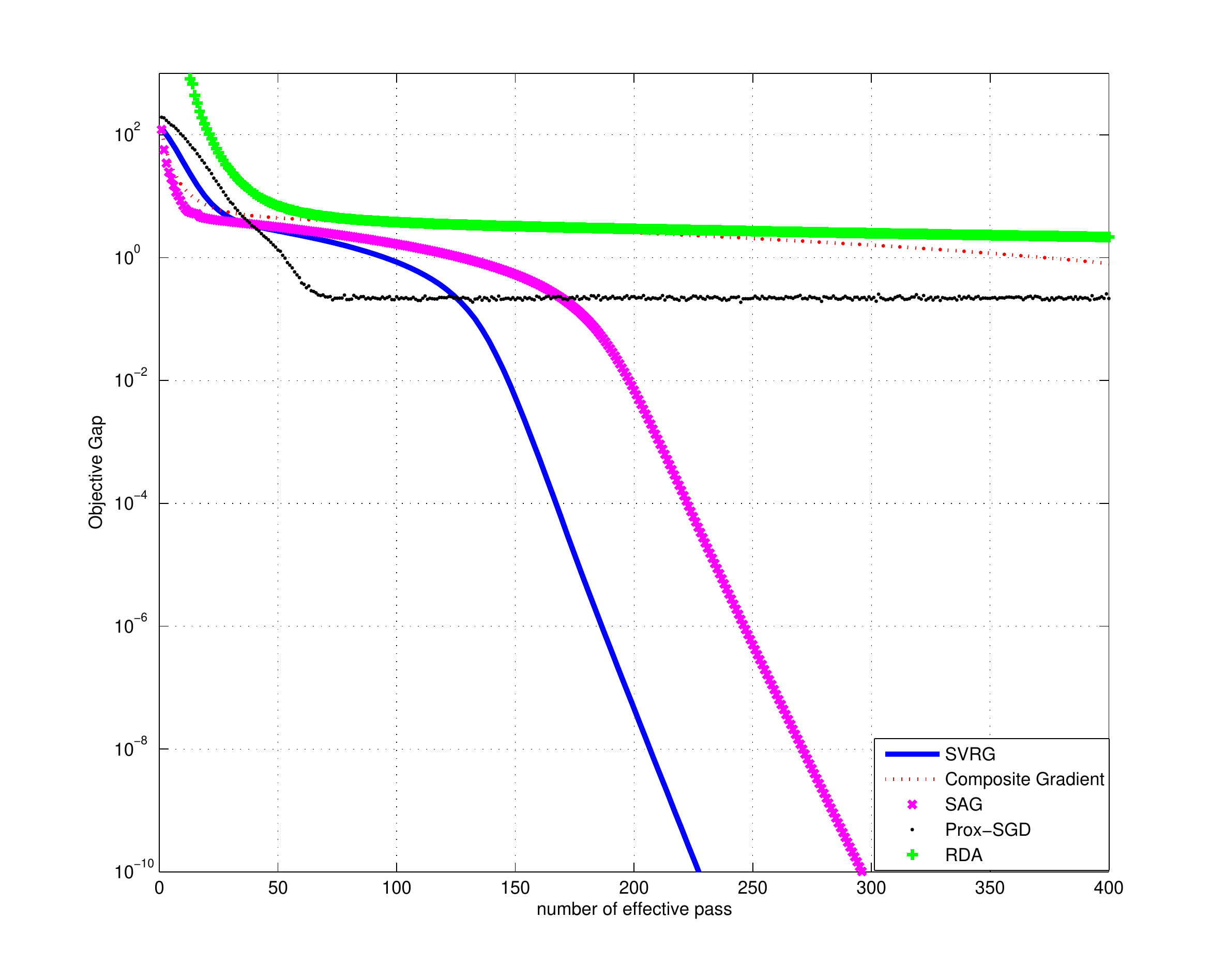}
 		\caption{$s_{\mathcal{G}}$=20, q=20, b=0}
 	\end{subfigure}
 	\begin{subfigure}[b]{0.5\textwidth}
 		\includegraphics[width=\textwidth]{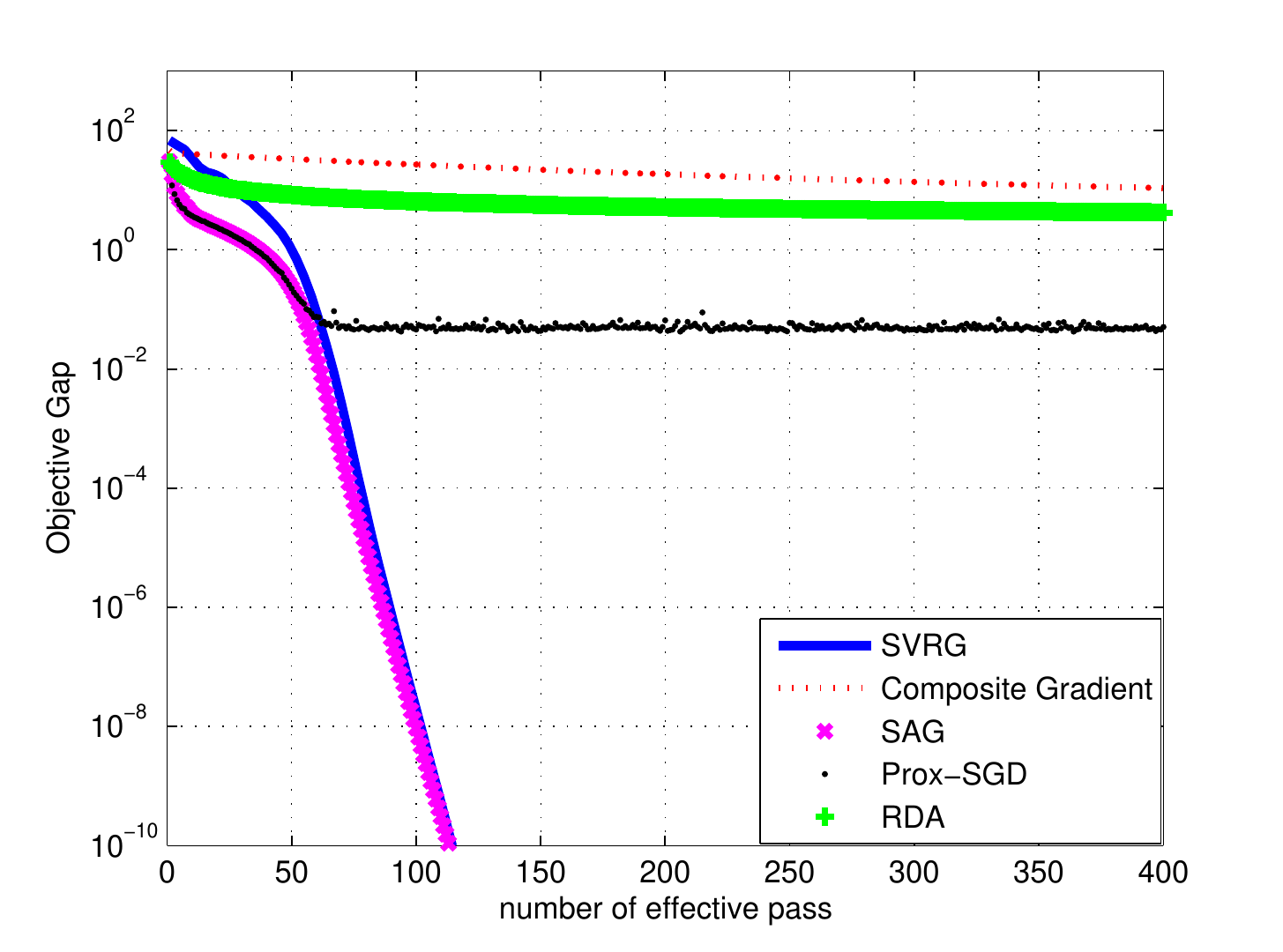}
 		\caption{$s_{\mathcal{G}}$=10, q=10, b=0.1}
 	\end{subfigure}~~~
 	\begin{subfigure}[b]{0.5\textwidth}
 		\includegraphics[width=\textwidth]{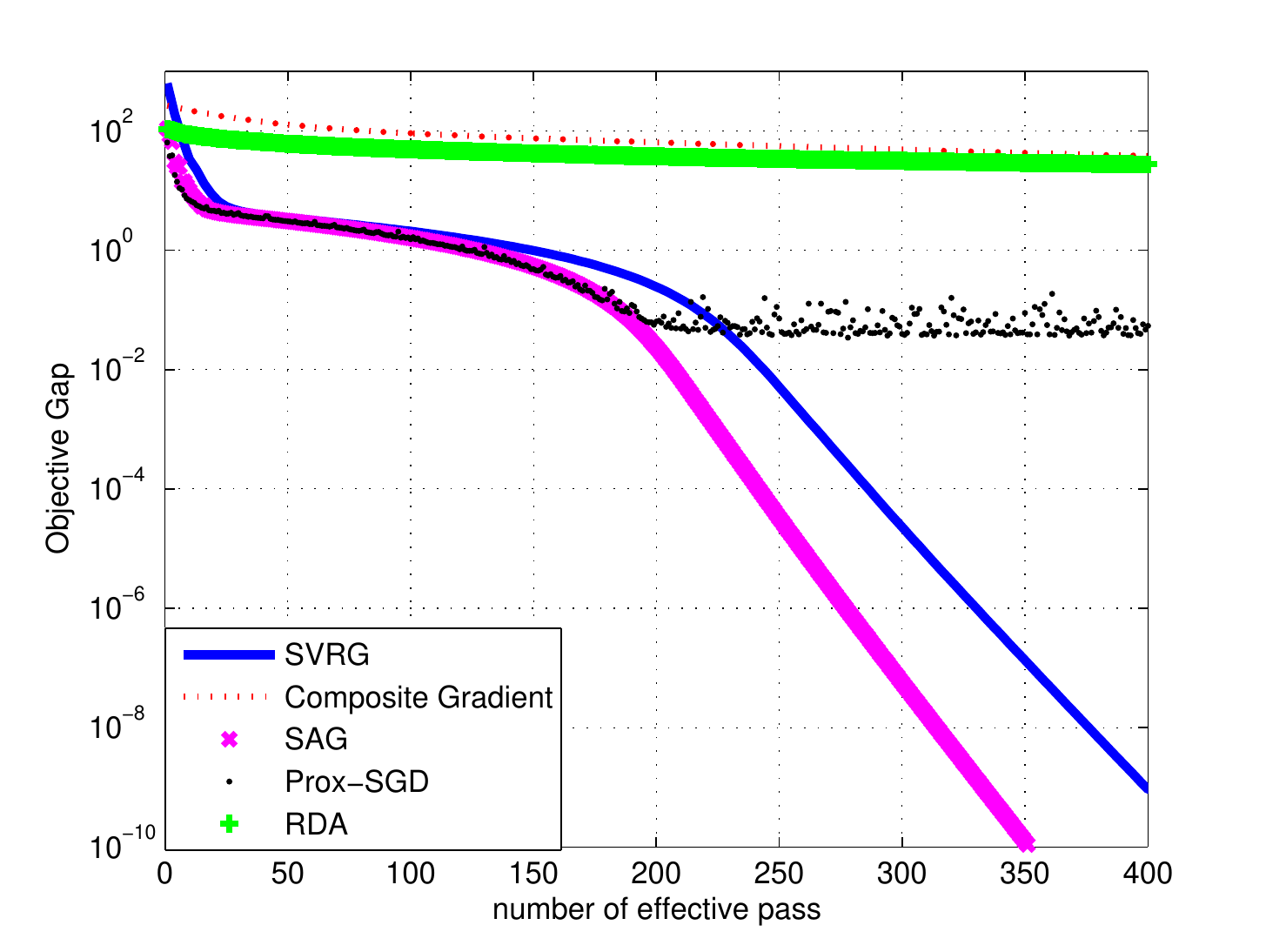}
 		\caption{$s_{\mathcal{G}}$=20, q=20, b=0.4}
 	\end{subfigure}
 	\caption{Comparison between five algorithms on group Lasso. The x-axis is the number of pass over the dataset. y-axis is the objective gap $G(\theta_k)-G(\hat{\theta})$ with  log scale. In figure (a), $s_{\mathcal{G}}$=10, q=10, b=0 . In figure (b), $s_{\mathcal{G}}$=20, q=20, b=0. In figure (c), $s_{\mathcal{G}}$=10, q=10, b=0.1. In figure (d), $s_{\mathcal{G}}$=20, q=20, b=0.4 . } \label{gp_lasso}
 \end{figure}
 In (a), similar to the   Lasso case, SVRG and SAG converge with linear rates, and  have similar performance. On the other hand, SGD and RDA converge slowly  due to the variance of the gradient. In (b), we observe that composite gradient method converge much slower. It is possibly because  the contraction factor of composite gradient method is close to 1 in this setting as the $\theta^*$  becomes less sparse.  In (c) and (d) the composite gradient method does not work due to the large condition number.

 \subsubsection{Corrected Lasso}
 
 We generate data as follows: $ y_i=x_i^T\theta^*+\xi_i$, where each data point $x_i\in \mathbb{R}^p$ is drawn from normal distribution $N(0,I)$, and the noise $ \xi_i$ is   drawn from $N(0,1)$. The coefficient $\theta^*$ is sparse with cardinality $r$, where the non-zero coefficient equals to $\pm 1$ generated from the Bernoulli distribution with parameter $0.5$. We set covariance matrix $\Sigma_w=\gamma_w I$. We choose $\lambda=0.05$ in the formulation. The result is presented in Figure \ref{fig:corrected_lasso}.
 \begin{figure}
 	\begin{subfigure}[b]{0.45\textwidth}
 		\centering
 		\includegraphics[width=\textwidth]{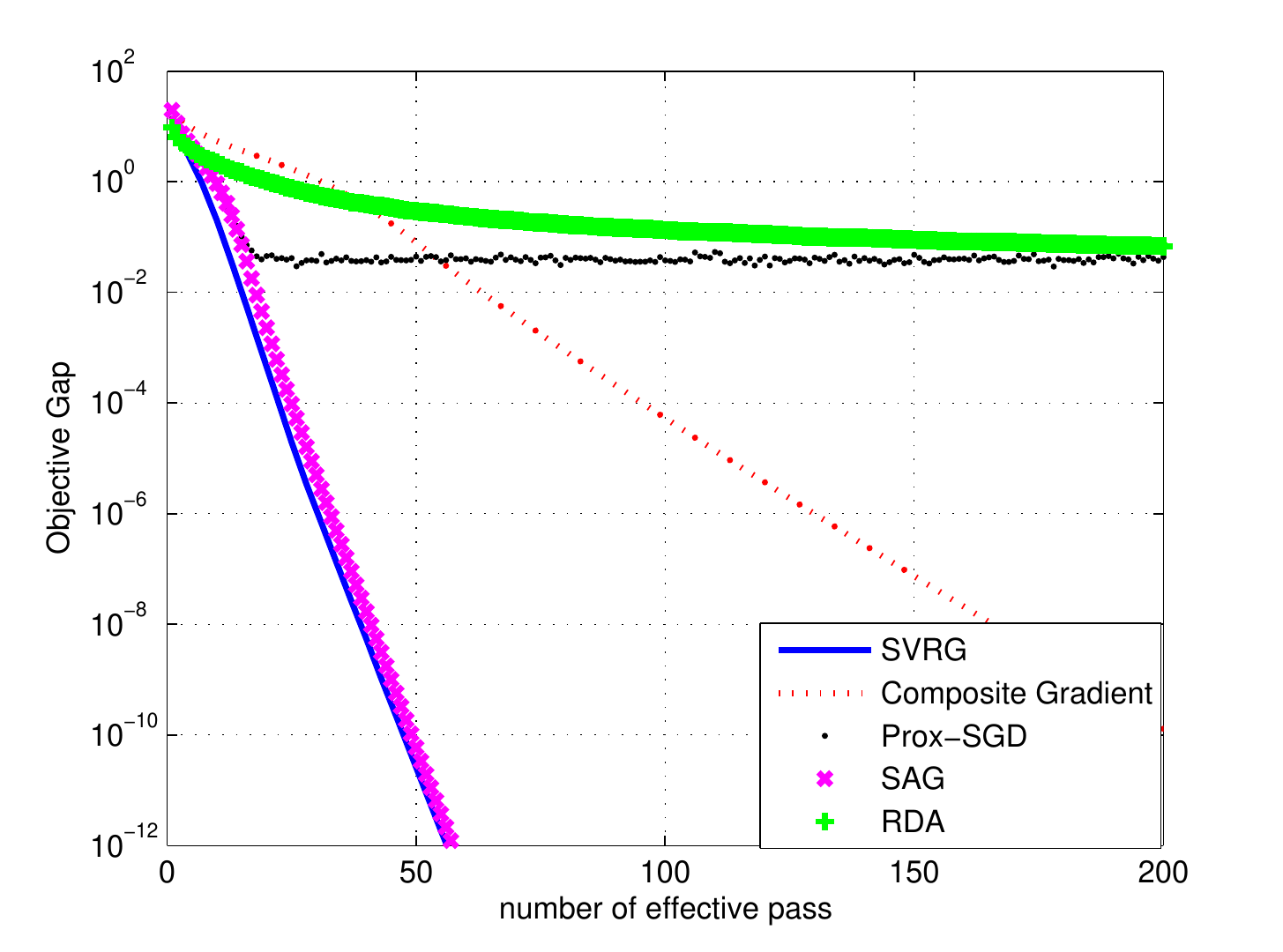}
 		\caption{$n=2500, p=3000, r=50, \gamma_w=0.05$}
 	\end{subfigure}
 	\begin{subfigure}[b]{0.45\textwidth}
 		\centering
 		\includegraphics[width=\textwidth]{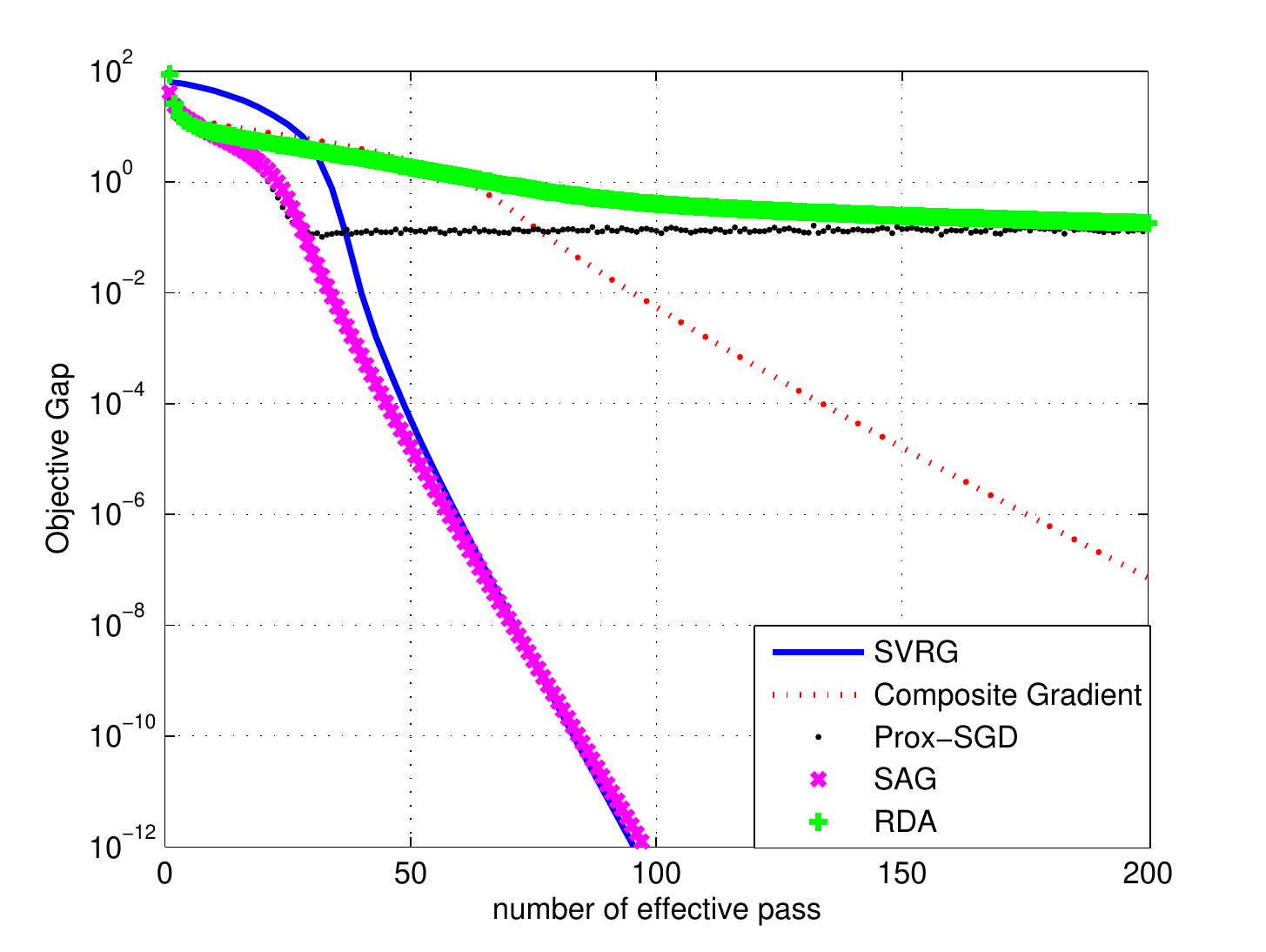}
 		\caption{$n=2500,p=5000, r=100,\gamma_w=0.1$}
 	\end{subfigure}
 	\caption{Results on Corrected Lasso. The x-axis is the number of pass over the dataset. y-axis is the objective gap $G(\theta_k)-G(\hat{\theta})$ with  log scale. We try two different settings. In the first figure $n=2500, p=3000, r=50, \gamma_w=0.05$.in the second figure $n=2500,p=5000, r=100,\gamma_w=0.1$.}\label{fig:corrected_lasso}
 \end{figure}
 
 In both figures (a) and (b), SVRG, SAG and Composite gradient converge linearly. According to our theory, as $\bar{\sigma}$ in figure (a) is larger than that in figure (b), and $\frac{\gamma_w}{\bar{\sigma}}$ in figure (a) is smaller than that in (b),   SVRG should converge faster in the setting of figure (a), which matches our simulation result. SGD and RDA have large optimality gaps. 
 \subsubsection{SCAD}
 The way to generate data is same with Lasso. Here  $x_i\in \mathbb{R}^p$ is drawn from normal distribution $N(0,2I)$ (Here We choose $2I$ to satisfy the requirement of $\bar{\sigma}$ and $\mu$ in our Theorem, although if we choose $N(0, I)$, the algorithm still works. ). $\lambda=0.05$ in the formulation.  We present the result in Figure \ref{fig:SCAD}, for two settings on $n$, $p$, $r$, $\zeta$. Note that $\bar{\sigma}\geq 0.5$ and $\frac{1}{\zeta-1}\leq 0.5$ in both cases, thus our theorem asserts that SVRG  converge linearly under appropriate choices of $\beta$ and $m$.
 
 \begin{figure}
 	\begin{subfigure}[b]{0.45\textwidth}
 		\centering
 		\includegraphics[width=\textwidth]{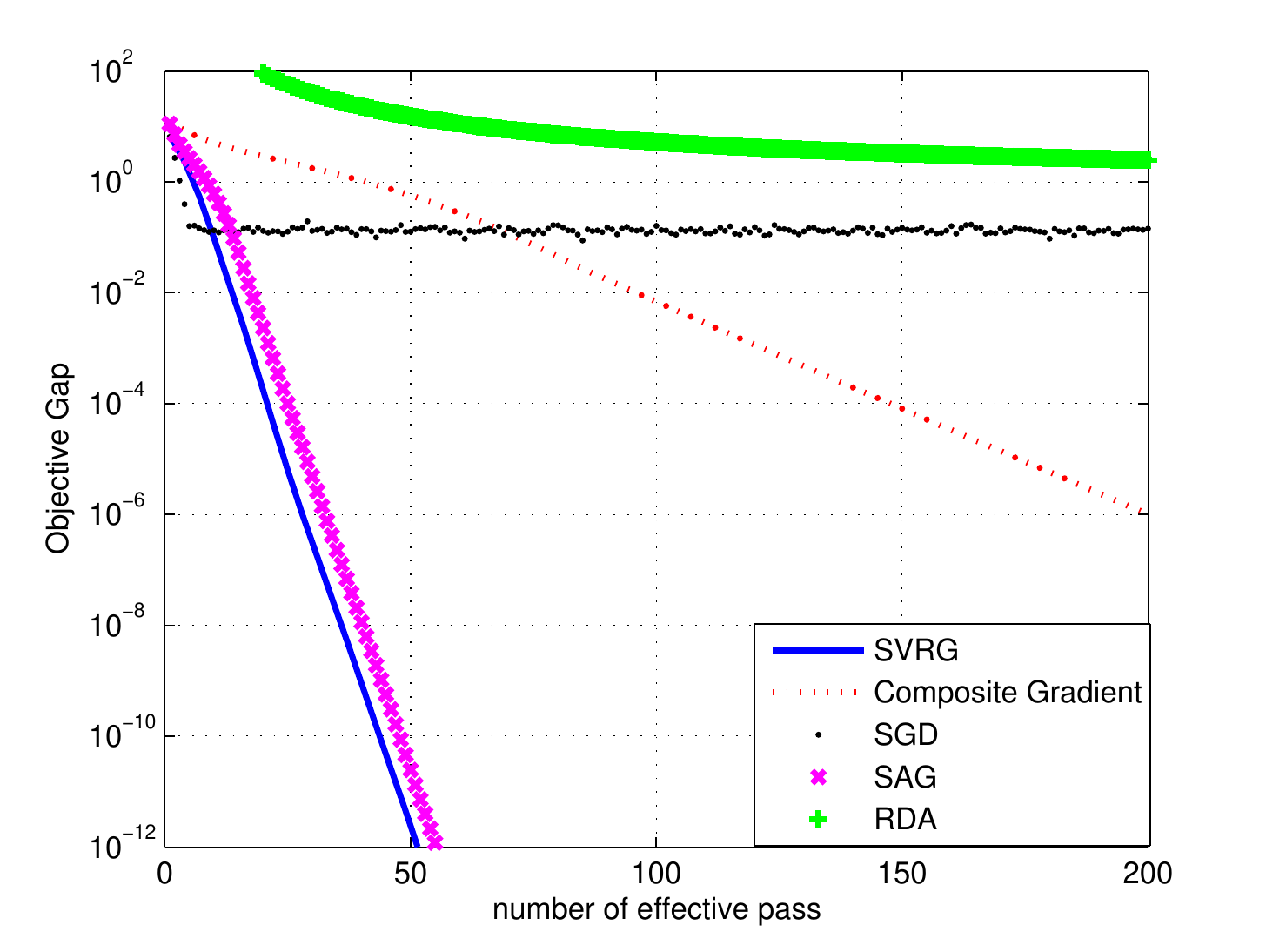}
 		\caption{$n=3000, p=2500, r=30, \zeta=4.5 $ }
 	\end{subfigure}
 	\begin{subfigure}[b]{0.45\textwidth}
 		\centering
 		\includegraphics[width=\textwidth]{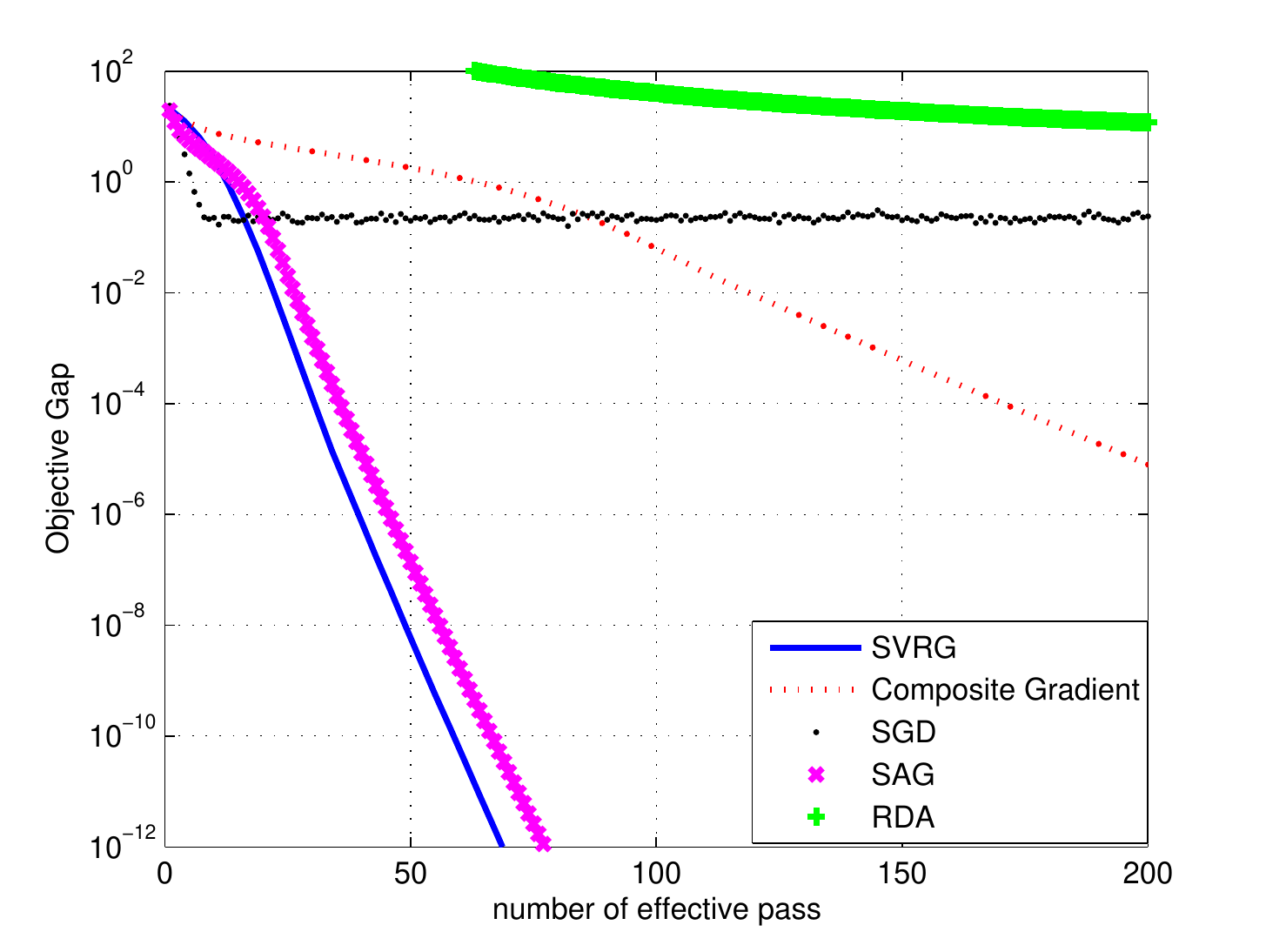}
 		\caption{$n=2500, p=5000, r=50, \zeta=3.7 $  }
 	\end{subfigure}
 	\caption{Results on SCAD. The x-axis is the number of pass over the dataset. y-axis is the objective gap $G(\theta_k)-G(\hat{\theta})$ with  log scale. } \label{fig:SCAD}
 \end{figure} 
 
 We observe from Figure \ref{fig:SCAD} that in both cases, SVRG, SAG converges with linear rates and have similar performance. The composite gradient method also converges linearly but with a slower speed. SGD and RDA have large optimality gaps. 
 
 \subsection{Real data}
 This section presents results of several numerical experiments on real datasets.
 
 \subsubsection{Sparse classification problem}
 The first problem we consider is sparse classification. In particular, we apply logistic regression with $l_1$ regularization on rcv1  ($n=20242,d=47236$) \cite{lewis2004rcv1} and sido0 ($n=12678, d=4932$) \cite{SIDO} datasets for the binary classification problem, i.e.,
 $$ G(\theta)=\frac{1}{n}\sum_{i=1}^{n}\log (1+\exp(-y_i \langle x_i,\theta \rangle)+\lambda \|\theta\|_1. $$
 \begin{figure}
 	\centering
 	\includegraphics[width=0.5\textwidth]{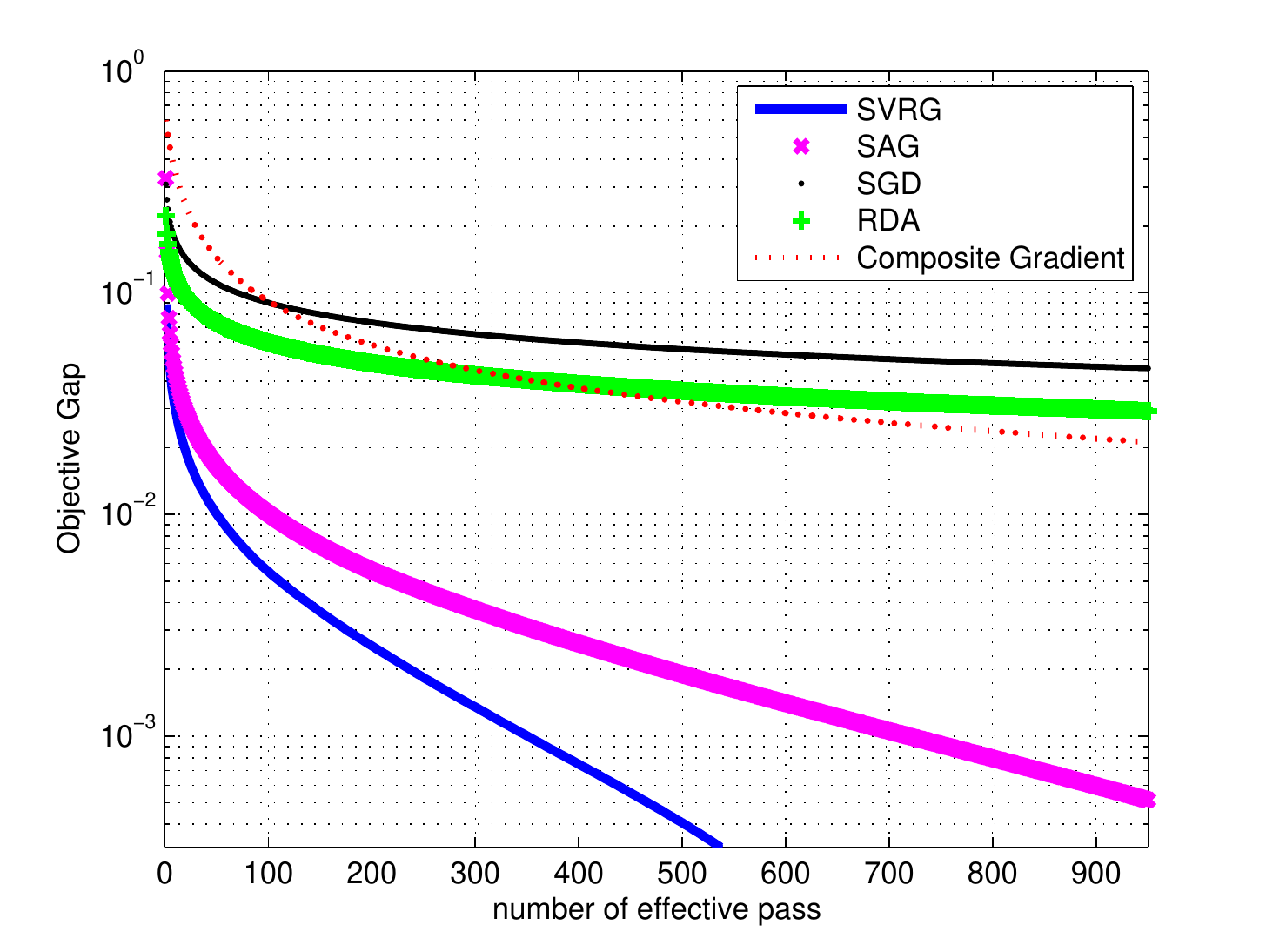}
 	\caption{Different methods on rcv1 dataset.}\label{rcv1}
 \end{figure}
 We choose $\lambda=2\cdot10^{-5}$ in  rcv1 dataset and $\lambda=10^{-4}$ in sido0 dataset suggested in \citet{xiao2014proximal}.
 
 Figure \ref{rcv1} shows  the performance of five algorithms on  rcv1 dataset. The x-axis is the number of passes over the dataset, and the y-axis is the optimality gap in log-scale. In the experiment we choose $m=2n$ for SVRG. Among all five algorithms, SVRG performs best followed by SAG. The composite gradient method does not perform well in this  dataset. RDA and SGD converge slowly and the error of them remains large even after 1000 passes of the entire dataset.

 \begin{figure}[t]
 	\centering
 	\includegraphics[width=0.5\textwidth]{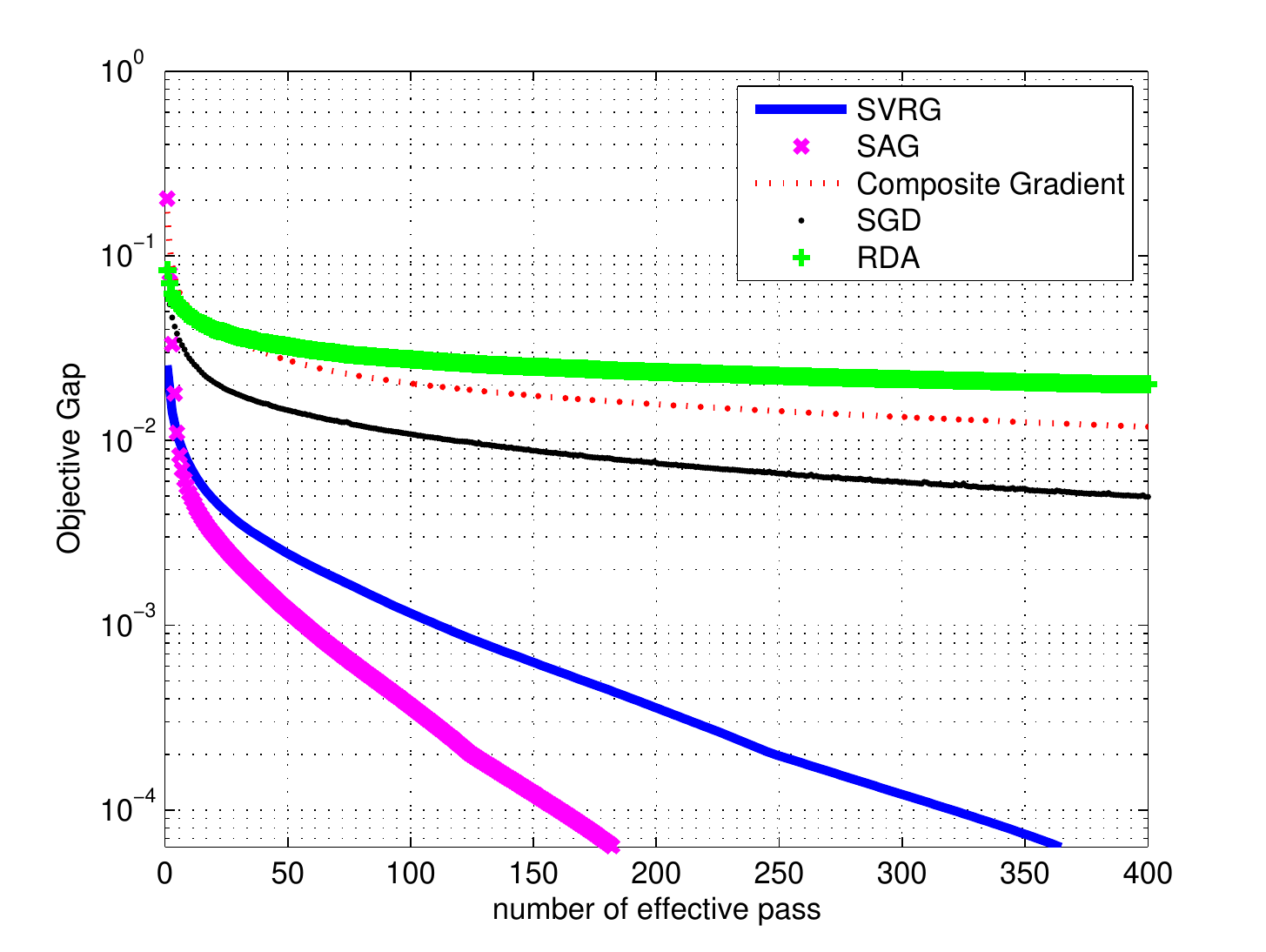}
 	\caption{Different methods on sido0 dataset.}\label{sido0}
 \end{figure}
 Figure \ref{sido0} reports results on sido0 dataset.  On this dataset, SAG outperforms SVRG. We also observe that SGD outperforms  composite gradient. The RDA converges with the slowest rate.
 
 \subsubsection{Group sparse regression}
 
 We consider a group sparse regression problem on the Boston Housing dataset \cite{uci:2013}.  As suggested in \citet{swirszcz2009grouped,xiang2014simultaneous}, to take into account the non-linear relationship between variables and the response, up to third-degree polynomial expansion is applied on each feature. In particular, terms $x$, $x^2$ and $x^3$ are grouped together. We consider group Lasso model on this problem with $\lambda=0.1$. We choose the setting $m=2n$ in SVRG.
 
 \begin{figure}[t]
 	\centering
 	\includegraphics[width=0.5\textwidth]{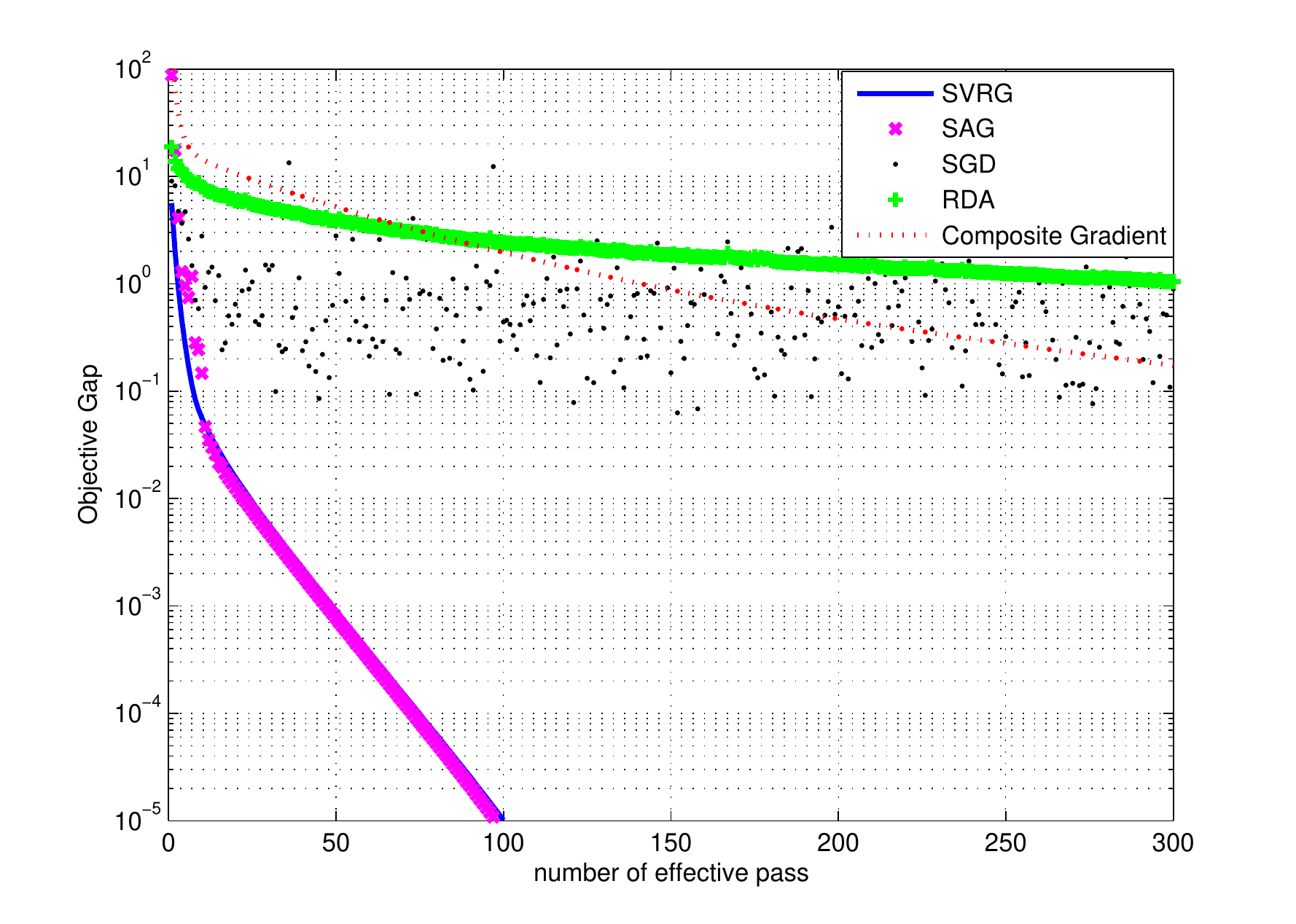}
 	\caption{Five different algorithms on Boston Housing dataset.}\label{boston_housing}
 \end{figure}
 
 In Figure \ref{boston_housing}, we show the objective gap of various algorithms versus the number of passes over the dataset. SVRG and SAG have almost identical performance. SGD fails to converge~--~the optimality gap oscillates between 0.1 and 1. Both the composite gradient method and RDA converge slowly.

 \section{Conclusion and future work}
 In this paper, we analyzed a state-of-art stochastic first order optimization algorithm SVRG where the objective function is not strongly convex, or even non-convex. We established linear convergence of SVRG exploiting the concept of {\em restricted strong convexity}. Our setup naturally includes several important statistical models such as Lasso,  group sparse regression and SCAD, to name a few. We further validated our theoretic findings with numerical experiments on synthetic and real datasets. 
 
 \appendix
 
 \section{Phase transition of linear rate and sub-linear rate in Lasso}\label{app.PT}
 
 We generate data as follows: $ y_i=x_i^T\theta^*+\xi_i$, where each data point $x_i\in \mathbb{R}^p$ is drawn from normal distribution $N(0,I)$, and the noise $ \xi_i$ is   drawn from $N(0,1)$. The coefficient $\theta^*$ is sparse with cardinality $r$, where the non-zero coefficient equals to $\pm 1$ generated from the Bernoulli distribution with probability $0.5$.  The sample size is $n=2500$, and the dimension of problem is $p=5000$.
 
 In Figure~\ref{phase}, we increase $r$ from 500 to 1500 and plot the convergence rate of SVRG. We observe a phase transition from linear convergence to sublinear convergence happening between $r=750$ and $r=1000$. This phenomena is captured  by our theorem: When $r$ is too large, the requirement $\bar{\sigma}=\frac{1}{2}\sigma_{\min}(\Sigma)- c_1\nu(\Sigma)\frac{ r\log p}{n}\geq 0 $ breaks.
 
 \begin{figure}[h]
 	\centering
 	\includegraphics[width=0.45\textwidth]{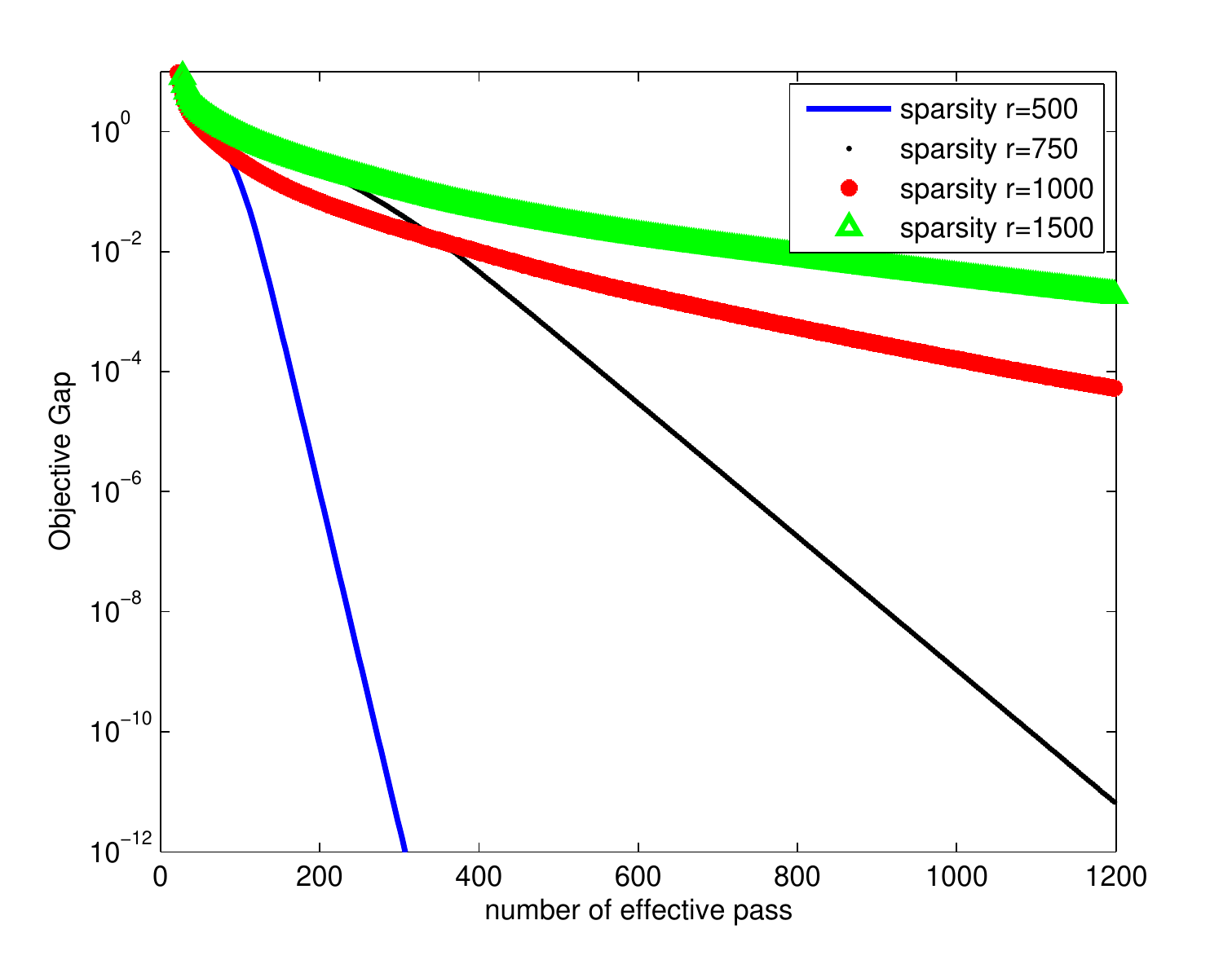}
 	\caption{ Phase transition of linear rate. The x-axis is the number of pass over the dataset. y-axis is the objective gap $G(\theta_k)-G(\hat{\theta})$ with  log scale.}\label{phase}
 \end{figure}



 \section{Proofs}\label{app.proof}
 We provide in this section proofs to all results presented.
 \subsection{SVRG with convex objective function}
 Remind the objective function we aim to optimize is 
 \begin{equation}\label{app.obj2}
 \begin{split}
 \min_{\psi(\theta)\leq \rho} F(\theta)+\lambda\psi(\theta)= \frac{1}{n}\sum_{i=1}^{n} f_i(\theta)+\lambda\psi(\theta).
 \end{split}
 \end{equation}
 We denote 
 $$ \hat{\theta}=\arg\min_{\psi(\theta)\leq \rho} F(\theta)+\lambda\psi(\theta). $$
 The following technical lemma is well-known in SVRG to bound the variance of the modified stochastic gradient $v_k$. It is indeed Corollary 3 in \cite{xiao2014proximal}, which we present here for completeness. 
 \begin{lemma}\label{lemma.variance}
 	Consider   $v_{k-1}$ defined in the algorithm 1. Conditioned on $\theta_{k-1}$, we have $\mathbb{E}v_{k-1}=\nabla F(\theta_{k-1}) $, and 
 	$$ \mathbb{E}\|v_{k-1}-\nabla F(\theta_{k-1})\|_2^2\leq 4L[G(\theta_{k-1})-G(\hat{\theta})+G(\tilde{\theta})-G(\hat{\theta})]. $$
 \end{lemma}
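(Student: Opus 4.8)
The plan is to prove Lemma~\ref{lemma.variance} by the standard SVRG variance-bounding argument, which reduces the variance of $v_{k-1}$ to a sum of squared gradient deviations controlled by smoothness and convexity. First I would verify unbiasedness: since $i_k$ is drawn uniformly from $\{1,\dots,n\}$ and is independent of $\theta_{k-1}$, we have $\mathbb{E}[\nabla f_{i_k}(\theta_{k-1})] = \nabla F(\theta_{k-1})$ and $\mathbb{E}[\nabla f_{i_k}(\tilde\theta)] = \nabla F(\tilde\theta) = \tilde v$, so $\mathbb{E} v_{k-1} = \nabla F(\theta_{k-1}) - \nabla F(\tilde\theta) + \tilde v = \nabla F(\theta_{k-1})$.

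Next I would bound the variance. Write $v_{k-1} - \nabla F(\theta_{k-1}) = \bigl(\nabla f_{i_k}(\theta_{k-1}) - \nabla f_{i_k}(\hat\theta)\bigr) - \bigl(\nabla f_{i_k}(\tilde\theta) - \nabla f_{i_k}(\hat\theta)\bigr) - \mathbb{E}\bigl[\nabla f_{i_k}(\theta_{k-1}) - \nabla f_{i_k}(\tilde\theta)\bigr]$, where $\hat\theta$ is inserted as a pivot. Using $\mathbb{E}\|X - \mathbb{E} X\|_2^2 \le \mathbb{E}\|X\|_2^2$ and the elementary inequality $\|a+b\|_2^2 \le 2\|a\|_2^2 + 2\|b\|_2^2$, one gets
\begin{equation*}
\mathbb{E}\|v_{k-1} - \nabla F(\theta_{k-1})\|_2^2 \le 2\,\mathbb{E}\|\nabla f_{i_k}(\theta_{k-1}) - \nabla f_{i_k}(\hat\theta)\|_2^2 + 2\,\mathbb{E}\|\nabla f_{i_k}(\tilde\theta) - \nabla f_{i_k}(\hat\theta)\|_2^2.
\end{equation*}
Then I would invoke the standard consequence of $L$-smoothness and convexity of each $f_i$ (Theorem~2.1.5 in Nesterov's book, or the ``co-coercivity''-type bound): for any $\theta$,
\begin{equation*}
\frac{1}{n}\sum_{i=1}^n \|\nabla f_i(\theta) - \nabla f_i(\hat\theta)\|_2^2 \le 2L\Bigl[F(\theta) - F(\hat\theta) - \langle \nabla F(\hat\theta), \theta - \hat\theta\rangle\Bigr].
\end{equation*}
Taking expectations over $i_k$ and applying this bound at $\theta = \theta_{k-1}$ and $\theta = \tilde\theta$ gives a bound in terms of the Bregman divergences of $F$ at $\hat\theta$. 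Finally I would pass from the Bregman divergence of $F$ to the objective gap of $G$: since $\hat\theta$ minimizes $G$ over $\{\psi(\theta)\le\rho\}$, the first-order optimality condition yields $\langle \nabla F(\hat\theta) + \lambda \xi, \theta - \hat\theta\rangle \ge 0$ for some $\xi \in \partial\psi(\hat\theta)$ and all feasible $\theta$; combined with convexity of $\psi$ this gives $F(\theta) - F(\hat\theta) - \langle \nabla F(\hat\theta), \theta - \hat\theta\rangle \le G(\theta) - G(\hat\theta)$ for feasible $\theta$. Since both $\theta_{k-1}$ and $\tilde\theta$ are feasible (they lie in $\{\psi(\theta)\le\rho\}$ by construction of the algorithm), applying this at both points and summing yields exactly $4L[G(\theta_{k-1}) - G(\hat\theta) + G(\tilde\theta) - G(\hat\theta)]$.

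The main obstacle, and the only nontrivial point, is the last step — converting the Bregman divergence of $F$ into the objective gap of $G$ — because it is here that one must use feasibility of the iterates together with the constrained optimality condition for $\hat\theta$; the unconstrained version is standard, but in the constrained setting one needs the variational inequality $\langle \nabla G(\hat\theta) \text{ (in the subgradient sense)}, \theta - \hat\theta\rangle \ge 0$ over the feasible set. Everything else (unbiasedness, the $\|a+b\|^2$ split, co-coercivity) is routine. Since this is precisely Corollary~3 of \cite{xiao2014proximal}, I would simply cite that reference for the details after sketching the argument above.
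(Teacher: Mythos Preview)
Your proposal is correct and follows exactly the standard SVRG variance-bounding argument that the paper is invoking; indeed the paper does not prove Lemma~\ref{lemma.variance} at all but simply cites Corollary~3 of \cite{xiao2014proximal}, which is precisely what you reconstruct and cite. Your observation about the constrained optimality condition is a genuine (minor) addition over the unconstrained Xiao--Zhang setting, and your handling of it is correct.
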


 \begin{lemma}\label{lemma.cone}
 	
 	Suppose that $F(\theta)$ is convex and $\psi(\theta)$ is decomposable with respect to $(M,\bar{M})$, if we choose $\lambda\geq 2\psi^*(\nabla F(\theta^*))$, $\psi(\theta^*)\leq \rho $ ,  define the error $\Delta^*=\hat{\theta}-\theta^*$, then we have the following condition holds, 
 	$$ \psi(\Delta^*_{\bar{M}^\perp})\leq 3 \psi(\Delta^*_{\bar{M}})+4\psi(\theta^*_{M^\perp}), $$
 	which further implies $ \psi (\Delta^*)\leq \psi(\Delta^*_{\bar{M}^\perp})+\psi(\Delta^*_{\bar{M}})\leq 4 \psi(\Delta^*_{\bar{M}})+4\psi(\theta^*_{M^\perp}).$
 \end{lemma}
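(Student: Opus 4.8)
The plan is to establish the cone condition \(\psi(\Delta^*_{\bar{M}^\perp})\leq 3\psi(\Delta^*_{\bar{M}})+4\psi(\theta^*_{M^\perp})\) by a standard argument that combines optimality of \(\hat{\theta}\), convexity of \(F\), the choice \(\lambda\geq 2\psi^*(\nabla F(\theta^*))\), and the decomposability of \(\psi\). First I would use the fact that \(\hat{\theta}\) minimizes \(G(\theta)=F(\theta)+\lambda\psi(\theta)\) over the feasible set \(\{\psi(\theta)\leq\rho\}\) while \(\theta^*\) is feasible, so \(G(\hat\theta)\leq G(\theta^*)\), i.e.\ \(F(\hat\theta)+\lambda\psi(\hat\theta)\leq F(\theta^*)+\lambda\psi(\theta^*)\). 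Rearranging gives \(\lambda\big(\psi(\theta^*)-\psi(\hat\theta)\big)\geq F(\hat\theta)-F(\theta^*)\geq \langle\nabla F(\theta^*),\Delta^*\rangle\) by convexity of \(F\). Then I would bound the right-hand side below using the generalized Cauchy–Schwarz / Hölder inequality for the norm and its dual: \(\langle\nabla F(\theta^*),\Delta^*\rangle\geq -\psi^*(\nabla F(\theta^*))\,\psi(\Delta^*)\geq -\tfrac{\lambda}{2}\psi(\Delta^*)\).

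Next I would handle the norm terms via decomposability. Write \(\Delta^*=\Delta^*_{\bar M}+\Delta^*_{\bar M^\perp}\) and \(\theta^*=\theta^*_{M}+\theta^*_{M^\perp}\). Decomposability on the pair \((M,\bar M^\perp)\) and the triangle inequality yield the two standard bounds \(\psi(\hat\theta)=\psi(\theta^*+\Delta^*)\geq \psi(\theta^*_M)+\psi(\Delta^*_{\bar M^\perp})-\psi(\theta^*_{M^\perp})-\psi(\Delta^*_{\bar M})\) and \(\psi(\theta^*)\leq \psi(\theta^*_M)+\psi(\theta^*_{M^\perp})\). Substituting these into \(\psi(\theta^*)-\psi(\hat\theta)\) gives \(\psi(\theta^*)-\psi(\hat\theta)\leq 2\psi(\theta^*_{M^\perp})+\psi(\Delta^*_{\bar M})-\psi(\Delta^*_{\bar M^\perp})\). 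Combining with the inequality from the first paragraph, \(\lambda\big(2\psi(\theta^*_{M^\perp})+\psi(\Delta^*_{\bar M})-\psi(\Delta^*_{\bar M^\perp})\big)\geq -\tfrac{\lambda}{2}\psi(\Delta^*)\), and then using \(\psi(\Delta^*)\leq \psi(\Delta^*_{\bar M})+\psi(\Delta^*_{\bar M^\perp})\) to replace \(\psi(\Delta^*)\), I would collect terms and divide by \(\lambda>0\) to obtain \(\tfrac12\psi(\Delta^*_{\bar M^\perp})\leq \tfrac32\psi(\Delta^*_{\bar M})+2\psi(\theta^*_{M^\perp})\), which is exactly the claimed cone condition after multiplying by \(2\). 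The ``further implies'' clause is then immediate: \(\psi(\Delta^*)\leq\psi(\Delta^*_{\bar M})+\psi(\Delta^*_{\bar M^\perp})\leq \psi(\Delta^*_{\bar M})+3\psi(\Delta^*_{\bar M})+4\psi(\theta^*_{M^\perp})=4\psi(\Delta^*_{\bar M})+4\psi(\theta^*_{M^\perp})\).

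I do not expect a genuine obstacle here — this is the classical ``basic inequality'' argument of Negahban et al. adapted to the constrained formulation — but the one point needing care is bookkeeping with the sign conventions and making sure the constrained optimality \(G(\hat\theta)\leq G(\theta^*)\) is invoked rather than a stationarity condition (which would be awkward when \(G\) is only convex, not differentiable). A secondary subtlety is that I must verify \(\psi^*(\nabla F(\theta^*))\) is well-defined and that the Hölder-type bound \(\langle u,v\rangle\geq -\psi^*(u)\psi(v)\) holds for a general norm \(\psi\); this is standard from the definition of dual norm. No probabilistic content enters this lemma — the high-probability statements appear only later when one verifies \(\lambda\geq 2\psi^*(\nabla F(\theta^*))\) for concrete models — so the proof is entirely deterministic given the stated hypotheses.
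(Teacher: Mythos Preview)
Your proposal is correct and follows essentially the same route as the paper's proof: optimality of \(\hat\theta\) together with feasibility of \(\theta^*\) gives \(G(\hat\theta)\leq G(\theta^*)\), convexity of \(F\) and H\"older's inequality yield \(\lambda(\psi(\theta^*)-\psi(\hat\theta))\geq -\psi^*(\nabla F(\theta^*))\psi(\Delta^*)\), decomposability plus the triangle inequality give the lower bound \(\psi(\hat\theta)-\psi(\theta^*)\geq \psi(\Delta^*_{\bar M^\perp})-\psi(\Delta^*_{\bar M})-2\psi(\theta^*_{M^\perp})\), and substituting with \(\lambda\geq 2\psi^*(\nabla F(\theta^*))\) produces the cone inequality. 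The ordering and bookkeeping differ only cosmetically from the paper's argument.
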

 
 \begin{proof}
 	Using the optimality of $\hat{\theta}$, we have
 	$$ F(\hat{\theta})+\lambda \psi(\hat\theta)-F(\theta^*)-\lambda\psi(\theta^*)\leq 0.$$
 	
 	So we have
 	$$ \lambda \psi(\theta^*)-\lambda \psi(\hat{\theta})\geq F(\hat{\theta})-F(\theta^*)\geq \langle \nabla F(\theta^*), \hat{\theta}-\theta^*  \rangle\geq -\psi^*(\nabla F(\theta ^*)) \psi (\Delta^*), $$
 	where the second inequality holds from the convexity of $F(\theta)$, and the third holds using Holder inequality.
 	
 	Using triangle inequality, we have
 	
 	$$ \psi(\Delta^*)\leq \psi(\Delta^*_{\bar{M}})+\psi(\Delta^*_{\bar{M}^{\perp}}).$$
 	
 	So 
 	\begin{equation}\label{equ:lemma_cone}
 	\lambda \psi(\theta^*)-\lambda \psi(\hat{\theta})\geq -\psi^*(\nabla F(\theta ^*)) (\psi(\Delta^*_{\bar{M}})+\psi(\Delta^*_{\bar{M}^{\perp}}) )
 	\end{equation}

 	Notice $$\hat{\theta}=\theta^*+\Delta^*= \theta^*_{M}+\theta^*_{M^{\perp}}+\Delta^*_{\bar{M}}+\Delta^*_{\bar{M}^{\perp}},$$
 	which leads to
 	\begin{equation}
 	\begin{split}
 	\psi  (\hat{\theta})-\psi(\theta^*)& \overset{(a)}{\geq}  \psi (\theta^*_{M}+\Delta^*_{\bar{M}^{\perp}})-\psi(\theta^*_{M^{\perp}})-\psi(\Delta^*_{\bar{M}})-\psi(\theta^*)\\
 	& \overset{(b)}{= }\psi (\theta^*_{M})+\psi (\Delta^*_{\bar{M}^{\perp}})-\psi(\theta^*_{M^{\perp}})-\psi(\Delta^*_{\bar{M}})-\psi(\theta^*)\\
 	& \overset{(c)}{\geq} \psi (\theta^*_{M})+\psi (\Delta^*_{\bar{M}^{\perp}})-\psi(\theta^*_{M^{\perp}})-\psi(\Delta^*_{\bar{M}})-\psi(\theta^*_{M})-\psi(\theta^*_{M^{\perp}})\\
 	&\geq \psi (\Delta^*_{\bar{M}^{\perp}})-2\psi(\theta^*_{M^{\perp}})-\psi(\Delta^*_{\bar{M}}),
 	\end{split}
 	\end{equation}
 	where $(a)$ and $(c)$ holds from the triangle inequality, $(b)$ uses the decomposability of $\psi(\cdot)$.

 	Substitute  left hand side of \ref{equ:lemma_cone} by above result and use the assumption that $\lambda\geq 2\psi^*(\nabla F(\theta^*))$, we have 
 	$$-\frac{\lambda}{2} (\psi(\Delta^*_{\bar{M}})+\psi(\Delta^*_{\bar{M}^{\perp}}) )+\lambda (\psi (\Delta^*_{\bar{M}^{\perp}})-2\psi(\theta^*_{M^{\perp}})-\psi(\Delta^*_{\bar{M}}))\leq 0$$
 	which implies 
 	$$ \psi(\Delta^*_{\bar{M}^\perp})\leq 3  \psi(\Delta^*_{\bar{M}})+4\psi(\theta^*_{M^{\perp}}).$$
 \end{proof}

 \begin{lemma}\label{lemma.cone_optimization}
 	$F(\theta)$ is convex and $\psi(\theta)$ is decomposable with respect to $(M,\bar{M})$, if we choose $\lambda\geq 2\psi^*(\nabla F(\theta^*))$, $\psi(\theta^*)\leq \rho $ and suppose there exist a time step $S>0$ and a given tolerance $\epsilon'$ such that for all $s>S$, $ G(\theta^s)-G(\hat{\theta})\leq \epsilon' $ holds,  then for the error $\Delta^s=\theta^s-\theta^* $ we have
 	
 	$$ \psi(\Delta^s_{\bar{M}^\perp})\leq 3 \psi(\Delta^s_{\bar{M}})+4\psi(\theta^*_{M^\perp})+ 2\min\{\frac{\epsilon'}{\lambda},\rho\}, $$
 	which implies $$ \psi(\Delta^s)\leq 4 \psi(\Delta^s_{\bar{M}})+4\psi(\theta^*_{M^\perp})+ 2\min\{\frac{\epsilon'}{\lambda},\rho\}. $$
 \end{lemma}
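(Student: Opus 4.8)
The plan is to mimic the proof of Lemma~\ref{lemma.cone}, replacing the \emph{exact} optimality of $\hat{\theta}$ by the \emph{near}-optimality of $\theta^s$. First I would observe that for any $s>S$,
$$
G(\theta^s)-G(\theta^*)=\big(G(\theta^s)-G(\hat{\theta})\big)+\big(G(\hat{\theta})-G(\theta^*)\big)\le \epsilon'+0=\epsilon',
$$
where the last step uses the hypothesis together with $G(\hat{\theta})\le G(\theta^*)$ (which holds since $\theta^*$ is feasible and $\hat\theta$ is optimal). Writing $G=F+\lambda\psi$ and using convexity of $F$ and Hölder's inequality exactly as in Lemma~\ref{lemma.cone}, $F(\theta^s)-F(\theta^*)\ge\langle\nabla F(\theta^*),\Delta^s\rangle\ge-\psi^*(\nabla F(\theta^*))\,\psi(\Delta^s)\ge-\tfrac{\lambda}{2}\psi(\Delta^s)$ because $\lambda\ge 2\psi^*(\nabla F(\theta^*))$; hence
$$
\lambda\big(\psi(\theta^s)-\psi(\theta^*)\big)\le \epsilon'+\tfrac{\lambda}{2}\,\psi(\Delta^s).
$$

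Next I would reproduce the decomposability computation from the proof of Lemma~\ref{lemma.cone} verbatim, with $\hat{\theta},\Delta^*$ replaced by $\theta^s,\Delta^s$, to get the lower bound $\psi(\theta^s)-\psi(\theta^*)\ge\psi(\Delta^s_{\bar M^\perp})-\psi(\Delta^s_{\bar M})-2\psi(\theta^*_{M^\perp})$. Combining this with the previous display and splitting $\psi(\Delta^s)\le\psi(\Delta^s_{\bar M})+\psi(\Delta^s_{\bar M^\perp})$ via the triangle inequality, everything collapses after rearrangement to
$$
\psi(\Delta^s_{\bar M^\perp})\le 3\,\psi(\Delta^s_{\bar M})+4\,\psi(\theta^*_{M^\perp})+\tfrac{2\epsilon'}{\lambda}.
$$

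To upgrade the slack term $\tfrac{2\epsilon'}{\lambda}$ to $2\min\{\epsilon'/\lambda,\rho\}$, I would note that $\theta^s$ is feasible (it is a convex combination of the feasible inner iterates $\theta_k$, and $\{\psi(\theta)\le\rho\}$ is convex since $\psi$ is a norm), so $\psi(\Delta^s)\le\psi(\theta^s)+\psi(\theta^*)\le 2\rho$; since projection onto $\bar M^\perp$ does not increase $\psi$ for the decomposable regularizers under consideration, $\psi(\Delta^s_{\bar M^\perp})\le 2\rho\le 3\psi(\Delta^s_{\bar M})+4\psi(\theta^*_{M^\perp})+2\rho$, and taking the smaller of the two available slack terms gives the stated inequality. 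The ``which implies'' consequence then follows immediately from $\psi(\Delta^s)\le\psi(\Delta^s_{\bar M})+\psi(\Delta^s_{\bar M^\perp})$.

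I do not expect a genuine obstacle here: the argument is an almost mechanical perturbation of Lemma~\ref{lemma.cone}, the only step requiring a bit of care being the $\rho$-branch of the minimum, where one must invoke feasibility of $\theta^s$ (hence of $\Delta^s$ up to a factor $2$) and the non-expansiveness of the subspace projection under $\psi$.
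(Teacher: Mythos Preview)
Your proposal is correct and follows essentially the same route as the paper: the paper also starts from $G(\theta^s)-G(\theta^*)\le\epsilon'$, reruns the Lemma~\ref{lemma.cone} computation to get the $2\epsilon'/\lambda$ version, and then combines with the crude bound $\psi(\Delta^s)\le 2\rho$ to obtain the $\min$. The only small difference is in the $\rho$-branch: rather than invoking non-expansiveness of the projection under $\psi$ (which is not part of the stated assumptions on decomposable regularizers, though it does hold when $M=\bar M$ as in the paper's examples), the paper simply uses the triangle inequality $\psi(\Delta^s_{\bar M^\perp})=\psi(\Delta^s-\Delta^s_{\bar M})\le\psi(\Delta^s)+\psi(\Delta^s_{\bar M})\le 2\rho+\psi(\Delta^s_{\bar M})$, which already sits below $3\psi(\Delta^s_{\bar M})+4\psi(\theta^*_{M^\perp})+2\rho$.
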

 \begin{proof}
 	First notice $ G(\theta^s)-G(\theta^*)\leq \epsilon' $ holds by assumption since $G(\theta^*)\geq G(\hat{\theta}).$
 	So we have
 	$$ F(\theta^s)+\lambda \psi(\theta^s)-F(\theta^*)-\lambda\psi(\theta^*)\leq \epsilon'.$$
 	Follow same steps in the proof of Lemma \ref{lemma.cone}, we have
 	
 	$$ \psi(\Delta^s_{\bar{M}^\perp})\leq 3 \psi(\Delta^s_{\bar{M}})+4\psi(\theta^*_{M^\perp})+ 2\frac{\epsilon'}{\lambda}. $$
 	
 	Notice  $\Delta^s=\Delta^s_{\bar{M}^\perp}+\Delta^s_{\bar{M}} $ so $\psi(\Delta^s_{\bar{M}^\perp})\leq \psi(\Delta^s_{\bar{M}})+\psi(\Delta^s) $ using the triangle inequality.\\
 	Then use the fact that $\psi(\Delta^s)\leq \psi (\theta^*)+\psi(\theta^s)\leq 2\rho $, we establish
 	$$ \psi(\Delta^s_{\bar{M}^\perp})\leq 3 \psi(\Delta^s_{\bar{M}})+4\psi(\theta^*_{M^\perp})+ 2\min\{\frac{\epsilon'}{\lambda},\rho\}. $$
 	The second statement follows immediately from $\psi(\Delta^s) \leq \psi(\Delta^s_{\bar{M}^\perp})+\psi(\Delta^s_{\bar{M}}) $.
 \end{proof}
 Using the above two lemmas we now prove modified restricted convexity on $ G(\theta^s)-G(\hat{\theta}) $.
 
 \begin{lemma}\label{lemma.RSC_cone}
 	Under the same assumptions of Lemma \ref{lemma.cone_optimization}, we have
 	\begin{equation}\label{RSC_cone}
 	G(\theta^s)-G(\hat{\theta})\geq \left(\frac{\sigma}{2}-32\tau_\sigma H^2(\bar{M} )\right)\|\hat{\Delta}^s \|_2^2- \epsilon^2(\Delta^*,M,\bar{M}),
 	\end{equation}
 	where $ \epsilon^2(\Delta^*,M,\bar{M})=2\tau_{\sigma} (\delta_{stat}+\delta)^2 $, $\delta=2\min\{\frac{\epsilon'}{\lambda},\rho\} $, and $\delta_{stat}= H(\bar{M})\|\Delta^*\|_2+8\psi(\theta^*_{M^\perp}) $.
 \end{lemma}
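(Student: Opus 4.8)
The plan is to turn the restricted strong convexity of $F$ plus the first‑order optimality of $\hat\theta$ into a quadratic lower bound on $G(\theta^s)-G(\hat\theta)$ in terms of $\|\hat\Delta^s\|_2^2$ with a $\tau_\sigma\psi^2(\hat\Delta^s)$ slack, and then to absorb that slack using the two cone inequalities of Lemmas~\ref{lemma.cone} and~\ref{lemma.cone_optimization}. Write $\hat\Delta^s=\theta^s-\hat\theta$. First I would record that $\theta^s$ is feasible for Problem~\eqref{app.obj2}: it is a convex combination of the inner iterates $\theta_k$, each satisfying $\psi(\theta_k)\le\rho$, and $\psi$ is a norm, so $\psi(\theta^s)\le\rho$. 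Hence the variational characterization of the constrained minimizer $\hat\theta$ gives a fixed subgradient $z\in\partial\psi(\hat\theta)$ with $\langle\nabla F(\hat\theta)+\lambda z,\ \theta^s-\hat\theta\rangle\ge 0$. Applying the RSC inequality~\eqref{RSC} to the pair $(\theta^s,\hat\theta)$, then convexity of $\psi$ in the form $\psi(\theta^s)-\psi(\hat\theta)\ge\langle z,\hat\Delta^s\rangle$, and finally this optimality inequality, yields
\begin{equation*}
G(\theta^s)-G(\hat\theta)\ \ge\ \langle\nabla F(\hat\theta)+\lambda z,\hat\Delta^s\rangle+\tfrac{\sigma}{2}\|\hat\Delta^s\|_2^2-\tau_\sigma\psi^2(\hat\Delta^s)\ \ge\ \tfrac{\sigma}{2}\|\hat\Delta^s\|_2^2-\tau_\sigma\psi^2(\hat\Delta^s).
\end{equation*}

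It remains to bound $\psi(\hat\Delta^s)$. The idea is to write $\hat\Delta^s=\Delta^s-\Delta^*$, project onto $\bar M$ and $\bar M^\perp$, and feed the cone bounds $\psi(\Delta^*_{\bar M^\perp})\le 3\psi(\Delta^*_{\bar M})+4\psi(\theta^*_{M^\perp})$ (Lemma~\ref{lemma.cone}) and $\psi(\Delta^s_{\bar M^\perp})\le 3\psi(\Delta^s_{\bar M})+4\psi(\theta^*_{M^\perp})+\delta$ (Lemma~\ref{lemma.cone_optimization}) into repeated triangle inequalities on the subspace components, then convert $\psi$ on the model‑subspace part into $\ell_2$ via subspace compatibility, $\psi(v_{\bar M})\le H(\bar M)\|v_{\bar M}\|_2\le H(\bar M)\|v\|_2$ for $v\in\{\hat\Delta^s,\Delta^*\}$. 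Chaining these so that $\psi(\hat\Delta^s)\le\psi(\hat\Delta^s_{\bar M})+\psi(\hat\Delta^s_{\bar M^\perp})\le 4\psi(\hat\Delta^s_{\bar M})+(\text{residual})$ produces a bound of the form $\psi(\hat\Delta^s)\le 4H(\bar M)\|\hat\Delta^s\|_2+\delta_{stat}+\delta$, with $\delta_{stat}=H(\bar M)\|\Delta^*\|_2+8\psi(\theta^*_{M^\perp})$ as stated. Squaring with $(a+b)^2\le 2a^2+2b^2$ gives $\psi^2(\hat\Delta^s)\le 32H^2(\bar M)\|\hat\Delta^s\|_2^2+2(\delta_{stat}+\delta)^2$; substituting into the display above yields exactly~\eqref{RSC_cone} with $\epsilon^2(\Delta^*,M,\bar M)=2\tau_\sigma(\delta_{stat}+\delta)^2$.

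The main obstacle is the bookkeeping in the second step: the cone conditions are both stated relative to $\theta^*$, one for $\Delta^s$ and one for $\Delta^*$, so collapsing them into a single inequality for $\hat\Delta^s=\theta^s-\hat\theta$ requires carefully ordering the triangle inequalities on the $\bar M$/$\bar M^\perp$ projections so that the coefficient in front of $H(\bar M)\|\hat\Delta^s\|_2$ comes out to be exactly $4$ (the value giving the stated $32\tau_\sigma H^2(\bar M)$), while the $\theta^*$‑dependent pieces aggregate into $\delta_{stat}$ and the optimization‑gap pieces into $\delta$. A secondary point worth flagging is that RSC and the subdifferential optimality relation are invoked only for $F$ (not the individual $f_i$) and only on the feasible ball $\psi(\cdot)\le\rho$, which is precisely where $\theta^s$, $\hat\theta$ and $\theta^*$ all lie, so every inequality used is legitimate.
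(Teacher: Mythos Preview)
Your proposal is correct and follows essentially the same approach as the paper: derive $G(\theta^s)-G(\hat\theta)\ge \tfrac{\sigma}{2}\|\hat\Delta^s\|_2^2-\tau_\sigma\psi^2(\hat\Delta^s)$ from RSC plus first-order optimality of $\hat\theta$, then control $\psi(\hat\Delta^s)$ via the cone inequalities of Lemmas~\ref{lemma.cone} and~\ref{lemma.cone_optimization} together with subspace compatibility, and finish with $(a+b)^2\le 2a^2+2b^2$. The only minor difference is in the bookkeeping step: rather than projecting $\hat\Delta^s$ onto $\bar M$ and $\bar M^\perp$ directly, the paper first applies the triangle inequality $\psi(\hat\Delta^s)\le\psi(\Delta^s)+\psi(\Delta^*)$ and then invokes the ``which implies'' conclusions of the two lemmas on each summand separately, followed by one more triangle inequality $\|\Delta^s\|_2\le\|\hat\Delta^s\|_2+\|\Delta^*\|_2$ --- this sidesteps the coefficient-tracking you flag as the main obstacle and immediately delivers the factor $4$ in front of $H(\bar M)\|\hat\Delta^s\|_2$.
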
 
 
 \begin{proof}
 	At the beginning of the proof, we show a simple fact on $\hat{\Delta}^s=\theta^s-\hat{\theta}$. Notice the conclusion in Lemma \ref{lemma.cone_optimization} is on  $\Delta^s$, we need transfer it to $\hat{\Delta}^s$.
 	\begin{equation}
 	\begin{split}
 	\psi(\hat{\Delta}^s)&\leq \psi (\Delta^s)+\psi(\Delta^*)\\
 	&\leq  4 \psi(\Delta^s_{\bar{M}})+4\psi(\theta^*_{M^\perp})+ 2\min\{\frac{\epsilon'}{\lambda},\rho\}+4 \psi(\Delta^*_{\bar{M}})+4\psi(\theta^*_{M^\perp})\\
 	& \leq 4 H(\bar{M})\|\Delta^s\|_2+4H(\bar{M})\|\Delta^*\|_2+8\psi(\theta^*_{M^\perp})+2\min\{\frac{\epsilon'}{\lambda},\rho\},
 	\end{split}
 	\end{equation}
 	where the first inequality holds from the triangle inequality, the second inequality uses Lemma \ref{lemma.cone} and \ref{lemma.cone_optimization}, the third holds because of the definition of subspace compatibility. 
 	
 	We now use the above result to rewrite the RSC condition.
 	We know $$ F(\theta^s)-F(\hat{\theta})-\langle \nabla F(\hat{\theta}), \hat{\Delta}^s  \rangle\geq \frac{\sigma}{2} \|\hat{\Delta}^s \|_2^2-\tau_{\sigma}\psi^2(\hat{\Delta}^s) $$
 	which implies $ G(\theta^s)-G(\hat{\theta})\geq \frac{\sigma}{2} \|\hat{\Delta}^s \|_2^2-\tau_{\sigma}\psi^2(\hat{\Delta}^s)$, by using the fact that $\hat{\theta}$ is the optimal solution to the problem \ref{obj2} and $\phi(\cdot)$ is convex.
 	Notice that
 	\begin{equation}
 	\begin{split}
 	\psi(\hat{\Delta}^s)&\leq 4 H(\bar{M})\|\Delta^s\|_2+4H(\bar{M})\|\Delta^*\|_2+8\psi(\theta^*_{M^\perp})+2\min\{\frac{\epsilon'}{\lambda},\rho\}\\
 	&\leq  4 H(\bar{M})\|\hat{\Delta}^s\|_2+8H(\bar{M})\|\Delta^*\|_2+8\psi(\theta^*_{M^\perp})+2\min\{\frac{\epsilon'}{\lambda},\rho\},
 	\end{split}
 	\end{equation}
 	where the second inequality uses the triangle inequality.
 	Now use the inequality $ (a+b)^2\leq 2 a^2+2b^2 $, we upper bound  $\psi^2 (\hat{\Delta}^s)$ with
 	$$\psi^2 (\hat{\Delta}^s)\leq 32H^2(\bar{M})\|\hat{\Delta}^s\|_2^2+2\left(8H(\bar{M})\|\Delta^*\|_2+8\psi(\theta^*_{M^\perp})+2\min\{\frac{\epsilon'}{\lambda},\rho\}\right)^2.$$
 	Substitute  this upper bound into RSC, we have
 	$$ G(\theta^s)-G(\hat{\theta})\geq \left(\frac{\sigma}{2}-32\tau_\sigma H^2(\bar{M} )\right)\|\hat{\Delta}^s \|_2^2-2\tau_\sigma\left(8H(\bar{M})\|\Delta^*\|_2+8\psi(\theta^*_{M^\perp})+2\min\{\frac{\epsilon'}{\lambda},\rho\}\right)^2.$$
 	
 	Notice that by $\delta=2\min\{\frac{\epsilon'}{\lambda},\rho\} $, $\delta_{stat}= H(\bar{M})\|\Delta^*\|_2+8\psi(\theta^*_{M^\perp}),$  and $ \epsilon^2(\Delta^*,M,\bar{M})=2\tau_{\sigma} (\delta_{stat}+\delta)^2 $, we have
 	$$\epsilon^2(\Delta^*,M,\bar{M})=2\tau_\sigma\left(8H(\bar{M})\|\Delta^*\|_2+8\psi(\theta^*_{M^\perp})+2\min\{\frac{\epsilon'}{\lambda},\rho\}\right)^2.$$
 	We thus conclude
 	\begin{equation}\label{RSC_modified}
 	G(\theta^s)-G(\hat{\theta})\geq \left(\frac{\sigma}{2}-32\tau_\sigma H^2(\bar{M} )\right)\|\hat{\Delta}^s \|_2^2- \epsilon^2(\Delta^*,M,\bar{M}).
 	\end{equation}
 \end{proof}
 
 The next lemma is a simple extension of a standard property proximal operator with a constraint $\Omega$.
 \begin{lemma}\label{prox}
 	Define $prox_{h,\Omega}(x)=\arg\min_{z\in \Omega
 	} h(z)+\frac{1}{2}\|z-x\|_2^2$, where $\Omega$ is a convex compact set, then $\|prox_{h,\Omega}(x)-prox_{h,\Omega}(y)\|_2\leq \|x-y\|_2 .$
 \end{lemma}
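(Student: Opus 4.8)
The plan is to characterize each proximal point by a variational inequality and then add the two inequalities. Write $u = prox_{h,\Omega}(x)$ and $v = prox_{h,\Omega}(y)$. Since $\Omega$ is convex and compact and the map $z\mapsto h(z)+\frac{1}{2}\|z-x\|_2^2$ is continuous and (strongly) convex, the minimizer exists and is unique, so $u$ and $v$ are well defined, and the same holds for any right-hand side.

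First I would establish the first-order optimality condition in the form: for every $z\in\Omega$,
\[ \langle x-u,\, z-u\rangle \le h(z)-h(u). \]
To obtain this without assuming $h$ differentiable, I use that for $t\in[0,1]$ the point $u+t(z-u)$ still lies in $\Omega$, so optimality of $u$ gives
\[ h\big(u+t(z-u)\big)+\tfrac{1}{2}\|u+t(z-u)-x\|_2^2 \ge h(u)+\tfrac{1}{2}\|u-x\|_2^2 ; \]
bounding $h(u+t(z-u))\le (1-t)h(u)+t\,h(z)$ by convexity of $h$, expanding the square, cancelling, dividing by $t$ and letting $t\downarrow 0$ yields the displayed inequality. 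The analogous inequality holds for $v$ with $y$ in place of $x$.

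Next I would substitute $z=v$ into the inequality for $u$ and $z=u$ into the inequality for $v$, and add. The $h$-terms cancel, leaving $\langle x-u,\,v-u\rangle+\langle y-v,\,u-v\rangle \le 0$, which rearranges to $\|u-v\|_2^2 \le \langle x-y,\,u-v\rangle$. Applying Cauchy--Schwarz on the right-hand side gives $\|u-v\|_2^2 \le \|x-y\|_2\,\|u-v\|_2$, hence $\|u-v\|_2 \le \|x-y\|_2$, as claimed.

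There is no genuine obstacle here: this is the standard nonexpansiveness of a resolvent (equivalently, of a projection-type operator), and the constraint $\Omega$ enters only through its convexity, exactly as the unconstrained statement uses convexity of the domain. The one point that requires a little care is that $h$ need not be smooth (in our applications $h=\beta\lambda\psi$ or $h=\beta\lambda g_\lambda$), so one must use the directional-derivative form of optimality above rather than a gradient condition; alternatively the same argument can be phrased via monotonicity of the subdifferential of $h+\iota_\Omega$.
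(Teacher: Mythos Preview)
Your proof is correct and follows essentially the same route as the paper: derive a first-order optimality (variational) inequality for each proximal point, substitute one into the other, add, and finish with Cauchy--Schwarz. The only cosmetic difference is that the paper writes the optimality condition via a subgradient, i.e.\ $\langle \partial h(u)+u-x,\ \tilde u-u\rangle\ge 0$ for all $\tilde u\in\Omega$, and then uses monotonicity of $\partial h$ to drop the $h$-terms, whereas you obtain the equivalent inequality $\langle x-u,\ z-u\rangle\le h(z)-h(u)$ directly by the convex-combination/limit argument; your version is arguably cleaner for nonsmooth $h$ under a constraint, and you yourself note the subdifferential phrasing as an alternative.
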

 \begin{proof}
 	Define $u=prox_{h,\Omega}(x)$ and $ v=prox_{h,\Omega}(y)$. Using optimality of $u$ we have
 	$$\langle\partial h(u)+u-x, \tilde{u}-u \rangle \geq 0 \quad\forall \tilde{u}\in \Omega. $$
 	We choose $\tilde{u}=v$ and obtain $\langle\partial h(u)+u-x, v-u \rangle\geq 0.$
 	Similarly we have $\langle\partial h(v)+v-y, u-v \rangle\geq 0.$
 	Summing up these two inequalities leads to
 	$$ \langle v-y-u+x,u-v \rangle\geq \langle \partial h(u)-\partial h(v), u-v \rangle \geq 0,$$
 	where the second inequality is due to convexity of $h(\cdot)$. This leads to
 	\begin{eqnarray*}
 		\|prox_{h,\Omega} (x)-prox_{h,\Omega} (y)\|_2^2&\leq& \langle prox_{h,\Omega} (x)-prox_{h,\Omega} (y),x-y\rangle \\
 		&\stackrel{(a)}{\leq}& \|prox_{h,\Omega} (x)-prox_{h,\Omega} (y)\|_2 \|x-y\|_2,
 	\end{eqnarray*}
 	which implies the lemma. Here (a) holds by Cauchy-Schwarz inequality.
 \end{proof}
 
 \begin{lemma}\label{lemma.key_lemma}
 	Under the same assumption of Lemma \ref{lemma.cone_optimization},  and suppose that 
 	$\bar{\sigma}=\sigma-64\tau_\sigma H^2(\bar{M})>0$,
 	$\alpha=\frac{1}{\beta (1-4L\beta)m \bar{\sigma}}+\frac{4L\beta (m+1)}{(1-4L\beta)m} <1$ 
 	we have 
 	$$ \mathbb{E} G(\theta^{s+1})-G(\hat{\theta})\leq \alpha ( G(\theta^{s})-G(\hat{\theta}))+ \frac{1}{\bar{\sigma} \beta(1-4L\beta)m}{\epsilon} ^2 ( \Delta^*,M,\bar{M}) .$$
 \end{lemma}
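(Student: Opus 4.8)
The plan is to run the standard one-epoch analysis of proximal SVRG, but to replace every invocation of strong convexity with the modified restricted-strong-convexity inequality \eqref{RSC_modified} established in Lemma~\ref{lemma.RSC_cone}. Fix an outer iteration $s$, write $\tilde{\theta}=\theta^{s}$ for the snapshot, and let $\theta_0,\theta_1,\dots,\theta_m$ be the inner iterates with $\theta_0=\tilde{\theta}$. The first step is the per-iteration descent inequality: using the prox update $\theta_k=\mathrm{prox}_{\beta\lambda\psi,\Omega}(\theta_{k-1}-\beta v_{k-1})$, the nonexpansiveness of the constrained prox (Lemma~\ref{prox}), $L$-smoothness of $F$, and convexity of $F$ and $\psi$, one derives for each $k$ a bound of the form
\begin{equation*}
\mathbb{E}\|\theta_k-\hat\theta\|_2^2 \;\le\; \|\theta_{k-1}-\hat\theta\|_2^2 \;-\; 2\beta\bigl(1-4L\beta\bigr)\bigl(G(\theta_{k-1})-G(\hat\theta)\bigr) \;+\; 4L\beta^2\bigl(G(\tilde\theta)-G(\hat\theta)\bigr),
\end{equation*}
where the variance term has been controlled by Lemma~\ref{lemma.variance}; here I expand $\|\theta_k-\hat\theta\|_2^2$, insert $\pm(\theta_{k-1}-\beta\nabla F(\theta_{k-1}))$, use the prox-inequality/optimality of $\theta_k$ and $\hat\theta$ against each other, and take conditional expectation over $i_k$.

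The second step is to telescope this inequality over $k=1,\dots,m$. Summing and using $\theta_0=\tilde\theta$ gives
\begin{equation*}
2\beta(1-4L\beta)\sum_{k=1}^{m}\mathbb{E}\bigl(G(\theta_{k-1})-G(\hat\theta)\bigr) \;\le\; \|\tilde\theta-\hat\theta\|_2^2 \;+\; 4L\beta^2 m\,\bigl(G(\tilde\theta)-G(\hat\theta)\bigr).
\end{equation*}
Since $\theta^{s+1}$ is the average of $\theta_1,\dots,\theta_m$ and $G$ is convex, $\mathbb{E}(G(\theta^{s+1})-G(\hat\theta))\le\frac{1}{m}\sum_{k=1}^m\mathbb{E}(G(\theta_k)-G(\hat\theta))$; a minor re-indexing bookkeeping of the sum (going from $k-1$ to $k$, which accounts for the $(m+1)$ factor in $\alpha$) converts the left side into a bound on $m\,\mathbb{E}(G(\theta^{s+1})-G(\hat\theta))$ up to the extra endpoint term. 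This yields
\begin{equation*}
\mathbb{E}\bigl(G(\theta^{s+1})-G(\hat\theta)\bigr) \;\le\; \frac{\|\tilde\theta-\hat\theta\|_2^2}{2\beta(1-4L\beta)m} \;+\; \frac{4L\beta(m+1)}{(1-4L\beta)m}\,\bigl(G(\tilde\theta)-G(\hat\theta)\bigr).
\end{equation*}

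The third and crucial step is to bound the leading term $\|\tilde\theta-\hat\theta\|_2^2 = \|\hat\Delta^{s}\|_2^2$ by the objective gap. This is exactly where RSC enters: by Lemma~\ref{lemma.RSC_cone},
\begin{equation*}
\Bigl(\tfrac{\sigma}{2}-32\tau_\sigma H^2(\bar M)\Bigr)\|\hat\Delta^s\|_2^2 \;\le\; \bigl(G(\theta^s)-G(\hat\theta)\bigr) \;+\; \epsilon^2(\Delta^*,M,\bar M),
\end{equation*}
and, noting $\frac{\sigma}{2}-32\tau_\sigma H^2(\bar M)=\frac{1}{2}\bigl(\sigma-64\tau_\sigma H^2(\bar M)\bigr)=\frac{\bar\sigma}{2}$, we get $\|\hat\Delta^s\|_2^2\le\frac{2}{\bar\sigma}\bigl(G(\theta^s)-G(\hat\theta)+\epsilon^2\bigr)$. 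Substituting into the bound above and collecting the coefficients of $G(\theta^s)-G(\hat\theta)$ gives precisely the claimed contraction factor $\alpha=\frac{1}{\beta(1-4L\beta)m\bar\sigma}+\frac{4L\beta(m+1)}{(1-4L\beta)m}$ together with the residual term $\frac{1}{\bar\sigma\beta(1-4L\beta)m}\epsilon^2(\Delta^*,M,\bar M)$. One must also ensure the application of Lemma~\ref{lemma.RSC_cone} is legitimate — it presumes an a-priori bound $G(\theta^s)-G(\hat\theta)\le\epsilon'$; this is handled inductively (the bound holds with $\epsilon'$ taken as the initial gap, and $\alpha<1$ keeps the gaps bounded), and the $\delta=2\min\{\epsilon'/\lambda,\rho\}$ in $\epsilon^2$ is absorbed into the statistical tolerance.

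The main obstacle I anticipate is the third step — marrying the optimization recursion with the RSC cone argument. Unlike the strongly convex case, the coefficient $\bar\sigma$ only emerges after restricting to the cone where $\psi(\hat\Delta^s)\lesssim H(\bar M)\|\hat\Delta^s\|_2 + (\text{stat.\ error})$, and establishing that $\hat\Delta^s$ lies in this cone requires the optimization-side near-optimality bound feeding back into Lemma~\ref{lemma.cone_optimization}. Getting the constants to line up (the factor $64$ in $\bar\sigma$, the $32$ in Lemma~\ref{lemma.RSC_cone}, and the $(a+b)^2\le 2a^2+2b^2$ slack) and verifying that the self-referential use of $\epsilon'$ does not create a circular argument is the delicate part; everything else is the textbook proximal-SVRG telescoping.
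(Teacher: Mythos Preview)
Your proposal is correct and follows essentially the same route as the paper: derive the Xiao--Zhang per-step inequality via the prox optimality condition, nonexpansiveness (Lemma~\ref{prox}), smoothness/convexity, and the variance bound (Lemma~\ref{lemma.variance}); telescope and use convexity of $G$ to pass to $\theta^{s+1}$; then replace the usual strong-convexity step by Lemma~\ref{lemma.RSC_cone} to convert $\|\hat\Delta^s\|_2^2$ into $\tfrac{2}{\bar\sigma}\bigl(G(\theta^s)-G(\hat\theta)+\epsilon^2\bigr)$. The only caveat is that your displayed per-iteration inequality is slightly misstated---in the paper's derivation the descent term carries $G(\theta_{k})$ (the \emph{new} iterate) while the variance contributes $8L\beta^2\bigl(G(\theta_{k-1})-G(\hat\theta)+G(\tilde\theta)-G(\hat\theta)\bigr)$, and the $(m+1)$ factor arises from this index mismatch when telescoping rather than from a separate ``re-indexing'' step---but your telescoped bound and the subsequent RSC substitution are exactly what the paper obtains.
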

 Notice this lemma states that the optimization error decrease by a fraction $\alpha$   plus some additional term related to $\epsilon ^2 ( \Delta^*,M,\bar{M})$.
 We emphasize that we make no assumption of strong convexity.
 \begin{proof}
 	In the following  proof we denote $\psi_\lambda(\theta)\triangleq\lambda \psi(\theta)$.
 	In  stage $s+1$, we have
 	\begin{equation}\label{equ.xu.prooflemma6}
 	\begin{split}
 	& G(\theta_{k+1})-G(\hat{\theta})\\
 	\leq & F(\theta_k)-F(\hat{\theta})+\langle \nabla F(\theta_k), \theta_{k+1}-\theta_k \rangle+\frac{L}{2}\|\theta_{k+1}-\theta_{k}\|_2^2+\psi_\lambda(\theta_{k+1})-\psi_\lambda(\hat{\theta})\\
 	\leq & \langle \nabla F(\theta_k),\theta_k-\hat{\theta} \rangle+\langle \nabla F(\theta_k), \theta_{k+1}-\theta_k \rangle+ \langle \partial \psi_\lambda(\theta_{k+1}),\theta_{k+1}-\hat{\theta} \rangle+\frac{L}{2}\|\theta_{k+1}-\theta_{k}\|_2^2\\
 	= &\langle \nabla F(\theta_k), \theta_{k+1}-\hat{\theta} \rangle+\langle \partial \psi_\lambda(\theta_{k+1}),\theta_{k+1}-\hat{\theta} \rangle+\frac{L}{2}\|\theta_{k+1}-\theta_{k}\|_2^2,
 	\end{split}
 	\end{equation}
 	where we use smoothness assumption of $F(\theta)$ in the first inequality, and the second inequality holds from the convexity of $F(\theta)$ and $\psi(\theta)$.
 	Now using optimality condition of $\theta_{k+1}$, we obtain
 	$$ \langle \theta_{k+1}-(\theta_k-\beta v_k)+\beta \partial \psi_\lambda(\theta_{k+1}),\theta-\theta_{k+1}\rangle\geq 0,  ~~ \forall \theta\in \{\theta|\psi(\theta)\leq \eta\}.$$
 	We choose $\theta=\hat{\theta}$ and get
 	\begin{eqnarray*}
 		&& \langle \theta_{k+1}-(\theta_k-\beta v_k)+\beta \partial \psi_\lambda(\theta_{k+1}),\hat{\theta}-\theta_{k+1}\rangle\geq 0,   \\
 		\Longrightarrow && \langle \partial \psi_\lambda(\theta_{k+1}),\theta_{k+1}-\hat{\theta}\rangle \leq \langle \frac{1}{\beta}(\theta_{k+1}-\theta_k)  + v_k, \hat{\theta}-\theta_{k+1}\rangle.
 	\end{eqnarray*}
 	Substitute the above inequality to Equation~\eqref{equ.xu.prooflemma6}, we have
 	\begin{equation}\label{G}
 	\begin{split}
 	G(\theta_{k+1})-G(\hat{\theta}) &\leq \langle \nabla F(\theta_k),\theta_{k+1}-\hat{\theta}\rangle+  \langle \frac{1}{\beta}(\theta_{k+1}-\theta_k)  + v_k, \hat{\theta}-\theta_{k+1}\rangle+\frac{L}{2}\|\theta_{k+1}-\theta_k\|_2^2 \\
 	&=\langle \nabla F(\theta_k)-v_k, \theta_{k+1}-\hat{\theta} \rangle +\frac{1}{\beta}\langle \theta_{k+1}-\theta_{k},\hat{\theta}-\theta_{k}    \rangle +(\frac{L}{2}-\frac{1}{\beta}) \|\theta_{k+1}-\theta_k\|_2^2\\
 	& \leq \langle \nabla F(\theta_k)-v_k, \theta_{k+1}-\hat{\theta} \rangle +\frac{1}{\beta}\langle \theta_{k+1}-\theta_{k},\hat{\theta}-\theta_{k}    \rangle-\frac{1}{2\beta}\|\theta_{k+1}-\theta_k\|_2^2,
 	\end{split}
 	\end{equation}
 	where the last inequality holds using the fact that $0\leq \beta\leq \frac{1}{L}$. 
 	
 	Define $\bar{\theta}_{k+1}=prox_{\beta \psi_{\lambda}, \Omega} (\theta_k-\beta \nabla F(\theta_k))$
 	\begin{equation}
 	\begin{split}
 	&\|\theta_{k+1}-\hat{\theta}\|_2^2 \\
 	=&\|\theta_k-\hat{\theta}\|_2^2+2\langle \theta_{k+1}-\theta_k, \theta_k-\hat{\theta} \rangle+\|\theta_{k+1}-\theta_{k}\|_2^2\\
 	\leq &\|\theta_k-\hat{\theta}\|_2^2+2\beta\langle \nabla F(\theta_k)-v_k, \theta_{k+1}-\hat{\theta} \rangle+2\beta (G(\hat{\theta})-G(\theta_{k+1}) )\\
 	\leq & \|\theta_k-\hat{\theta}\|_2^2+2\beta (G(\hat{\theta})-G(\theta_{k+1}))\\&\qquad +2\beta \langle \nabla F(\theta_k)-v_k,\theta_{k+1}-\bar{\theta}_{k+1} \rangle
 	+2\beta \langle \nabla F(\theta_k)-v_k, \bar{\theta}_{k+1}-\hat{\theta} \rangle\\
 	\leq & \|\theta_k-\hat{\theta}\|_2^2+2\beta (G(\hat{\theta})-G(\theta_{k+1}))+2\beta \|\nabla F(\theta_k)-v_k\|_2 \|\theta_{k+1}-\bar{\theta}_{k+1}\|_2\\ &\qquad+2\beta \langle \nabla F(\theta_k)-v_k, \bar{\theta}_{k+1}-\hat{\theta} \rangle\\
 	\leq & \|\theta_k-\hat{\theta}\|_2^2+2\beta (G(\hat{\theta})-G(\theta_{k+1}))+2\beta^2\|\nabla F(\theta_k)-v_k \|_2^2+2\beta \langle \nabla F(\theta_k)-v_k, \bar{\theta}_{k+1}-\hat{\theta} \rangle
 	\end{split}
 	\end{equation}
 	where the first inequality uses Equation \eqref{G}, and the last inequality uses the below equation implied by Lemma \ref{prox}
 	$$\|\theta_{k+1}-\bar{\theta}_{k+1}\|_2\leq \|\theta_k-\beta v_k-(\theta_k-\beta \nabla F(\theta_{k-1}))\|_2.$$
 	
 	Now, take expectation at both side w.r.t. $i_k$, and apply Lemma \ref{lemma.variance} \label{key} on $\mathbb{E}\|\nabla F(\theta_k)-v_k \|_2^2  $ and notice the fact that $\mathbb{E} \langle \nabla F(\theta_k)-v_k, \bar{\theta}_{k+1}-\hat{\theta} \rangle=0 $, we have
 	$$ \mathbb{E}\|\theta_{k+1}-\hat{\theta}\|_2^2\leq \|\theta_k-\hat{\theta}\|_2^2+\beta (G(\hat{\theta})-\mathbb{E} G(\theta_{k+1})) +8L\beta^2 ( G(\theta_{k})-G(\hat{\theta})+G(\tilde{\theta})-G(\hat{\theta}) ).$$
 	In stage $s$, sum up the above inequality at both side and take expectation, and use the fact that $\tilde{\theta}=\theta^{s} $ we have 
 	$$ 2\beta (1-4L\beta) \sum_{k=1}^{m} (\mathbb{E} G(\theta_k)-G(\hat{\theta}))\leq \|\theta^{s}-\hat{\theta}\|_2^2+8L\beta^2 (m+1) [G(\theta^{s})-G(\hat{\theta})].  $$
 	One the left hand side, we use the convexity of $G(\theta)$, i.e., 
 	$$ m(\mathbb{E} G(\theta^{s+1})-G(\hat{\theta}))\leq \sum_{k=1}^{m} (\mathbb{E} G(\theta_k)-G(\hat{\theta})). $$
 	We apply Equation \eqref{RSC_cone} on the right hand side, i.e., 
 	$$  G(\theta^{s})-G(\hat{\theta})\geq (\frac{\sigma}{2}-32\tau_\sigma H^2(\bar{M} ))\|\hat{\Delta}^{s} \|_2^2- \epsilon^2(\Delta^*,M,\bar{M}). $$

 	Recall the definition  $\bar{\sigma}=\sigma-64\tau_\sigma H^2(\bar{M})$, we have
 	$$ G(\theta^{s})-G(\hat{\theta})\geq \frac{\bar{\sigma}}{2} \|\hat{\Delta}^{s} \|_2^2- \epsilon^2(\Delta^*,M,\bar{M}). $$
 	Hence we have
 	$$2\beta (1-4L\beta)m (\mathbb{E}G(\theta^{s+1})-G(\hat{\theta}))\leq \frac{G(\theta^{s})-G(\hat{\theta})+\epsilon ^2 ( \Delta^*,M,\bar{M})}{ \bar{\sigma}/2}+ 8L\beta^2 (m+1) [G(\tilde{\theta}^{s})-G(\hat{\theta})].$$
 	
 	Rearrange terms in the above inequality we have
 	\begin{equation}
 	\begin{split}
 	\mathbb{E}G(\theta^{s+1})-G(\hat{\theta})&\leq (\frac{1}{\beta (1-4L\beta)m \bar{\sigma}}+\frac{4L\beta (m+1)}{(1-4L\beta)m}) [G(\theta^{s})-G(\hat{\theta})]\\
 	&\qquad+\frac{\epsilon ^2 ( \Delta^*,M,\bar{M})}{\bar{\sigma}\beta(1-4L\beta)m }
 	\end{split}
 	\end{equation}
 	Remind the definition of $\alpha$, this leads to
 	$$ \mathbb{E} G(\theta^{s+1})-G(\hat{\theta})\leq \alpha ( G(\theta^{s})-G(\hat{\theta}))+ \frac{1}{\bar{\sigma} \beta(1-4L\beta)m}{\epsilon} ^2 ( \Delta^*,M,\bar{M}).$$
 \end{proof}
 We can iteratively apply above inequality from time step S to s, we have
 \begin{equation}
 \begin{split}
 \mathbb{E}G(\theta^s) -G(\hat{\theta})\leq 
 &\alpha^{s-S} (G(\theta^S) -G(\hat{\theta}))+ \frac{\sum_{i=0}^{s-S}\alpha^i}{ \bar{\sigma} \beta(1-4L\beta)m}{\epsilon} ^2 ( \Delta^*,M,\bar{M})\\
 \leq&\alpha^{s-S} (G(\theta^S) -G(\hat{\theta}))+ \frac{1}{ (1-\alpha)\bar{\sigma} \beta(1-4L\beta)m}{\epsilon} ^2 ( \Delta^*,M,\bar{M}).
 \end{split}
 \end{equation}

 \begin{proof}[Proof of Theorem 1]
 	From a high level, we prove our main theorem using the argument of induction, particularly, we divide the stages into several disjoint intervals that is
 	$$\{ [S_0,S_1 ),[S_1,S_2), [S_2,S_3 ),...\}$$ with $S_0=0$. Corresponding to these intervals, we have a sequence of tolerance $\{\epsilon_0, \epsilon_1,\epsilon_2,...\} $. At the end of each interval $[S_{i-1}, S_{i})$, we can prove that the optimization error decrease to $\epsilon_i$ and $\{\epsilon_0, \epsilon_1,\epsilon_2,...\}$ is a decreasing sequence. In particular we choose $ \epsilon_{i+1}/\epsilon_{i}=\frac{1}{4} $ for $i=1,2,...,$. We also construct an increasing sequence $ k_i$ with $ k_{i+1}=2k_i $ when we apply the Markov inequality.  We apply Lemma \ref{lemma.key_lemma} recursively until $\delta_i$  is close to the statistical error $\delta_{stat}$. Notice in the following proof we can always assume $\delta_i\geq \delta_{stat}$, otherwise we already have our conclusion.
 	
 	We start the analysis from the first interval. Recall the notation
 	$$ \epsilon^2(\Delta^*,M,\bar{M})=2\tau_{\sigma} (\delta_{stat}+\delta_i)^2, \,\, \delta_i=2\min\{\frac{\epsilon_i'}{\lambda},\rho\}, \,\, \delta_{stat}= H(\bar{M})\|\Delta^*\|_2+8\psi(\theta^*_{M^\perp}).$$ 
 	In the first interval it is safe to choose  $\delta_0= 2\rho $, since $\min\{\epsilon_0/\lambda, \rho\}\leq \rho.$

 	Remind  $Q(\beta,\bar{\sigma},L,m)=\bar{\sigma} \beta(1-4L\beta)m$.
 	We apply Lemma \ref{lemma.key_lemma} in this interval to obtain
 	\begin{equation}
 	\begin{split}
 	\mathbb{E}G(\theta^{S_1}) -G(\hat{\theta}) & \leq \alpha^{S_1-S_0} (G(\theta^{S_0}) -G(\hat{\theta}))+ \frac{2}{ (1-\alpha)Q(\beta,\bar{\sigma},L,m)} \tau_\sigma (\delta_{stat}+2\rho)^2 \\
 	& \leq \alpha^{S_1-S_0} (G(\theta^{S_0}) -G(\hat{\theta}))+ \frac{4}{ (1-\alpha)Q(\beta,\bar{\sigma},L,m)} \tau_\sigma (\delta_{stat}^2+4\rho^2).
 	\end{split}
 	\end{equation}

 	
 	
 	Now we can choose 
 	$$\epsilon_1=\frac{8}{ (1-\alpha)Q(\beta,\bar{\sigma},L,m)} \tau_\sigma (\delta_{stat}^2+4\rho^2).$$
 	So we can  choose 
 	$$S_1-S_0= \lceil   \log(\frac{2 (G(\theta^{S_0})-G(\hat{\theta}))}{\epsilon_{1}}/\log( 1/\alpha))         \rceil, $$
 	such that $$\mathbb{E} G(\theta^{S_1})-G(\hat{\theta})\leq \epsilon_1 .$$
 	
 	Then we use the Markov inequality to get
 	$$ G(\theta^{S_1})-G(\hat{\theta})\leq k_1\epsilon_1\equiv\epsilon_1',$$
 	with probability $1-\frac{1}{k_1}$.

 	Now we look at the second interval, by a similar argument we obtain 
 	\begin{equation}
 	\begin{split}
 	\mathbb{E} (G(\theta^s))-G(\hat{\theta})\leq & \alpha^{s-S_1} \mathbb{E}(G(\theta^{S_1})-G(\hat{\theta}))+\frac{4\tau_\sigma}{ (1-\alpha)Q(\beta,\bar{\sigma},L,m)}  (\delta_{stat}^2+\delta_1^2)\\
 	\leq &\alpha^{s-S_1} \mathbb{E}(G(\theta^{S_1})-G(\hat{\theta}))+\frac{8\tau_\sigma}{ (1-\alpha)Q(\beta,\bar{\sigma},L,m)}  \delta_1^2
 	\end{split}
 	\end{equation}
 	where $\delta_1=\frac{2\epsilon_1'}{\lambda}=\frac{2k_1\epsilon_1}{\lambda}.$
 	
 	We need
 	$$\frac{8}{ (1-\alpha)Q(\beta,\bar{\sigma},L,m)} \tau_\sigma \delta_1^2\leq\frac{\epsilon_1}{8}, $$
 	which is satisfied if we choose
 	$$ k_1=\sqrt{\frac{(1-\alpha) \lambda^2 Q(\beta,\bar{\sigma},L,m)}{256\tau_\sigma \epsilon_1}}. $$
 	Then we can choose $S_2-S_1=\lceil \log 8/\log(1/\alpha) \rceil$, such that
 	$$ \mathbb{E}(G(\theta^{S_2})-G(\hat{\theta})) \leq \frac{\epsilon_1}{8}+\frac{\epsilon_1}{8}\leq \frac{\epsilon_1}{4}=\epsilon_2.$$
 	In general, for the  $i+1^{th}$ time interval, since $\mathbb{E}G(\theta^{S_i})-G(\hat{\theta})\leq \epsilon_i$, we have
 	\begin{equation}
 	\begin{split}
 	\mathbb{E}G(\theta^{s}) -G(\hat{\theta}) & \leq \alpha^{s-S_i} \mathbb{E}(G(\theta^{S_i}) -G(\hat{\theta}))+ \frac{4}{ (1-\alpha)Q(\beta,\bar{\sigma},L,m)} \tau_\sigma (\delta^2_{stat}+\delta_i^2) \\
 	& \leq \alpha^{s-S_i} \mathbb{E}(G(\theta^{S_i}) -G(\hat{\theta}))+ \frac{8}{ (1-\alpha)Q(\beta,\bar{\sigma},L,m)} \tau_\sigma \max(\delta^2_{stat},\delta_i^2)\\
 	&\leq  \alpha^{s-S_i} \mathbb{E}(G(\theta^{S_i}) -G(\hat{\theta}))+ \frac{8}{ (1-\alpha)Q(\beta,\bar{\sigma},L,m)} \tau_\sigma \delta_i^2
 	\end{split}
 	\end{equation} 
 	with probability $1-\frac{1}{k_i}$, where we choose $k_i=2^{i-1}k_1  $. $\delta_i=2\min \{\epsilon_i'/\lambda,\rho\}.$ 

 	We need following condition holds
 	$$\frac{8}{ (1-\alpha)Q(\beta,\bar{\sigma},L,m)} \tau_\sigma \delta_i^2=\frac{8}{ (1-\alpha)Q(\beta,\bar{\sigma},L,m)} \tau_\sigma (2k_i\epsilon_i/\lambda)^2\leq \frac{\epsilon_i}{8} $$
 	which means
 	$$ \frac{32 k_i^2\epsilon_i\tau_\sigma}{\lambda^2 (1-\alpha)Q(\beta,\bar{\sigma},L,m) }\leq \frac{1}{8} .$$
 	Since $k_i=2k_{i-1}$ and $\epsilon_i=\frac{1}{4}\epsilon_{i-1} $, we just need $ \frac{32 k_1^2\epsilon_1\tau_\sigma}{\lambda^2 (1-\alpha)Q(\beta,\bar{\sigma},L,m) }\leq \frac{1}{8} $ holds.
 	It is satisfied by our choice $ k_1=\sqrt{\frac{(1-\alpha) \lambda^2 Q(\beta,\bar{\sigma},L,m)}{256\tau_\sigma \epsilon_1}}. $
 	Thus we can choose $S_{i+1}-S_i=\log (8)/\log (1/\alpha) $, such that
 	$$ \mathbb{E} G(\theta^{S_{i+1}})-G(\hat{\theta})\leq \frac{\epsilon_i}{4}=\epsilon_i ' .$$ 
 	If $\kappa^2\geq \frac{8}{ (1-\alpha)Q(\beta,\bar{\sigma},L,m)} \tau_\sigma (\delta_{stat}^2+4\rho^2)$, the total number of steps to achieve the tolerance $\kappa^2$ is 
 	$$  \log\left(\frac{ (G(\theta^{0})-G(\hat{\theta}))}{\kappa^2}/\log( 1/\alpha)\right).    $$
 	If  $\kappa^2\leq \frac{8}{ (1-\alpha)Q(\beta,\bar{\sigma},L,m)} \tau_\sigma (\delta_{stat}^2+4\rho^2), $
 	the total number of steps to achieve the tolerance $\kappa^2$ is  
 	\begin{equation}
 	\begin{split}
 	&\lceil \frac{\log 8}{\log (1/\alpha)}\rceil  \log_4 (\frac{G(\theta^0)-G(\hat{\theta})}{\kappa^2})+ \lceil   \log(\frac{ (G(\theta^{0})-G(\hat{\theta}))}{\frac{4}{ (1-\alpha)Q(\beta,\bar{\sigma},L,m)} \tau_\sigma (\delta_{stat}^2+4\rho^2)}/\log( 1/\alpha))         \rceil\\
 	\leq & 3 \log (\frac{G(\theta^0)-G(\hat{\theta})}{\kappa^2})/\log(1/\alpha),
 	\end{split}
 	\end{equation}
 	with probability at least $ 1- \frac{\log 8}{\log (1/\alpha)}\sum_{i=1}^{S_\kappa} \frac{1}{k_i}\geq 1-2\frac{\log 8}{k_1\log (1/\alpha)}$, where we use the union bound on each step and the fact $k_i=2^{i-1}k_1.$ Since we choose $m=2n$ and remind the assumption that $n>c\rho^2 \log p $, we know  $\frac{1}{k_1}\simeq \frac{c_1}{n}$ for some constant $c_1$.
 \end{proof}

 %
 %
 %

 \subsection{SVRG with non-convex objective function}
 We start with two technical lemmas. The following lemma is Lemma 5 extract from\cite{loh2013regularized}, we present here for the completeness.
 \begin{lemma}\label{lemma.non_convex_norm}
 	For any vector $ \theta\in R^p$, let $A$ denote the index set of its $r$ largest elements in magnitude, under assumption on $g_{\lambda,\mu}$ in Section \ref{section:nonconvex_regularizer} , we have 	
 	$$ g_{\lambda,\mu}(\theta_A)-g_{\lambda,\mu} (\theta_{A^{c}})\leq \lambda L_g (\|\theta_A\|_1-\|\theta_{A^c}\|_1).$$ 	
 	Moreover, for an arbitrary vector $\theta\in R^p$, we have	
 	$$ g_{\lambda,\mu} (\theta^*)-g_{\lambda,\mu} (\theta)\leq \lambda L_g (\|\nu_A\|_1-\|\nu_{A^c}\|_1), $$
 	where $\nu=\theta-\theta^*$ and $\theta^*$ is r sparse.
 \end{lemma}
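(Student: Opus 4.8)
The plan is to pass, via the separability $g_{\lambda,\mu}(\theta)=\sum_{j=1}^{p}\bar g_{\lambda,\mu}(\theta_j)$, to two elementary one-dimensional facts about $\bar g_{\lambda,\mu}$ and then do bookkeeping. First I would establish that $\bar g_{\lambda,\mu}(t)/t\le\lambda L_g$ for all $t>0$: by property~3 of Section~\ref{section:nonconvex_regularizer} the ratio $t\mapsto\bar g_{\lambda,\mu}(t)/t$ is nonincreasing on $(0,\infty)$, and since $\bar g_{\lambda,\mu}(0)=0$ (property~1) its limit as $t\downarrow0$ equals $\bar g_{\lambda,\mu}'(0^{+})=\lambda L_g$ (property~4, via the mean value theorem), so the ratio never exceeds its boundary value. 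Differentiating the ratio gives $\bar g_{\lambda,\mu}'(t)\le\bar g_{\lambda,\mu}(t)/t\le\lambda L_g$ for $t>0$, while $\bar g_{\lambda,\mu}'(t)\ge0$ by property~2; combined with the evenness in property~1 this shows $\bar g_{\lambda,\mu}$ is $\lambda L_g$-Lipschitz on $\mathbb{R}$, i.e.\ $|\bar g_{\lambda,\mu}(a)-\bar g_{\lambda,\mu}(b)|\le\lambda L_g|a-b|$ for all $a,b$.

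For the first inequality, let $m^{\sharp}$ be the $r$-th largest magnitude among the coordinates of $\theta$, so that $|\theta_j|\ge m^{\sharp}\ge|\theta_k|$ whenever $j\in A$ and $k\in A^{c}$, by the choice of $A$. Putting $c:=\bar g_{\lambda,\mu}(m^{\sharp})/m^{\sharp}$ (and $c:=\lambda L_g$ if $m^{\sharp}=0$), the nonincreasing-ratio property gives $\bar g_{\lambda,\mu}(\theta_j)\le c\,|\theta_j|$ for $j\in A$ and $\bar g_{\lambda,\mu}(\theta_k)\ge c\,|\theta_k|$ for $k\in A^{c}$. Summing over coordinates yields $g_{\lambda,\mu}(\theta_A)-g_{\lambda,\mu}(\theta_{A^{c}})\le c(\|\theta_A\|_1-\|\theta_{A^{c}}\|_1)$, and then $c\le\lambda L_g$ together with the ordering that defines $A$ upgrades this to the claimed bound $\lambda L_g(\|\theta_A\|_1-\|\theta_{A^{c}}\|_1)$, exactly as in \citet{loh2013regularized}.

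For the ``moreover'' statement I would take $A=\mathrm{supp}(\theta^{*})$ (of size at most $r$), so that $\theta^{*}_{A^{c}}=0$ and $\nu_{A^{c}}=\theta_{A^{c}}$, and split $g_{\lambda,\mu}(\theta^{*})-g_{\lambda,\mu}(\theta)=\sum_{j\in A}\bigl(\bar g_{\lambda,\mu}(\theta^{*}_j)-\bar g_{\lambda,\mu}(\theta_j)\bigr)-\sum_{k\in A^{c}}\bar g_{\lambda,\mu}(\theta_k)$. I would bound each term in the first sum by $\lambda L_g|\theta^{*}_j-\theta_j|=\lambda L_g|\nu_j|$ using the Lipschitz property, and control the second sum by the same ratio-comparison argument as in the first part (now comparing the entries of $\theta^{*}$ on $A$ against the entries of $\theta$ on $A^{c}$); adding the two bounds and rearranging produces $\lambda L_g(\|\nu_A\|_1-\|\nu_{A^{c}}\|_1)$ on the right-hand side.

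The step I expect to be the main obstacle is the control of the $A^{c}$ contribution. Because $\bar g_{\lambda,\mu}$ lies \emph{below} $\lambda L_g|t|$ and is concave-like away from the origin, a naive pointwise bound on $\sum_{k\in A^{c}}\bar g_{\lambda,\mu}(\theta_k)$ runs in the wrong direction; one must genuinely use that $A$ collects the coordinates of largest magnitude, so that the effective slope of $\bar g_{\lambda,\mu}$ on $A$ is no larger than on $A^{c}$, and track the role of the common slope $c$ through the cancellation. Getting these signs and the placement of $c$ right — rather than any hard estimate — is where the care is needed; everything else is routine given the two one-dimensional facts above, and the full details follow \citet{loh2013regularized}.
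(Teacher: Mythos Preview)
The paper does not prove this lemma; it simply quotes it from \citet{loh2013regularized} and moves on. So there is nothing to compare your argument against in the paper itself.

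Your proposal, however, has a genuine gap in the step you yourself flag as delicate. In Part~1 you correctly obtain
\[
g_{\lambda,\mu}(\theta_A)-g_{\lambda,\mu}(\theta_{A^c})\le c\bigl(\|\theta_A\|_1-\|\theta_{A^c}\|_1\bigr),
\]
with $c=\bar g_{\lambda,\mu}(m^\sharp)/m^\sharp\le\lambda L_g$. But the ``upgrade'' from $c$ to $\lambda L_g$ only preserves the inequality when $\|\theta_A\|_1\ge\|\theta_{A^c}\|_1$, and nothing in the hypotheses forces this. Concretely, take SCAD with $\lambda=1$, $\zeta=3$, $r=1$, and $\theta=(2,1.9,1.9)$: then $c=\bar g(2)/2=0.875$, so your intermediate bound is $c(2-3.8)=-1.575$, while the target $\lambda L_g(2-3.8)=-1.8$ is \emph{smaller}; you cannot pass from the former to the latter. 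The same sign obstruction bites in your sketch for Part~2: after the Lipschitz bound on $A$ you need $\sum_{k\in A^c}\bar g_{\lambda,\mu}(\nu_k)\ge\lambda L_g\|\nu_{A^c}\|_1$, but since $\bar g_{\lambda,\mu}(t)\le\lambda L_g|t|$ pointwise this is false in general, and ``comparing the entries of $\theta^*$ on $A$ against the entries of $\theta$ on $A^c$'' does not fix it because there is no ordering relation between those two families.

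Two further remarks. First, your choice $A=\mathrm{supp}(\theta^*)$ in the ``moreover'' part disagrees with how the lemma is actually invoked later in the paper, where $A$ indexes the top $r$ coordinates of $\nu$; you should check the original Loh--Wainwright statement to see which reading is intended. Second, the common-slope idea is the right intuition, but to make the bookkeeping close you need to keep the $A^c$ contribution in the form $g_{\lambda,\mu}(\nu_{A^c})$ (or use subadditivity of $\bar g_{\lambda,\mu}$) rather than trying to squeeze it past $\lambda L_g\|\nu_{A^c}\|_1$; consult the Loh--Wainwright proof for the precise maneuver.
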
	
 
 The following lemma is well known on smooth function, we extract it from Lemma 1 in\cite{xiao2013proximal}.
 \begin{lemma}\label{lemma.smooth} Suppose each $f_i(\theta)$ is $L$ smooth and convex then we have
 	
 	$$\|\nabla f_i(\theta)-\nabla f_i(\hat{\theta})\|_2^2\leq 2L ( f_i(\theta)-f_i(\hat{\theta}) -\langle \nabla f_i(\hat{\theta}),\theta-\hat{\theta} \rangle ).$$
 	So we have
 	$$\frac{1}{n}\sum_{i=1}^{n}\|\nabla f_i(\theta)-\nabla f_i(\hat{\theta})\|_2^2\leq 2L (F(\theta)-F(\hat{\theta})-\langle \nabla F(\hat{\theta}),\theta-\hat{\theta} \rangle)$$
 \end{lemma}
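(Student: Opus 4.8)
The plan is to derive the first (per-component) inequality from the classical fact that an $L$-smooth convex function which attains value $0$ at its minimizer $\hat\theta$ satisfies $\|\nabla h\|_2^2 \le 2L\,h$, and then obtain the averaged statement by summing. First I would introduce, for each $i$, the auxiliary function
$$ h_i(\theta) \triangleq f_i(\theta) - f_i(\hat\theta) - \langle \nabla f_i(\hat\theta),\theta-\hat\theta\rangle. $$
Since $f_i$ is convex, $h_i$ is convex; and $h_i$ differs from $f_i$ only by an affine term, so it is $L$-smooth with $\nabla h_i(\theta) = \nabla f_i(\theta)-\nabla f_i(\hat\theta)$. Moreover $h_i(\hat\theta)=0$ and $\nabla h_i(\hat\theta)=0$, so $\hat\theta$ is a global minimizer of $h_i$ and hence $h_i(\theta)\ge 0$ for all $\theta$; this nonnegativity is precisely where convexity of $f_i$ enters.

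Next I would apply the descent lemma (which needs only $L$-smoothness, not convexity) to $h_i$: for any $\theta$ and any $z$,
$$ h_i(z) \le h_i(\theta) + \langle \nabla h_i(\theta), z-\theta\rangle + \tfrac{L}{2}\|z-\theta\|_2^2. $$
Minimizing the right-hand side over $z$ — the minimizer being $z = \theta - \tfrac1L \nabla h_i(\theta)$, at which the bound equals $h_i(\theta) - \tfrac{1}{2L}\|\nabla h_i(\theta)\|_2^2$ — together with $h_i(z)\ge 0$ gives
$$ 0 \le h_i(\theta) - \tfrac{1}{2L}\|\nabla h_i(\theta)\|_2^2, $$
that is, $\|\nabla f_i(\theta)-\nabla f_i(\hat\theta)\|_2^2 \le 2L\big(f_i(\theta)-f_i(\hat\theta)-\langle\nabla f_i(\hat\theta),\theta-\hat\theta\rangle\big)$, which is the first claim.

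Finally, for the averaged statement I would sum the first inequality over $i=1,\dots,n$ and divide by $n$; since $F=\tfrac1n\sum_i f_i$ and $\nabla F=\tfrac1n\sum_i \nabla f_i$, the right-hand side collapses to $2L\big(F(\theta)-F(\hat\theta)-\langle\nabla F(\hat\theta),\theta-\hat\theta\rangle\big)$, as desired. I do not anticipate any real obstacle: this is the standard ``gradient co-coercivity'' estimate, and the only points that require care are checking that $h_i$ is genuinely nonnegative (using convexity of $f_i$) and that the descent lemma applies to $h_i$ with the same constant $L$ (which holds because $\nabla h_i$ and $\nabla f_i$ have the same Lipschitz modulus).
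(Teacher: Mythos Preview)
Your proof is correct and is the standard argument for this co-coercivity inequality. The paper itself does not prove this lemma at all: it simply states it as well known and cites it as Lemma~1 of \cite{xiao2013proximal}, so there is no in-paper proof to compare against; your derivation via the auxiliary function $h_i$ and the descent lemma is exactly the classical route.
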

 
 The next lemma is a non-convex counterpart of Lemma~\ref{lemma.cone} and Lemma \ref{lemma.cone_optimization}
 \begin{lemma}\label{lemma.non_convex_cone_optimizatin}
 	Suppose $g_{\lambda,\mu}(\cdot)$ satisfies the assumptions in section \ref{section:nonconvex_regularizer}, $\lambda L_g\geq8\rho\tau \frac{\log p }{n}$, $\lambda\geq \frac{4}{L_g} \|\nabla F (\theta^*)\|_\infty$, $\theta^*$ is feasible,   and there exists a pair $(\epsilon',S)$ such that
 	$$G(\theta^s)-G(\hat{\theta})\leq \epsilon', \forall s\geq S.$$
 	Then for any iteration $s\geq S$, we have	
 	$$ \|\theta^s-\hat{\theta}\|_1\leq 4\sqrt{r} \|\theta^s-\hat{\theta}\|_2+8\sqrt{r} \|\theta^*-\hat{\theta}\|_2+2\min (\frac{\epsilon'}{\lambda L_g}, \rho).$$
 \end{lemma}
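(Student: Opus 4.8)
The plan is to follow the template of Lemmas~\ref{lemma.cone} and~\ref{lemma.cone_optimization}, but since the separable nonconvex regularizer $g_{\lambda,\mu}$ is not decomposable, I would replace the decomposability step by the sparsity estimate of Lemma~\ref{lemma.non_convex_norm}. A subtlety is that Lemma~\ref{lemma.non_convex_norm} is naturally anchored at $\theta^*$ (it exploits the $r$-sparsity of $\theta^*$), whereas the conclusion is stated for $\theta^s-\hat\theta$; so I would first obtain a cone-type bound for $\theta^s-\theta^*$, then the analogous one for $\hat\theta-\theta^*$, and finally glue them with the triangle inequality.

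\emph{Step 1 (cone bound for $\theta^s$).} Because $\hat\theta$ is the global optimum and $\theta^*$ is feasible, $G(\hat\theta)\le G(\theta^*)$, so the hypothesis yields $G(\theta^s)-G(\theta^*)\le\epsilon'$ for $s\ge S$. Writing $G=F+g_{\lambda,\mu}$, rearranging, and using convexity of $F$ together with H\"older, $g_{\lambda,\mu}(\theta^*)-g_{\lambda,\mu}(\theta^s)\ge \langle\nabla F(\theta^*),\theta^s-\theta^*\rangle-\epsilon'\ge-\|\nabla F(\theta^*)\|_\infty\|\theta^s-\theta^*\|_1-\epsilon'$. On the other hand, with $\nu=\theta^s-\theta^*$ and $A$ the support of $\theta^*$ (so $|A|\le r$), Lemma~\ref{lemma.non_convex_norm} gives $g_{\lambda,\mu}(\theta^*)-g_{\lambda,\mu}(\theta^s)\le\lambda L_g(\|\nu_A\|_1-\|\nu_{A^c}\|_1)$. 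Combining these, splitting $\|\nu\|_1=\|\nu_A\|_1+\|\nu_{A^c}\|_1$, and invoking $\|\nabla F(\theta^*)\|_\infty\le\lambda L_g/4$, I get $\frac34\lambda L_g\|\nu_{A^c}\|_1\le\frac54\lambda L_g\|\nu_A\|_1+\epsilon'$, hence $\|\nu\|_1\le\frac83\|\nu_A\|_1+\frac{4\epsilon'}{3\lambda L_g}\le\frac83\sqrt r\,\|\nu\|_2+\frac{4\epsilon'}{3\lambda L_g}$. Since $\|\nu\|_1\le\|\theta^s\|_1+\|\theta^*\|_1\le 2\rho$ by feasibility (recall $g_\lambda\ge\|\cdot\|_1$), distinguishing the cases $\epsilon'/(\lambda L_g)\le\rho$ and $\epsilon'/(\lambda L_g)>\rho$ gives $\|\theta^s-\theta^*\|_1\le\frac83\sqrt r\,\|\theta^s-\theta^*\|_2+2\min(\epsilon'/(\lambda L_g),\rho)$.

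\emph{Step 2 (cone bound for $\hat\theta$) and conclusion.} Running Step 1 verbatim with $\theta^s$ replaced by $\hat\theta$ and $\epsilon'=0$ (legitimate since $G(\hat\theta)-G(\theta^*)\le0$) yields $\|\hat\theta-\theta^*\|_1\le\frac83\sqrt r\,\|\hat\theta-\theta^*\|_2$. Then, applying the triangle inequality twice — first $\|\theta^s-\hat\theta\|_1\le\|\theta^s-\theta^*\|_1+\|\hat\theta-\theta^*\|_1$, then $\|\theta^s-\theta^*\|_2\le\|\theta^s-\hat\theta\|_2+\|\hat\theta-\theta^*\|_2$ — I obtain $\|\theta^s-\hat\theta\|_1\le\frac83\sqrt r\,\|\theta^s-\hat\theta\|_2+\frac{16}{3}\sqrt r\,\|\hat\theta-\theta^*\|_2+2\min(\epsilon'/(\lambda L_g),\rho)$, and since $\frac83\le4$ and $\frac{16}{3}\le8$ the claim follows.

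The routine part is the H\"older/cone-split algebra; the only genuinely delicate point is the bookkeeping that transfers a $\theta^*$-anchored cone inequality (the only thing Lemma~\ref{lemma.non_convex_norm} can produce, as it needs $\theta^*$ sparse) into the $\hat\theta$-anchored statement while keeping the tolerance term in the compact form $\min(\epsilon'/(\lambda L_g),\rho)$. I also note that the hypothesis $\lambda L_g\ge8\rho\tau\log p/n$ does not seem to be needed for this particular bound in the argument above; it is presumably a standing assumption of the subsection used later (e.g.\ in the nonconvex analogue of Lemma~\ref{lemma.RSC_cone}, where the RSC tolerance is linear in $\|\cdot\|_1$ and must be absorbed into $\lambda$).
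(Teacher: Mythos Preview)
Your proof is correct and follows the same three-part skeleton as the paper (cone bound for $\theta^s-\theta^*$, cone bound for $\hat\theta-\theta^*$, triangle inequality), but differs in one meaningful place: in Step~1 you lower-bound $F(\theta^s)-F(\theta^*)$ using only convexity of $F$, whereas the paper invokes the RSC inequality there. Because RSC brings in the tolerance term $-\tau\tfrac{\log p}{n}\|\Delta\|_1^2$, the paper must absorb it via $\|\Delta\|_1\le 2\rho$ together with the hypothesis $\lambda L_g\ge 8\rho\tau\tfrac{\log p}{n}$, and this is exactly why that hypothesis appears in the statement. Your route sidesteps this entirely, which explains your (correct) observation that the assumption is unused in your argument; it also yields the slightly sharper constants $8/3$ and $16/3$, which you then relax to $4$ and $8$ to match the claim. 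In short, the paper's proof is tied to the RSC framework used throughout the section, while yours is a cleaner, self-contained argument relying only on convexity of $F$ and Lemma~\ref{lemma.non_convex_norm}; both are valid, and yours makes transparent that the $\lambda L_g\ge 8\rho\tau\log p/n$ condition is not intrinsic to this lemma.
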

 \begin{proof}
 	Fix an arbitrary feasible $\theta$,
 	Define $\Delta=\theta-\theta^* $.
 	Since we know $G(\hat{\theta})\leq G(\theta^*) $ so we have $ G(\theta)\leq G(\theta^*)+\epsilon'$
 	, which implies 
 	$$ F(\theta^*+\Delta)+g_{\lambda,\mu} (\theta^*+\Delta)\leq F(\theta^*)+g_{\lambda,\mu}(\theta^*) +\epsilon' .$$
 	Subtract $\langle \nabla F(\theta^*),\Delta \rangle$ and use the RSC condition we have
 	\begin{equation}
 	\begin{split}
 	&\frac{\sigma}{2} \|\Delta\|_2^2-\tau\frac{\log p}{n} \|\Delta\|_1^2+g_{\lambda,\mu} (\theta^*+\Delta)-g_{\lambda,\mu}(\theta^*)\\
 	\leq &\epsilon'-\langle \nabla F(\theta^*),\Delta \rangle\\
 	\leq &\epsilon'+\|\nabla F(\theta^*)\|_\infty \|\Delta\|_1
 	\end{split}
 	\end{equation}
 	where the last inequality holds from Holder's inequality.
 	Rearrange terms and use the fact that 
 	$\|\Delta\|_1\leq 2\rho $ (by feasiblity of $\theta$ and $\theta^*$) and the assumptions $\lambda L_g\geq8\rho\tau \frac{\log p }{n}$ , $\lambda\geq \frac{4}{L_g} \|\nabla F (\theta^*)\|_\infty$, we obtain
 	$$ \epsilon'+\frac{1}{2} \lambda L_{g} \|\Delta\|_1+g_{\lambda,\mu}(\theta^*)-g_{\lambda,\mu}(\theta^*+\Delta)\geq \frac{\sigma}{2}\|\Delta\|_2^2\geq 0. $$
 	
 	By Lemma \ref{lemma.non_convex_norm}, we have
 	$$ g_{\lambda,\mu} (\theta^*)-g_{\lambda,\mu} (\theta)\leq \lambda L_g (\|\Delta_A\|_1-\|\Delta_{A^c}\|_1) ,$$
 	where $A$ indexes the top $r$ components of $\Delta$ in magnitude.
 	So we have 
 	$$ \frac{3\lambda L_g}{2} \|\Delta_A\|_1-\frac{\lambda L_g}{2} \|\Delta_{A^c}\|_1+\epsilon'\geq 0, $$
 	and consequently 
 	$$ \|\Delta\|_1\leq \|\Delta_A\|_1+\|\Delta_{A^c}\|_1\leq 4\|\Delta_A\|_1+\frac{2\epsilon'}{\lambda L_g} \leq 4\sqrt{r} \|\Delta\|_2 +\frac{2\epsilon'}{\lambda L_g}.$$
 	Combining this with $  \|\Delta\|_1\leq 2\rho$ leads to
 	$$ \|\Delta\|_1\leq 4\sqrt{r} \|\Delta\|_2 +2\min \{\frac{\epsilon'}{\lambda L_g}, \rho\}.$$
 	Since this holds for any feasible $\theta$,  we have $\|\theta^s-\theta^*\|_1\leq 4\sqrt{r} \|\theta^s-\theta^*\|_2+2\min \{\frac{\epsilon'}{\lambda L_g}, \rho\}.$
 	
 	Notice $G(\theta^*)-G(\hat{\theta})\leq 0$, so following same derivation as above and set $\epsilon'=0$ we have
 	$\|\hat{\theta}-\theta^*\|_1\leq 4\sqrt{r} \|\hat{\theta}-\theta^*\|_2.$
 	
 	Combining the two, we have 
 	$$\|\theta^s-\hat{\theta}\|_1\leq \|\theta^s-\theta^*\|_1+\|\theta^*-\hat{\theta}\|_1\leq  4\sqrt{r} \|\theta^s-\hat{\theta}\|_2+8\sqrt{r} \|\theta^*-\hat{\theta}\|_2+2\min (\frac{\epsilon'}{\lambda L_g}, \rho). $$
 \end{proof}
 
 Now we provide a counterpart of Lemma \ref{lemma.RSC_cone} in the non-convex case. Notice the main difference with the convex case is the coefficient before $ \|\theta^s-\hat{\theta}\|_2^2. $
 \begin{lemma}\label{lemma.non_convex_RSC_cone}
 	Under the same assumption of Lemma \ref{lemma.non_convex_cone_optimizatin}, we  have
 	$$ G(\theta^s)-G(\hat{\theta})\geq (\frac{\sigma-\mu}{2}-32r\tau_\sigma)\|\theta^s-\hat{\theta}\|_2^2-\epsilon^2 (\Delta^*,r ), $$
 	where $\Delta^*=\hat{\theta}-\theta^*$, and $ \epsilon^2 (\Delta^*,r )=2\tau_\sigma (8\sqrt{r}\|\hat{\theta}-\theta^*\|_2+2\min(\frac{\epsilon'}{\lambda L_g},\rho))^2$.
 \end{lemma}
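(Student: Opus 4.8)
The plan is to run the same argument as in Lemma~\ref{lemma.RSC_cone}, with the extra care needed because $g_{\lambda,\mu}$ is not convex. Write $\hat{\Delta}^s = \theta^s - \hat{\theta}$ and $\Delta^* = \hat{\theta} - \theta^*$. The starting observation is that, by item~(5) of the assumptions in Section~\ref{section:nonconvex_regularizer}, the function $\lambda g_\lambda(\theta) = g_{\lambda,\mu}(\theta) + \frac{\mu}{2}\|\theta\|_2^2$ is convex, so $G(\theta) = \tilde{F}(\theta) + \lambda g_\lambda(\theta)$ with $\tilde{F}(\theta) = F(\theta) - \frac{\mu}{2}\|\theta\|_2^2$. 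Since $\hat{\theta}$ is the global minimizer of $G$ over the convex feasible set $\{g_\lambda(\theta)\le\rho\}$ and $\theta^s$ is feasible, the first-order optimality condition provides a subgradient of $\lambda g_\lambda$ at $\hat{\theta}$ whose inner product against $\hat{\Delta}^s$ is lower bounded by $-\big(\lambda g_\lambda(\theta^s) - \lambda g_\lambda(\hat{\theta})\big)$; hence $\langle \nabla F(\hat{\theta}) - \mu\hat{\theta}, \hat{\Delta}^s\rangle \ge -\big(\lambda g_\lambda(\theta^s) - \lambda g_\lambda(\hat{\theta})\big)$.

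First I would expand $\lambda g_\lambda$ back into $g_{\lambda,\mu} + \frac{\mu}{2}\|\cdot\|_2^2$ on the right-hand side and use the identity $\mu\langle\hat{\theta}, \hat{\Delta}^s\rangle - \frac{\mu}{2}\big(\|\theta^s\|_2^2 - \|\hat{\theta}\|_2^2\big) = -\frac{\mu}{2}\|\hat{\Delta}^s\|_2^2$, which follows from $\|\theta^s\|_2^2 - \|\hat{\theta}\|_2^2 = \|\hat{\Delta}^s\|_2^2 + 2\langle\hat{\theta}, \hat{\Delta}^s\rangle$. This collapses the bound to $\langle \nabla F(\hat{\theta}), \hat{\Delta}^s\rangle \ge -\frac{\mu}{2}\|\hat{\Delta}^s\|_2^2 - \big(g_{\lambda,\mu}(\theta^s) - g_{\lambda,\mu}(\hat{\theta})\big)$. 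Next I would apply the RSC condition for $F$ (with tolerance $\tau_\sigma\|\cdot\|_1^2$, as in Lemma~\ref{lemma.non_convex_cone_optimizatin}) at the pair $\hat{\theta}, \theta^s$, substitute this lower bound for the inner-product term, and then add $g_{\lambda,\mu}(\theta^s) - g_{\lambda,\mu}(\hat{\theta})$ to both sides so that the $g_{\lambda,\mu}$ contributions cancel; this yields $G(\theta^s) - G(\hat{\theta}) \ge \frac{\sigma-\mu}{2}\|\hat{\Delta}^s\|_2^2 - \tau_\sigma\|\hat{\Delta}^s\|_1^2$.

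It then remains to control $\|\hat{\Delta}^s\|_1^2$. Here I would invoke the cone-type estimate of Lemma~\ref{lemma.non_convex_cone_optimizatin}, $\|\hat{\Delta}^s\|_1 \le 4\sqrt{r}\|\hat{\Delta}^s\|_2 + 8\sqrt{r}\|\Delta^*\|_2 + 2\min(\frac{\epsilon'}{\lambda L_g},\rho)$, and apply $(a+b)^2 \le 2a^2 + 2b^2$ with $a = 4\sqrt{r}\|\hat{\Delta}^s\|_2$ to obtain $\|\hat{\Delta}^s\|_1^2 \le 32r\|\hat{\Delta}^s\|_2^2 + 2\big(8\sqrt{r}\|\Delta^*\|_2 + 2\min(\frac{\epsilon'}{\lambda L_g},\rho)\big)^2$. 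Substituting into the previous display and recalling $\epsilon^2(\Delta^*,r) = 2\tau_\sigma\big(8\sqrt{r}\|\Delta^*\|_2 + 2\min(\frac{\epsilon'}{\lambda L_g},\rho)\big)^2$ produces exactly the claimed inequality with coefficient $\frac{\sigma-\mu}{2} - 32r\tau_\sigma$.

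The one genuinely new ingredient, compared with Lemma~\ref{lemma.RSC_cone}, is the first step: one cannot bound $\langle\nabla F(\hat{\theta}), \hat{\Delta}^s\rangle$ directly by $-(g_{\lambda,\mu}(\theta^s) - g_{\lambda,\mu}(\hat{\theta}))$ since $g_{\lambda,\mu}$ is nonconvex and no subgradient inequality is available for it. The remedy—peeling off the $\frac{\mu}{2}\|\cdot\|_2^2$ term so the convex surrogate $g_\lambda$ carries the subgradient inequality, then verifying that the residual quadratic terms reduce precisely to $-\frac{\mu}{2}\|\hat{\Delta}^s\|_2^2$—is what turns $\sigma$ into $\sigma-\mu$ in the final constant; everything else is a routine adaptation of the convex proof, so I expect this bookkeeping of the $\mu$-terms to be the only delicate point.
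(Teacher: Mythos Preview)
Your proof is correct and follows essentially the same route as the paper's: both arguments decompose $G$ via $\lambda g_\lambda = g_{\lambda,\mu} + \tfrac{\mu}{2}\|\cdot\|_2^2$, combine the first-order optimality condition of $\hat{\theta}$ with the convexity of $g_\lambda$, apply RSC to $F$, and use the identity $-\tfrac{\mu}{2}(\|\theta^s\|_2^2-\|\hat{\theta}\|_2^2)+\mu\langle\hat{\theta},\hat{\Delta}^s\rangle=-\tfrac{\mu}{2}\|\hat{\Delta}^s\|_2^2$ to arrive at $G(\theta^s)-G(\hat{\theta})\ge \tfrac{\sigma-\mu}{2}\|\hat{\Delta}^s\|_2^2-\tau_\sigma\|\hat{\Delta}^s\|_1^2$, before invoking Lemma~\ref{lemma.non_convex_cone_optimizatin} and $(a+b)^2\le 2a^2+2b^2$. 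The only cosmetic difference is the order of operations---the paper applies RSC first and the optimality/convexity step last, whereas you isolate the inner-product bound first---but the logic and constants match exactly.
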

 \begin{proof} We have the following:
 	\begin{equation}\label{equ.xu.proof-lemma8}
 	\begin{split}
 	& G(\theta^s)-G(\hat{\theta})\\
 	= & F(\theta^s)-F(\hat{\theta})-\frac{\mu}{2}\|\theta^s\|_2^2+\frac{\mu}{2}\|\hat{\theta}\|_2^2+ \lambda g_\lambda(\theta^s)-\lambda g_\lambda(\hat{\theta})\\
 	\geq & \langle \nabla F(\hat{\theta}),\theta^s-\hat{\theta} \rangle+\frac{\sigma}{2}\|\theta^s-\hat{\theta}\|_2^2- \langle u\hat{\theta}, \theta^s-\hat{\theta} \rangle-\frac{\mu}{2} \|\theta^s-\hat{\theta}\|_2^2\\ &\qquad+ \lambda g_\lambda(\theta^s)-\lambda g_\lambda(\hat{\theta})-\tau \frac{\log p}{n} \|\theta^s-\hat{\theta}\|_1^2\\
 	\geq &  \langle \nabla F(\hat{\theta}),\theta^s-\hat{\theta} \rangle+\frac{\sigma}{2}\|\theta^s-\hat{\theta}\|_2^2- \langle u\hat{\theta}, \theta^s-\hat{\theta} \rangle-\frac{\mu}{2} \|\theta^s-\hat{\theta}\|_2^2\\&\qquad+ \lambda \langle \partial g_\lambda(\hat{\theta}),\theta^s-\hat{\theta}\rangle-\tau \frac{\log p}{n} \|\theta^s-\hat{\theta}\|_1^2 \\
 	= & \frac{\sigma-\mu}{2}\|\theta^s-\hat{\theta}\|_2^2-\tau\frac{\log p}{n}\|\theta^s-\hat{\theta}\|_1^2,
 	\end{split}
 	\end{equation}
 	where the first inequality uses the RSC condition, the second inequality uses the convexity of $g_\lambda(\theta)$, and the last equality holds from the optimality condition of $\hat{\theta}$.

 	By Lemma \ref{lemma.non_convex_cone_optimizatin} we have 
 	\begin{equation}
 	\begin{split}
 	\|\theta^s-\hat{\theta}\|_1^2\leq  &(4\sqrt{r} \|\theta^s-\hat{\theta}\|_2+8\sqrt{r} \|\theta^*-\hat{\theta}\|_2+2\min (\frac{\epsilon'}{\lambda L_g}, \rho))^2 \\
 	\leq &32 r \|\theta^s-\hat{\theta}\|_2^2+2 (8\sqrt{r}\|\hat{\theta}-\theta^*\|_2+2\min(\frac{\epsilon'}{\lambda L_g},\rho))^2.
 	\end{split}
 	\end{equation}
 	Substitute this into Equation~\eqref{equ.xu.proof-lemma8} we obtain
 	$$ G(\theta^s)-G(\hat{\theta})\geq (\frac{\sigma-\mu}{2}-32r\tau_\sigma)\|\theta^s-\hat{\theta}\|_2^2-2\tau_\sigma (8\sqrt{r}\|\hat{\theta}-\theta^*\|_2+2\min(\frac{\epsilon'}{\lambda L_g},\rho))^2.$$
 \end{proof}
 
 We are now ready to prove the main theorem for non-convex case, i.e., Theorem 3.
 \begin{proof}[Proof of Theorem 3]
 	Define $F_{u}(\theta)=F(\theta)-\frac{\mu}{2} \|\theta\|_2^2$. It is easy to check that $F_\mu (\theta)$ is smooth with parameter $L_\mu=\max(L-\mu, \mu)$.   Use the smoothness of $F_\mu (\theta)$, we have
 	\begin{equation}
 	\begin{split}
 	& F_\mu(\theta_{k+1})+\lambda g_\lambda(\theta_{k+1})-F_\mu(\hat{\theta})-\lambda g_\lambda(\hat{\theta})\\
 	\leq & F_\mu(\theta_k) -F_\mu(\hat{\theta})-\langle \nabla F_\mu (\theta_k),\theta_{k+1}-\theta_k\rangle+\frac{L_\mu}{2} \|\theta_{k+1}-\theta_k\|_2^2+\lambda g_\lambda(\theta_{k+1})-\lambda g_\lambda(\hat{\theta})\\
 	=& F(\theta_k)-F(\hat{\theta})-\frac{\mu}{2}\|\theta_k\|_2^2+\frac{\mu}{2} \|\hat{\theta}\|_2^2+\langle \nabla F_{\mu}(\theta_k),\theta_{k+1}-\theta_k \rangle\\
 	&\qquad+\frac{L_\mu}{2} \|\theta_{k+1}-\theta_k\|_2^2+\lambda g_\lambda(\theta_{k+1})-\lambda g_\lambda(\hat{\theta})\\
 	\leq & \langle \nabla F(\theta_k),\theta_k-\hat{\theta} \rangle+\langle \nabla F_\mu(\theta_k),\theta_{k+1}-\theta_k\rangle
 	-\frac{\mu}{2}\|\theta_k\|_2^2+\frac{\mu}{2}\|\hat{\theta}\|_2^2\\
 	&\qquad+\frac{L_\mu}{2}\|\theta_{k+1}-\theta_k\|_2^2 + \lambda g_\lambda(\theta_{k+1})-\lambda g_\lambda(\hat{\theta})\\
 	= & \langle \nabla F_\mu(\theta_k),\theta_k-\hat{\theta} \rangle+\langle \nabla F_\mu(\theta_k),\theta_{k+1}-\theta_k\rangle+\frac{L_\mu}{2}\|\theta_{k+1}-\theta_k\|_2^2\\&\qquad+\frac{\mu}{2}\|\theta_k-\hat{\theta}\|_2^2 + \lambda g_\lambda(\theta_{k+1})-\lambda g_\lambda(\hat{\theta})\\
 	\leq & \langle \nabla F_\mu(\theta_k),\theta_{k+1}-\hat{\theta} \rangle+\frac{L_\mu}{2}\|\theta_{k+1}-\theta_k\|_2^2+\frac{\mu}{2}\|\theta_k-\hat{\theta}\|_2^2 + \lambda \langle \partial g_\lambda(\theta_{k+1}),\theta_{k+1}-\hat{\theta} \rangle,
 	\end{split}
 	\end{equation}
 	where the second inequality uses the convexity of $F(\theta). $
 	Then we use the optimality of $\theta_{k+1}$ and recall $g_{\lambda}(\cdot)$ is convex then have
 	$$ \langle \partial g_\lambda(\theta_{k+1}),\theta_{k+1}-\hat{\theta} \rangle\leq \langle \frac{1}{\beta} (\theta_{k+1}-\theta_k)+v_k ,\hat{\theta}-\theta_{k+1}\rangle. $$
 	Using this result we have
 	\begin{equation}
 	\begin{split}
 	& F_\mu(\theta_{k+1})+\lambda g_\lambda(\theta_{k+1})-F_\mu(\hat{\theta})-\lambda g_\lambda(\hat{\theta})\\
 	\leq & \langle \nabla F_{\mu} (\theta_k),\theta_{k+1}-\hat{\theta} \rangle +\langle \frac{1}{\beta} (\theta_{k+1}-\theta_k)+v_k,\hat{\theta}-\theta_{k+1} \rangle +\frac{L_\mu}{2}\|\theta_{k+1}-\theta_k\|_2^2+\frac{\mu}{2}\|\theta_k-\hat{\theta}\|_2^2\\
 	= & \langle \nabla F_{\mu} (\theta_k)-v_k, \theta_{k+1}-\hat{\theta} \rangle +\frac{1}{\beta}\langle  (\theta_{k+1}-\theta_k),\hat{\theta}-\theta_{k} \rangle -\frac{1}{\beta} \|\theta_{k+1}-\theta_k\|_2^2\\
 	&\qquad+\frac{L_\mu}{2}\|\theta_{k+1}-\theta_k\|_2^2+\frac{\mu}{2}\|\theta_k-\hat{\theta}\|_2^2\\
 	\leq &\langle \nabla F_{\mu} (\theta_k)-v_k, \theta_{k+1}-\hat{\theta} \rangle +\frac{1}{\beta}\langle  \theta_{k+1}-\theta_k,\hat{\theta}-\theta_{k} \rangle-\frac{1}{2\beta}\|\theta_{k+1}-\theta_k\|_2^2+\frac{\mu}{2}\|\theta_k-\hat{\theta}\|_2^2,
 	\end{split}
 	\end{equation}
 	where the last inequality uses the fact that $\beta\leq \frac{1}{L_\mu}$.
 	Rearranging terms, we obtain
 	\begin{eqnarray*}
 		&& 2\langle \theta_{k+1}-\theta_k,\theta_k-\hat{\theta} \rangle+\|\theta_{k+1}-\theta_k\|_2^2 \\
 		&&\leq 2\beta \langle \nabla F_\mu (\theta_k)-v_k,\theta_{k+1}-\hat{\theta} \rangle+\beta\mu \|\theta_k-\hat{\theta}\|_2^2+2\beta (G(\hat{\theta})-G(\theta_{k+1})).\end{eqnarray*}

 	Define $\bar{\theta}_{k+1}=prox_{\beta \lambda g, \Omega} (\theta_k-\beta \nabla F_\mu(\theta_k))$.   Similarly as the convex case, we have 
 	\begin{equation}\label{eq: mid_non_convex_proof}
 	\begin{split}
 	&\|\theta_{k+1}-\hat{\theta}\|_2^2 \\
 	=&\|\theta_k-\hat{\theta}\|_2^2+2\langle \theta_{k+1}-\theta_k, \theta_k-\hat{\theta} \rangle+\|\theta_{k+1}-\theta_{k}\|_2^2\\
 	\leq& \|\theta_k-\hat{\theta}\|_2^2+2\beta \langle \nabla F_\mu (\theta_k)-v_k,\theta_{k+1}-\hat{\theta} \rangle+\beta \mu \|\theta_k-\hat{\theta}\|_2^2+2\beta (G(\hat{\theta})-G(\theta_{k+1}))\\
 	\leq & \|\theta_k-\hat{\theta}\|_2^2+2\beta (G(\hat{\theta})-G(\theta_{k+1})) +2\beta \langle \nabla F_{\mu}(\theta_k)-v_k,\theta_{k+1}-\bar{\theta}_{k+1} \rangle
 	\\&\quad +2\beta \langle \nabla F_{\mu}(\theta_k)-v_k, \bar{\theta}_{k+1}-\hat{\theta} \rangle
 	+\beta\mu \|\theta_{k}-\hat{\theta}\|_2^2\\  
 	\leq & (1+\beta \mu) \|\theta_k-\hat{\theta}\|_2^2+2\beta (G(\hat{\theta})-G(\theta_{k+1}))+2\beta^2\|\nabla F_\mu(\theta_k)-v_k \|_2^2+2\beta \langle \nabla F_\mu(\theta_k)-v_k, \bar{\theta}_{k+1}-\hat{\theta} \rangle.
 	\end{split}
 	\end{equation}
 	Now we need to bound $E \|\nabla F_\mu (\theta_k)-v_k\|_2^2 $.
 	\begin{equation*}
 	\begin{split}
 	\nabla F_\mu (\theta_k)-v_k =&\nabla F(\theta_k)-\mu\theta_k- (f_{i_{k+1}}(\theta_k)-\mu \theta_k-f_{i_{k+1}} (\tilde{\theta})+\mu \tilde{\theta}+\nabla F(\tilde{\theta})-\mu\tilde{\theta})\\
 	=& \nabla F (\theta_k)-\nabla F (\tilde{\theta})+f_{i_{k+1}}(\tilde{\theta})-f_{i_{k+1}}(\theta_k) 
 	\end{split}
 	\end{equation*}
 	Notice $F(\theta)$ and $f_{i}(\theta)$ are convex, so we can use Lemma \ref{lemma.smooth} to bound  $ E\| \nabla F (\theta_k)-\nabla F (\tilde{\theta})+f_{i_{k+1}}(\tilde{\theta})-f_{i_{k+1}}(\theta_k)   \|_2^2.$
 	
 	In particular condition on $\theta_{k}$, and take the expectation with respect to $i_{k+1}$, we have
 	
 	$ \mathbb{E} \nabla f_{i_{k+1}}(\theta_k)=\nabla F(\theta_k)$, $ \mathbb{E} \nabla f_{i_{k+1}}(\tilde{\theta})=\nabla F(\tilde{\theta})$ and 
 	
 	\begin{equation}
 	\begin{split}
 	&\mathbb{E}\| \nabla F (\theta_k)-\nabla F (\tilde{\theta})+f_{i_{k+1}}(\tilde{\theta})-f_{i_{k+1}}(\theta_k)   \|_2^2\\
 	=&\mathbb{ E} \| f_{i_{k+1}}(\tilde{\theta})-f_{i_{k+1}}(\theta_k) \|_2^2-\|\nabla F (\theta_k)-\nabla F (\tilde{\theta})\|_2^2\\
 	\leq &  \mathbb{E} \| f_{i_{k+1}}(\tilde{\theta})-f_{i_{k+1}}(\theta_k) \|_2^2\\
 	\leq & 2 \mathbb{E} \| f_{i_{k+1}}(\tilde{\theta})-f_{i_{k+1}}(\hat{\theta
 	}) \|_2^2+2 \mathbb{E} \| f_{i_{k+1}}(\theta_k)-f_{i_{k+1}} (\hat{\theta})\|_2^2\\
 	\leq & 4 L [F(\tilde{\theta})-F(\hat{\theta})-\langle \nabla F(\hat{\theta}),\tilde{\theta}-\hat{\theta} \rangle]+4L[F(\theta_k)-F(\hat{\theta})-\langle \nabla F(\hat{\theta}),\theta_k-\hat{\theta} \rangle].
 	\end{split}
 	\end{equation}
 	
 	Now substitute corresponding terms in \ref{eq: mid_non_convex_proof}, we obtain   
 	\begin{equation}\label{non_convex_optimization}
 	\begin{split}
 	\mathbb{E}\|\theta_{k+1}-\hat{\theta}\|_2^2 \leq & (1+\beta \mu) \|\theta_k-\hat{\theta}\|_2^2 +2\beta (G(\hat{\theta})-\mathbb{E}G(\theta_{k+1}))+8L\beta^2 [F(\tilde{\theta})-F(\hat{\theta})-\langle \nabla F(\hat{\theta}),\tilde{\theta}-\hat{\theta} \rangle]\\
 	+ & 8L\beta^2 [F(\theta_k)-F(\hat{\theta})-\langle \nabla F(\hat{\theta}),\theta_k-\hat{\theta} \rangle].
 	\end{split}
 	\end{equation}
 	
 	Notice
 	\begin{equation*}
 	\begin{split}
 	&F(\tilde{\theta})-F(\hat{\theta})-\langle \nabla F(\hat{\theta}),\tilde{\theta}-\hat{\theta} \rangle\\
 	=& F(\tilde{\theta})-F(\hat{\theta})+\langle \lambda \partial g_\lambda(\hat{\theta})-\mu\hat{\theta},\tilde{\theta}-\hat{\theta} \rangle\\
 	=& F_\mu (\tilde{\theta})-F_\mu(\hat{\theta})+\frac{\mu}{2}\|\tilde{\theta}-\hat{\theta}\|_2^2 +\langle \lambda \partial g_\lambda(\hat{\theta}),\tilde{\theta}-\hat{\theta}\rangle\\
 	\leq & F_\mu (\tilde{\theta})+\lambda g_\lambda(\tilde{\theta})-F_\mu(\hat{\theta})-\lambda g_\lambda(\hat{\theta})+\frac{\mu}{2}\|\tilde{\theta}-\hat{\theta}\|_2^2 \\
 	= & G(\tilde{\theta})-G(\hat{\theta})+\frac{\mu}{2} \|\theta_k-\hat{\theta}\|_2^2.
 	\end{split}
 	\end{equation*}
 	Similarly we have
 	$$F(\theta_k)-F(\hat{\theta})-\langle \nabla F(\hat{\theta}),\theta_k-\hat{\theta} \rangle\leq  G(\theta_k)-G(\hat{\theta})+\frac{\mu}{2}\|\theta_k-\hat{\theta}\|_2^2. $$
 	Substitute these into corresponding terms in \ref{non_convex_optimization}, we have
 	\begin{eqnarray*}
 		\mathbb{E}\|\theta_{k+1}-\hat{\theta}\|_2^2&\leq& (1+\beta\mu+4L\mu\beta^2)\|\theta_k-\hat{\theta}\|_2^2+4L\beta^2\mu \|\tilde{\theta}-\hat{\theta}\|_2^2\\
 		&&\quad+2\beta (G(\hat{\theta})-E G(\theta_{k+1}))+8L\beta^2 [G(\tilde{\theta})-G(\hat{\theta})+G(\theta_k)-G(\hat{\theta})].
 	\end{eqnarray*}
 	
 	Now summation over $k$  and take expectation, and notice in the algorithm   we chose $\theta^{s+1}$ randomly rather than average, we have 
 	\begin{equation}
 	\begin{split}
 	&\mathbb{E}\sum_{k=0}^{m-1} \|\theta_{k+1}-\hat{\theta}\|_2^2\leq (1+\beta\mu+4L\mu\beta^2) \|\theta_0-\hat{\theta}\|_2^2+\sum_{k=1}^{m} (1+\beta\mu+4L\mu\beta^2) \mathbb{E}\|\theta_k-\hat{\theta}\|_2^2\\
 	-&(1+\beta\mu +4L\mu\beta^2)\mathbb{E} \|\theta_m-\hat{\theta}\|_2^2+4L\beta^2\mu m\|\tilde{\theta}-\hat{\theta}\|_2^2+2\beta(1-4L\beta)\sum_{k=1}^{m-1}(G(\hat{\theta})-\mathbb{E}G(\theta_{k+1}))\\
 	+&8L\beta^2m[G(\tilde{\theta})-G(\hat{\theta})]+8L\beta^2 (G(\theta_0)-G(\hat{\theta}))-8L\beta^2 (\mathbb{E}G(\theta_m)-G(\hat{\theta})).
 	\end{split}
 	\end{equation}
 	Using the fact that $G(\theta_m)-G(\hat{\theta})\geq 0$, $\theta^0=\theta^s$, $\sum_{k=1}^{m}\|\theta_k-\hat{\theta}\|_2^2=m E\|\theta^{s+1}-\hat{\theta}\|_2^2$  and rearrange terms we have
 	\begin{equation}\label{equ.xu.proof-nconvex1}
 	\begin{split}
 	& 2\beta (1-4L\beta) m (\mathbb{E}G(\theta^{s+1})-G(\hat{\theta}))-\mu\beta (1+4L\beta) m \mathbb{E}\|\theta^{s+1}-\hat{\theta}\|_2^2\\
 	\leq & (1+\beta\mu+4L\mu \beta^2+4L\beta^2\mu m) \|\theta^s-\hat{\theta}\|_2^2+8L\beta^2 (m+1) [G(\theta^s)-G(\hat{\theta})].
 	\end{split}
 	\end{equation}
 	The remainder of the proof follows a similar line to that of the convex case, modulus some difference in coefficients. We divide the stage into several disjoint intervals that is $\{ [S_0,S_1 ),[S_1,S_2), [S_2,S_3 ),...\}$ with $S_0=0$. Corresponding to these intervals , we have a sequence of tolerance $\{\epsilon_0, \epsilon_1,\epsilon_2,...\}$, where $\epsilon_{i+1}=\epsilon_i/4$ and the value of $\epsilon_1$ will be specified below.

 	Apply Lemma \ref{lemma.non_convex_RSC_cone} and recall the definition  $\bar{\sigma}=\sigma-\mu-64\tau_\sigma r$ to Equation~\eqref{equ.xu.proof-nconvex1}, we obtain
 	\begin{equation}
 	\begin{split}
 	& 2\beta (1-4L\beta) m (\mathbb{E}G(\theta^{s+1})-G(\hat{\theta}))-\mu\beta (1+4L\beta) m \frac{2}{\bar{\sigma}} \mathbb{E}( G(\theta^{s+1})-G(\hat{\theta})+\epsilon^2 (\Delta^*,r) )\\
 	\leq & (1+\beta\mu+4L\mu \beta^2+4L\beta^2\mu m) \frac{2}{\bar{\sigma}} ( G(\theta^{s})-G(\hat{\theta})+\epsilon^2 (\Delta^*,r) )   +8L\beta^2 (m+1) [G(\theta^s)-G(\hat{\theta})].
 	\end{split}
 	\end{equation}
 	Rearrange the terms we have 
 	\begin{equation}
 	\begin{split}
 	&\beta m (2-8L\beta-\frac{2\mu}{\bar{\sigma}} (1+4L\beta)) \mathbb{E}(G(\theta^{s+1})-G(\hat{\theta}))\\
 	\leq & [8L\beta^2 (m+1) + \frac{2 (1+\beta\mu+4L\mu\beta^2+4L\beta^2 \mu m)}{\bar{\sigma}}] (G(\theta^s)-G(\hat{\theta}))\\
 	+&[\frac{2\mu \beta m}{\bar{\sigma}} (1+4L\beta)+ (1+\beta\mu+4L\mu \beta^2+4L\beta^2\mu m) \frac{2}{\bar{\sigma}}]\epsilon^2(\Delta^*,r).
 	\end{split}
 	\end{equation}
 	This is equivalent to
 	\begin{eqnarray}
 	&&\mathbb{E} (G(\theta^{s+1})-G(\hat{\theta}))\leq \alpha (G(\theta^s)-G(\hat{\theta}))+ \chi(\beta,\mu,L,m,\sigma)\epsilon^2(\Delta^*,r),\\
 	&&\mbox{where}\quad  \alpha\triangleq \frac{8L\beta^2 (m+1) + \frac{2 (1+\beta\mu+4L\mu\beta^2+4L\beta^2 \mu m)}{\bar{\sigma}}}{\beta m (2-8L\beta-\frac{2\mu}{\bar{\sigma}} (1+4L\beta))},\nonumber\\
 	&&\mbox{and}\quad \chi\triangleq \frac{\frac{2\mu \beta m}{\bar{\sigma}} (1+4L\beta)+ (1+\beta\mu+4L\mu \beta^2+4L\beta^2\mu m) \frac{2}{\bar{\sigma}}}{\beta m (2-8L\beta-\frac{2\mu}{\bar{\sigma}} (1+4L\beta))}.\nonumber
 	\end{eqnarray}
 	For the first interval, it is safe to choose  $$\epsilon^2 (\Delta^*,r)=2\tau_\sigma (\delta_{stat}+2\rho)^2,$$
 	which leads to
 	\begin{equation}
 	\begin{split}
 	\mathbb{E}G(\theta^{S_1}) -G(\hat{\theta}) & \leq \alpha^{S_1-S_0} (G(\theta^{S_0}) -G(\hat{\theta}))+ \frac{2\chi}{ (1-\alpha)} \tau_\sigma (\delta_{stat}+2\rho)^2 \\
 	& \leq \alpha^{S_1-S_0} (G(\theta^{S_0}) -G(\hat{\theta}))+ \frac{4 \chi}{ (1-\alpha)} \tau_\sigma (\delta_{stat}^2+4\rho^2).
 	\end{split}
 	\end{equation}
 	Now we can choose 
 	$$\epsilon_1=\frac{8 \chi}{ (1-\alpha)} \tau_\sigma (\delta_{stat}^2+4\rho^2).$$
 	So it is enough to choose
 	$$S_1-S_0= \lceil   \log(\frac{2 (G(\theta^{S_0})-G(\hat{\theta}))}{\epsilon_{1}}/\log( 1/\alpha)) \rceil .$$
 	Then by Markov inequality we have
 	$$ G(\theta^{S_1})-G(\hat{\theta})\leq k_1\epsilon_1\equiv\epsilon_1',$$
 	with probability $1-\frac{1}{k_1}$. The value of $k_1$ will be specified below.
 	
 	Next we turn to the second interval, a similar derivation leads to  
 	\begin{equation}
 	\begin{split}
 	\mathbb{E} (G(\theta^s))-G(\hat{\theta})\leq & \alpha^{s-S_1} \mathbb{E}(G(\theta^{S_1})-G(\hat{\theta}))+\frac{4\chi\tau_\sigma}{ (1-\alpha)}  (\delta_{stat}^2+\delta_1^2)\\
 	\leq &\alpha^{s-S_1} \mathbb{E}(G(\theta^{S_1})-G(\hat{\theta}))+\frac{8 \chi\tau_\sigma}{ (1-\alpha)}  \delta_1^2,
 	\end{split}
 	\end{equation}
 	where $\delta_1=\frac{2\epsilon_1'}{\lambda L_g}=\frac{2k_1\epsilon_1}{\lambda L_g}.$
 	
 	We need
 	$$\frac{8 \chi}{ (1-\alpha)} \tau_\sigma \delta_1^2\leq\frac{\epsilon_1}{8}. $$
 	So it suffices to choose
 	$$ k_1=\sqrt{\frac{(1-\alpha) (\lambda L_g)^2 }{256 \chi\tau_\sigma \epsilon_1}}. $$
 	Then we can choose $S_2-S_1=\lceil \log 8/\log(1/\alpha) \rceil$, such that
 	$$ \mathbb{E}(G(\theta^{S_2})-G(\hat{\theta}) )\leq \frac{\epsilon_1}{8}+\frac{\epsilon_1}{8}\leq \frac{\epsilon_1}{4}=\epsilon_2. $$
 	Now we analyze the $i+1^{th}$ time interval, since $\mathbb{E}G(\theta^{S_i})-G(\hat{\theta})\leq \epsilon_i$, we have
 	\begin{equation}
 	\begin{split}
 	\mathbb{E}G(\theta^{s}) -G(\hat{\theta}) & \leq \alpha^{s-S_i} \mathbb{E}(G(\theta^{S_i}) -G(\hat{\theta}))+ \frac{ 4\chi\tau_\sigma}{ (1-\alpha)} \tau_\sigma (\delta^2_{stat}+\delta_i^2) \\
 	& \leq \alpha^{s-S_i} \mathbb{E}(G(\theta^{S_i}) -G(\hat{\theta}))+ \frac{8\chi\tau_\sigma}{ (1-\alpha)} \tau_\sigma \max(\delta^2_{stat},\delta_i^2)\\
 	&\leq  \alpha^{s-S_i} \mathbb{E}(G(\theta^{S_i}) -G(\hat{\theta}))+ \frac{8 \chi\tau_\sigma}{ (1-\alpha)} \tau_\sigma \delta_i^2
 	\end{split}
 	\end{equation} 
 	with probability $1-\frac{1}{k_i}$, where we choose $k_i=2^{i-1}k_1  $, and $\delta_i=2\min \{\epsilon_i'/(\lambda L_g),\rho\}.$ 

 	We need following condition to hold
 	$$\frac{8 \chi}{ (1-\alpha)\tau_\sigma} \tau_\sigma \delta_i^2=\frac{8 \chi}{ (1-\alpha)} \tau_\sigma (2k_i\epsilon_i/\lambda)^2\leq \frac{\epsilon_i}{8} $$
 	which is equivalent to
 	$$ \frac{32\chi k_i^2\epsilon_i\tau_\sigma}{(\lambda L_g)^2 (1-\alpha) }\leq \frac{1}{8} .$$
 	Since $k_i=2k_{i-1}$ and $\epsilon_i=\frac{1}{4}\epsilon_{i-1} $, the inequality holds when $ \frac{32\chi k_1^2\epsilon_1\tau_\sigma}{(\lambda L_g)^2 (1-\alpha) }\leq \frac{1}{8} $, which
 	is satisfied by our choice of  $ k_1=\sqrt{\frac{(1-\alpha) (\lambda L_g) ^2 }{256 \chi\tau_\sigma \epsilon_1}}. $
 	
 	Thus we set  $S_{i+1}-S_i=\log (8)/\log (1/\alpha) $, such that
 	$$ \mathbb{E} G(\theta^{S_{i+1}})-G(\hat{\theta})\leq \frac{\epsilon_i}{4}=\epsilon_i ' .$$ 
 	Since we have $ \epsilon_{i+1}/\epsilon_{i}=4$, so the total number of steps to achieve the tolerance is,
 	$$S_\kappa =\lceil \frac{\log 8}{\log (1/\alpha)}\rceil  \log_4 (\frac{G(\theta^0)-G(\hat{\theta})}{\kappa^2})+ \lceil   \log(\frac{2 (G(\theta^{0})-G(\hat{\theta}))}{\frac{8\chi}{ (1-\alpha)} \tau_\sigma (\delta_{stat}^2+4\rho^2)}/\log( 1/\alpha))         \rceil ,$$
 	
 	with probability at least $ 1- \frac{\log 8}{\log (1/\alpha)}\sum_{i=1}^{S_\kappa} \frac{1}{k_i}\geq 1-2\frac{\log 8}{k_1\log (1/\alpha)}$, where we use the fact $k_i=2^{i-1}k_1
 	$. Since we choose $m=2n$ and remind the assumption that $n>c\rho^2 \log p $, we know   $\frac{1}{k_1} \simeq \frac{c_1}{\sqrt{n}}$ for some constant $c_1$.
 \end{proof}

 \subsection{Proof of corollaries}
 We now prove the corollaries instantiating our main theorems to different statistical estimators. 
 \begin{proof}[Proof of Corollary on Lasso]
 	We begin the proof, by presenting the below lemma of the RSC,  proved in \cite{raskutti2010restricted}, and we then use it in the case of Lasso.
 	\begin{lemma}
 		if each data point $x_i$ is i.i.d.\ random sampled from the distribution $N(0,\Sigma) $, then there are some universal constants $c_0$ and $c_1$ such that
 		$$ \frac{\|X\Delta\|_2^2}{n}\geq \frac{1}{2}\|\Sigma^{1/2}\Delta\|_2^2-c_1\nu(\Sigma)\frac{\log p}{n} \|\Delta\|_1^2, \quad \mbox{for all } \Delta \in \mathbb{R}^p ,$$
 		with probability at least $1-\exp(-c_0n)$. Here, $X$ is the data matrix where each row is data point $x_i $.
 	\end{lemma}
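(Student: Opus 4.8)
This lemma is (a form of) Theorem~1 of \cite{raskutti2010restricted}, so the plan is to reproduce that argument. The first step is to reduce the stated quadratic bound to a \emph{linear} one. Since $\|X\Delta\|_2/\sqrt n\ge 0$, the elementary fact that $A\ge 0$ and $A\ge B-C$ imply $A^2\ge\tfrac12 B^2-C^2$ shows it suffices to establish, on an event of probability at least $1-\exp(-c_0 n)$,
\[
\frac{\|X\Delta\|_2}{\sqrt n}\;\ge\;\kappa\,\|\Sigma^{1/2}\Delta\|_2\;-\;c\,\sqrt{\nu(\Sigma)}\,\sqrt{\tfrac{\log p}{n}}\,\|\Delta\|_1\qquad\text{for all }\Delta\in\mathbb R^p,
\]
for universal constants $\kappa\in(0,1)$, $c,c_0>0$; squaring then yields the claim with $c_1=c^2$ and some universal positive constant in front of $\|\Sigma^{1/2}\Delta\|_2^2$ (which is what the $\tfrac12$ in the statement stands for). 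Both sides of the linear inequality are positively homogeneous of degree one, and it holds trivially when $\Delta\in\ker\Sigma$, so it is enough to prove it for $\Delta$ on the ellipsoid $E:=\{\Delta:\|\Sigma^{1/2}\Delta\|_2=1\}$, i.e.\ to lower bound $\|X\Delta\|_2/\sqrt n$ over $E$ in terms of $\|\Delta\|_1$.

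The second step treats a fixed $\ell_1$-radius. For $r>0$ put $T(r):=\inf\{\,\|X\Delta\|_2/\sqrt n:\Delta\in E,\ \|\Delta\|_1\le r\,\}$, and write the Gaussian design as $X=W\Sigma^{1/2}$ with $W\in\mathbb R^{n\times p}$ having i.i.d.\ $N(0,1)$ entries, so that $\|X\Delta\|_2=\|Wu\|_2$ with $u=\Sigma^{1/2}\Delta$ ranging over a subset of the unit sphere of $\mathbb R^p$. Gordon's Gaussian min--max comparison inequality then gives
\[
\mathbb E\,T(r)\;\ge\;\frac{\mathbb E\|g\|_2}{\sqrt n}\;-\;\frac{1}{\sqrt n}\,\mathbb E\Big[\sup_{\Delta\in E,\,\|\Delta\|_1\le r}\big|\langle h,\Sigma^{1/2}\Delta\rangle\big|\Big],
\]
with $g\in\mathbb R^n$, $h\in\mathbb R^p$ standard Gaussian. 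Here $|\langle h,\Sigma^{1/2}\Delta\rangle|=|\langle\Sigma^{1/2}h,\Delta\rangle|\le\|\Delta\|_1\|\Sigma^{1/2}h\|_\infty$, and since the $j$-th coordinate of $\Sigma^{1/2}h$ is $N(0,\Sigma_{jj})$ with $\Sigma_{jj}\le\nu(\Sigma)$, the standard maximal inequality gives $\mathbb E\|\Sigma^{1/2}h\|_\infty\le 3\sqrt{\nu(\Sigma)\log p}$; together with $\mathbb E\|g\|_2\ge(1-\tfrac1n)\sqrt n$ this yields $\mathbb E\,T(r)\ge 1-\tfrac1n-3r\sqrt{\nu(\Sigma)\log p/n}$. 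Because $\|u\|_2=1$, the map $W\mapsto\inf_{u\in S}\|Wu\|_2$ is $1$-Lipschitz on $\mathbb R^{n\times p}$ in Frobenius norm, so Gaussian concentration gives $\mathbb P\big(T(r)\le\mathbb E\,T(r)-\delta\big)\le\exp(-n\delta^2/2)$; choosing $\delta$ a small absolute constant, we obtain with probability at least $1-\exp(-c_0 n)$ that $T(r)\ge\kappa-c\,r\,\sqrt{\nu(\Sigma)\log p/n}$.

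The third step upgrades this to a bound uniform over all $\Delta$ by the standard \emph{peeling} argument. Pick $\mu\asymp\sqrt{n/(\nu(\Sigma)\log p)}$, so the target linear inequality is vacuous whenever $\|\Delta\|_1\le\mu$, and partition $\{\Delta\in E:\|\Delta\|_1>\mu\}$ into dyadic shells $E_\ell:=\{\Delta\in E:2^{\ell-1}\mu\le\|\Delta\|_1<2^\ell\mu\}$. On $E$ one has $\|\Delta\|_1\le\sqrt p\,\|\Delta\|_2\le\sqrt p/\sqrt{\sigma_{\min}(\Sigma)}$, so only $L=O(\log(p/\sigma_{\min}(\Sigma)))$ shells are nonempty. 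If the linear inequality fails for some $\Delta$, it fails inside some $E_\ell$, forcing $T(2^\ell\mu)$ below $\kappa-c\cdot 2^\ell\mu\sqrt{\nu(\Sigma)\log p/n}$ up to constants; union-bounding the step-two event over the $L$ radii $r=2^\ell\mu$ and using $n\gtrsim\log p$ (so $L\exp(-c_0 n)\le\exp(-c_0'n)$) gives the uniform linear bound with probability $1-\exp(-c_0'n)$. Re-homogenizing to all $\Delta$ and squaring as in step one completes the proof. The main obstacle is carrying out steps two and three cleanly: invoking Gordon's inequality in exactly the min--max form suited to $T(r)$ (and, if $\Sigma$ is allowed to be singular, working on the range of $\Sigma^{1/2}$ so the change of variables $u=\Sigma^{1/2}\Delta$ is well posed), and making the peeling bookkeeping rigorous --- in particular that the per-shell failure probability is not eroded as $r$ grows, that the concentration exponent stays proportional to $n$, and that only polylogarithmically many shells matter so $n\gtrsim\log p$ suffices for the union bound. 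All constants are absolute and can be tracked through these estimates; the argument I would follow essentially verbatim is that of \cite{raskutti2010restricted}.
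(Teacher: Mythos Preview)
The paper does not prove this lemma; it simply quotes it from \cite{raskutti2010restricted} and uses it as a black box in the proof of the Lasso corollary. Your proposal reproduces the argument of that reference, which is entirely appropriate, and your first two steps (reducing to a linear bound via the elementary squaring inequality, then Gordon's comparison plus Gaussian concentration at a fixed $\ell_1$-radius) are correct and match the cited proof.

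Your third step is where the bookkeeping goes slightly astray. You say the target linear inequality is ``vacuous whenever $\|\Delta\|_1\le\mu$'' with $\mu\asymp\sqrt{n/(\nu(\Sigma)\log p)}$, but this is backwards: the right-hand side $\kappa'-c'\|\Delta\|_1\sqrt{\nu(\Sigma)\log p/n}$ is nonpositive, hence the inequality trivially holds, precisely when $\|\Delta\|_1$ is \emph{large} (roughly $\|\Delta\|_1\gtrsim\mu$). Conversely, for $\|\Delta\|_1\le\mu$ a single application of your step-two bound at radius $r=\mu$ gives $\|X\Delta\|_2/\sqrt n\ge T(\mu)\ge\kappa-c\mu\sqrt{\nu(\Sigma)\log p/n}$, and this is a positive universal constant $\kappa'$ once the implied constant in $\mu$ is taken small enough. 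These two ranges together cover all of $E$, so in fact no peeling is needed: one high-probability event at the single critical radius already delivers the uniform linear bound. Consequently your upper bound on the number of shells via $\sqrt p/\sqrt{\sigma_{\min}(\Sigma)}$ (which would make $c_0$ depend on $\Sigma$, contradicting ``universal'') and the side condition $n\gtrsim\log p$ you introduce are both unnecessary. With this simplification the constants stay genuinely universal, as the lemma claims.
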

 	
 	Since $\theta^*$ is support on a subset $S$ with cardinality r,  we choose $$ \bar{M}(S):=\{ \theta\in \mathbb{R}^p | \theta_j=0 \text{~for all ~} j\notin S   \}. $$ It is straightforward to choose $M(S)=\bar{M}(S)$ and notice that $\theta^*\in M(S)$.
 	In Lasso formulation, $F(\theta)=\frac{1}{2n}\|y-X\theta\|_2^2$, and hence it is easy to verify that
 	$$ F(\theta+\Delta)-F(\theta)-\langle \nabla F(\theta),\Delta \rangle\geq \frac{1}{2n} \|X\Delta\|_2^2\geq \frac{1}{4}\|\Sigma^{1/2}\Delta\|_2^2-\frac{c_1}{2}\nu(\Sigma)\frac{\log p}{n} \|\Delta\|_1^2  .$$
 	Also, $\psi(\cdot)$ is $\|\cdot\|_1$ in Lasso, and hence $H(\bar{M})=\sup_{\theta\in \bar{M}\backslash \{0\}} \frac{\|\theta\|_1}{\|\theta\|_2}=\sqrt{r}$. Thus we have 
 	$$\bar{\sigma}=\frac{1}{2}\sigma_{\min} (\Sigma)-64c_1 \nu(\Sigma)\frac{ r\log p}{n} .$$

 	On the other hand,	the statistical tolerance is 	
 	\begin{equation}
 	\begin{split}
 	e^2&=\frac{8\tau_\sigma}{Q(\beta,\bar{\sigma,L,\beta})}(8H(\bar{M})\|\Delta^*\|_2+8\psi(\theta^*_{M^\perp}))^2\\
 	&=\frac{c_2}{Q(\beta,\bar{\sigma,L,\beta})}\frac{r\log p}{n}\|\Delta^*\|_2^2,
 	\end{split}
 	\end{equation}
 	where we use the fact that $\theta^*\in M(S)$, which implies  $\psi(\theta^*_{M^{\perp}})=0$.\\
 	
 	Recall we require $\lambda$ to satisfy $\lambda\geq 2\psi^*(\nabla F(\theta^*))$. In Lasso we have $\psi^*(\cdot)=\|\cdot\|_\infty$. Using the fact that $y_i=x_i^T\theta^*+\xi$, we have $ \lambda\geq \frac{2}{n}\|X^T\xi\|_\infty $. Then we apply the tail bound on the Gaussian variable and use union bound to obtain that
 	$$\frac{2}{n}\|X^T\xi\|_\infty\leq 6u\sqrt{\frac{\log p}{n}} $$ holds
 	with probability at least $1-\exp(-3\log p)$.
 \end{proof}
 
 \begin{proof}[Proof of Corollary on Group Lasso]
 	We use the following fact on the RSC condition of Group Lasso \cite{negahban2009unified}\cite{negahban2012supplement}: if each data point $x_i$ is i.i.d.\ randomly sampled from the distribution $N(0,\Sigma) $, then  there exists strictly positive constant  $(\kappa_1,\kappa_2)$ which only depends on $\Sigma$ such that , 
 	$$ \frac{ \|X\Delta\|_2^2}{n} \geq \kappa_1(\Sigma) \|\Delta\|^2_2-\kappa_2(\Sigma) (\sqrt{\frac{q}{n}}+\sqrt{\frac{3 \log N_{\mathcal{G}}}{n}} )^2 \|\Delta\|^2_{\mathcal{G},2}, \quad \mbox{for all } \Delta \in \mathbb{R}^p ,$$
 	with probability at least $1-c_3\exp (-c_4n).$ 
 	
 	Remind we define the subspace
 	$$  \bar{M}(S_{\mathcal{G}})= M(S_{\mathcal{G}})=\{\theta|\theta_{G_i}=0 \text{~for all~} i\notin S_{\mathcal{G}}\} $$
 	where $S_{\mathcal{G}}$ corresponds to non-zero group of $\theta^*$.
 	
 	The subspace compatibility can be computed by $$ H(\bar{M})=\sup_{\theta\in \bar{M}\backslash \{0\}} \frac{\|\theta\|_{\mathcal{G},2}}{\|\theta\|_2}=\sqrt{s_{\mathcal{G}}}.$$
 	Thus, the modified RSC parameter 
 	$$\bar{\sigma}=\kappa_1(\Sigma)-c\kappa_2(\Sigma)s_{\mathcal{G}} (\sqrt{\frac{q}{n}}+\sqrt{\frac{3 \log N_{\mathcal{G}}}{n}} )^2  .$$
 	
 	We then bound the value of $\lambda$. 
 	As the regularizer in Group Lasso is $\ell_{1,2}$ grouped norm, its dual norm  is $(\infty,2)$ grouped norm.
 	So it suffices to have any $\lambda$ such that $$\lambda\geq 2 \max_{i=1,...,N_{\mathcal{G}}} \|\frac{1}{n}(X^T\xi)_{G_i}\|_2.$$
 	
 	Using Lemma 5 in \cite{negahban2009unified}, we know 
 	$$\max_{i=1,...,N_{\mathcal{G}}} \|\frac{1}{n}(X^T\xi)_{G_i}\|_2\leq 2\mu (\sqrt{\frac{q}{n}}+\sqrt{\frac{\log N_{\mathcal{G}}}{n}}) $$
 	with probability at least $1-2\exp (-2\log N_{\mathcal{G}})$.
 	Thus it suffices to choose $\lambda= 4\mu (\sqrt{\frac{q}{n}}+\sqrt{\frac{\log N_{\mathcal{G}}}{n}})$.
 	
 	%

 	The statistical tolerance is given by,
 	\begin{equation}
 	\begin{split}
 	e^2&=\frac{8\tau_\sigma}{Q(\beta,\bar{\sigma,L,\beta})}(8H(\bar{M})\|\Delta^*\|_2+8\psi(\theta^*_{M^\perp}))^2\\
 	&=\frac{c_2 \kappa_2(\Sigma)}{Q(\beta,\bar{\sigma,L,\beta})}s_{\mathcal{G}} (\sqrt{\frac{q}{n}}+\sqrt{\frac{3 \log N_{\mathcal{G}}}{n}} )^2   \|\Delta^*\|_2^2,
 	\end{split}
 	\end{equation}
 	where we use the fact $\psi(\theta^*_{M^\perp})=0$.
 \end{proof}

 \begin{proof}[Proof of Corollary on  SCAD]
 	
 	The proof is very similar to that of Lasso. In the proof of results for Lasso, we established
 	
 	$$ \|\nabla F(\theta^*)\|_\infty=\frac{1}{n}\|X^T\xi \|_\infty\leq 3 u \sqrt{\frac{\log p}{n}} $$ 
 	and the RSC condition 
 	$$ \frac{\|X\Delta\|_2^2}{n}\geq \frac{1}{2}\|\Sigma^{1/2}\Delta\|_2^2-c_1\nu(\Sigma)\frac{\log p}{n} \|\Delta\|_1^2.$$
 	Recall that $\mu=\frac{1}{\zeta-1}$ and $L_g=1$, we establish the corollary. 
 \end{proof}

 \begin{proof}[Proof of corollary on Corrected Lasso]
 	First notice
 	$$ \|\nabla F(\theta^*)\|_\infty=\|\hat{\Gamma}\theta^*-\hat{\gamma}\|_{\infty}=\|\hat{\gamma}-\Sigma\theta^* +(\Sigma-\hat{\Gamma})\theta^*\|_{\infty} \leq \|\hat{\gamma}-\Sigma\theta^*\|_\infty+\|(\Sigma-\hat{\Gamma})\theta^*\|_{\infty}.$$
 	As shown in literature (Lemma 2 in \cite{loh2011high}), both terms on the right hand side can be bounded by
 	$c_1 \varphi \sqrt{\frac{\log p}{n}}$, where $\varphi\triangleq(\sqrt{\sigma_{\max} (\Sigma)}+\sqrt{\gamma_w}) (v+\sqrt{\gamma_w} \|\theta^*\|_2)$, with high probability. 
 	
 	To obtain the RSC condition,  we apply Lemma 12 in \cite{loh2011high}, to get
 	$$ \frac{1}{n}\Delta^T\hat{\Gamma}\Delta \geq \frac{\sigma_{\min} (\Sigma)}{2} \|\Delta\|_2^2-\frac{c\log p}{n}\|\Delta\|_1^2,$$
 	with high probability.
 	
 	Combine these together, we establish the corollary.
 \end{proof}

\bibliography{SVRG}

\begin{thebibliography}{37}
\providecommand{\natexlab}[1]{#1}
\providecommand{\url}[1]{\texttt{#1}}
\expandafter\ifx\csname urlstyle\endcsname\relax
  \providecommand{\doi}[1]{doi: #1}\else
  \providecommand{\doi}{doi: \begingroup \urlstyle{rm}\Url}\fi

\bibitem[Agarwal et~al.(2010)Agarwal, Negahban, and
  Wainwright]{agarwal2010fast}
Alekh Agarwal, Sahand Negahban, and Martin~J Wainwright.
\newblock Fast global convergence rates of gradient methods for
  high-dimensional statistical recovery.
\newblock In \emph{Advances in Neural Information Processing Systems}, pages
  37--45, 2010.

\bibitem[Allen-Zhu and Yuan(2015)]{allen2015improved}
Zeyuan Allen-Zhu and Yang Yuan.
\newblock Improved svrg for non-strongly-convex or sum-of-non-convex
  objectives.
\newblock \emph{arXiv preprint arXiv:1506.01972}, 2015.

\bibitem[Cand{\`e}s and Recht(2009)]{candes2009exact}
Emmanuel~J Cand{\`e}s and Benjamin Recht.
\newblock Exact matrix completion via convex optimization.
\newblock \emph{Foundations of Computational mathematics}, 9\penalty0
  (6):\penalty0 717--772, 2009.

\bibitem[Candes et~al.(2006)Candes, Romberg, and Tao]{candes2006stable}
Emmanuel~J Candes, Justin~K Romberg, and Terence Tao.
\newblock Stable signal recovery from incomplete and inaccurate measurements.
\newblock \emph{Communications on pure and applied mathematics}, 59\penalty0
  (8):\penalty0 1207--1223, 2006.

\bibitem[Chen et~al.(2011)Chen, Xu, Caramanis, and Sanghavi]{chen2011robust}
Yudong Chen, Huan Xu, Constantine Caramanis, and Sujay Sanghavi.
\newblock Robust matrix completion and corrupted columns.
\newblock In \emph{Proceedings of the 28th International Conference on Machine
  Learning (ICML-11)}, pages 873--880, 2011.

\bibitem[Defazio et~al.(2014)Defazio, Bach, and
  Lacoste-Julien]{defazio2014saga}
Aaron Defazio, Francis Bach, and Simon Lacoste-Julien.
\newblock Saga: A fast incremental gradient method with support for
  non-strongly convex composite objectives.
\newblock In \emph{Advances in Neural Information Processing Systems}, pages
  1646--1654, 2014.

\bibitem[Fan and Li(2001)]{fan2001variable}
Jianqing Fan and Runze Li.
\newblock Variable selection via nonconcave penalized likelihood and its oracle
  properties.
\newblock \emph{Journal of the American statistical Association}, 96\penalty0
  (456):\penalty0 1348--1360, 2001.

\bibitem[Gong and Ye(2014)]{gong2014linear}
Pinghua Gong and Jieping Ye.
\newblock Linear convergence of variance-reduced stochastic gradient without
  strong convexity.
\newblock \emph{arXiv preprint arXiv:1406.1102}, 2014.

\bibitem[Guyon(2008)]{SIDO}
I.~Guyon.
\newblock A phamacology dataset, 06 2008.
\newblock URL \url{http://www.causality.inf.ethz.ch/data/SIDO.html}.

\bibitem[Harikandeh et~al.(2015)Harikandeh, Ahmed, Virani, Schmidt,
  Kone{\v{c}}n{\`y}, and Sallinen]{harikandeh2015stopwasting}
Reza Harikandeh, Mohamed~Osama Ahmed, Alim Virani, Mark Schmidt, Jakub
  Kone{\v{c}}n{\`y}, and Scott Sallinen.
\newblock Stopwasting my gradients: Practical svrg.
\newblock In \emph{Advances in Neural Information Processing Systems}, pages
  2251--2259, 2015.

\bibitem[Harrison and Rubinfeld(2013)]{uci:2013}
D.~Harrison and D.L. Rubinfeld.
\newblock {UCI} machine learning repository, 2013.
\newblock URL \url{http://archive.ics.uci.edu/ml}.

\bibitem[Johnson and Zhang(2013)]{johnson2013accelerating}
Rie Johnson and Tong Zhang.
\newblock Accelerating stochastic gradient descent using predictive variance
  reduction.
\newblock In \emph{Advances in Neural Information Processing Systems}, pages
  315--323, 2013.

\bibitem[Karimi et~al.(2016)Karimi, Nutini, and Schmidt]{karimi2016linear}
Hamed Karimi, Julie Nutini, and Mark Schmidt.
\newblock Linear convergence of gradient and proximal-gradient methods under
  the polyak-{\l}ojasiewicz condition.
\newblock In \emph{Joint European Conference on Machine Learning and Knowledge
  Discovery in Databases}, pages 795--811. Springer, 2016.

\bibitem[Lewis et~al.(2004)Lewis, Yang, Rose, and Li]{lewis2004rcv1}
David~D Lewis, Yiming Yang, Tony~G Rose, and Fan Li.
\newblock Rcv1: A new benchmark collection for text categorization research.
\newblock \emph{Journal of machine learning research}, 5\penalty0
  (Apr):\penalty0 361--397, 2004.

\bibitem[Li et~al.(2016)Li, Zhao, Arora, Liu, and Haupt]{li2016stochastic}
Xingguo Li, Tuo Zhao, Raman Arora, Han Liu, and Jarvis Haupt.
\newblock Stochastic variance reduced optimization for nonconvex sparse
  learning.
\newblock \emph{arXiv preprint arXiv:1605.02711}, 2016.

\bibitem[Loh and Wainwright(2011)]{loh2011high}
Po-Ling Loh and Martin~J Wainwright.
\newblock High-dimensional regression with noisy and missing data: Provable
  guarantees with non-convexity.
\newblock In \emph{Advances in Neural Information Processing Systems}, pages
  2726--2734, 2011.

\bibitem[Loh and Wainwright(2013)]{loh2013regularized}
Po-Ling Loh and Martin~J Wainwright.
\newblock Regularized m-estimators with nonconvexity: Statistical and
  algorithmic theory for local optima.
\newblock In \emph{Advances in Neural Information Processing Systems}, pages
  476--484, 2013.

\bibitem[Mairal(2013)]{mairal2013optimization}
Julien Mairal.
\newblock Optimization with first-order surrogate functions.
\newblock In \emph{ICML (3)}, pages 783--791, 2013.

\bibitem[Mairal(2015)]{mairal2015incremental}
Julien Mairal.
\newblock Incremental majorization-minimization optimization with application
  to large-scale machine learning.
\newblock \emph{SIAM Journal on Optimization}, 25\penalty0 (2):\penalty0
  829--855, 2015.

\bibitem[Negahban et~al.(2012)Negahban, Ravikumar, Wainwright, and
  Yu]{negahban2012supplement}
S~Negahban, P~Ravikumar, MJ~Wainwright, and B~Yu.
\newblock Supplement to “a unified framework for high-dimensional analysis of
  $ m $-estimators with decomposable regularizers.”, 2012.

\bibitem[Negahban et~al.(2009)Negahban, Yu, Wainwright, and
  Ravikumar]{negahban2009unified}
Sahand Negahban, Bin Yu, Martin~J Wainwright, and Pradeep~K Ravikumar.
\newblock A unified framework for high-dimensional analysis of $ m $-estimators
  with decomposable regularizers.
\newblock In \emph{Advances in Neural Information Processing Systems}, pages
  1348--1356, 2009.

\bibitem[Nesterov(2009)]{nesterov2009primal}
Yurii Nesterov.
\newblock Primal-dual subgradient methods for convex problems.
\newblock \emph{Mathematical programming}, 120\penalty0 (1):\penalty0 221--259,
  2009.

\bibitem[Nitanda(2014)]{nitanda2014stochastic}
Atsushi Nitanda.
\newblock Stochastic proximal gradient descent with acceleration techniques.
\newblock In \emph{Advances in Neural Information Processing Systems}, pages
  1574--1582, 2014.

\bibitem[Raskutti et~al.(2010)Raskutti, Wainwright, and
  Yu]{raskutti2010restricted}
Garvesh Raskutti, Martin~J Wainwright, and Bin Yu.
\newblock Restricted eigenvalue properties for correlated gaussian designs.
\newblock \emph{Journal of Machine Learning Research}, 11\penalty0
  (Aug):\penalty0 2241--2259, 2010.

\bibitem[Reddi et~al.(2016)Reddi, Sra, Poczos, and Smola]{reddi2016fast}
Sashank~J Reddi, Suvrit Sra, Barnabas Poczos, and Alex Smola.
\newblock Fast stochastic methods for nonsmooth nonconvex optimization.
\newblock \emph{arXiv preprint arXiv:1605.06900}, 2016.

\bibitem[Schmidt et~al.(2013)Schmidt, Roux, and Bach]{schmidt2013minimizing}
Mark Schmidt, Nicolas~Le Roux, and Francis Bach.
\newblock Minimizing finite sums with the stochastic average gradient.
\newblock \emph{arXiv preprint arXiv:1309.2388}, 2013.

\bibitem[Shalev-Shwartz(2016)]{shalev2016sdca}
Shai Shalev-Shwartz.
\newblock Sdca without duality, regularization, and individual convexity.
\newblock \emph{arXiv preprint arXiv:1602.01582}, 2016.

\bibitem[Shalev-Shwartz and Zhang(2014)]{shalev2014accelerated}
Shai Shalev-Shwartz and Tong Zhang.
\newblock Accelerated proximal stochastic dual coordinate ascent for
  regularized loss minimization.
\newblock In \emph{ICML}, pages 64--72, 2014.

\bibitem[Swirszcz et~al.(2009)Swirszcz, Abe, and Lozano]{swirszcz2009grouped}
Grzegorz Swirszcz, Naoki Abe, and Aurelie~C Lozano.
\newblock Grouped orthogonal matching pursuit for variable selection and
  prediction.
\newblock In \emph{Advances in Neural Information Processing Systems}, pages
  1150--1158, 2009.

\bibitem[Turlach et~al.(2005)Turlach, Venables, and
  Wright]{turlach2005simultaneous}
Berwin~A Turlach, William~N Venables, and Stephen~J Wright.
\newblock Simultaneous variable selection.
\newblock \emph{Technometrics}, 47\penalty0 (3):\penalty0 349--363, 2005.

\bibitem[Wainwright(2006)]{wainwright2006sharp}
M~Wainwright.
\newblock Sharp threshold for high-dimensional and noisy recovery of sparsity.
\newblock In \emph{Proc. Allerton Conference on Communication, Control, and
  Computing, Monticello, IL}, 2006.

\bibitem[Xiang et~al.(2014)Xiang, Yang, and Ye]{xiang2014simultaneous}
Shuo Xiang, Tao Yang, and Jieping Ye.
\newblock Simultaneous feature and feature group selection through hard
  thresholding.
\newblock In \emph{Proceedings of the 20th ACM SIGKDD international conference
  on Knowledge discovery and data mining}, pages 532--541. ACM, 2014.

\bibitem[Xiao(2010)]{xiao2010dual}
Lin Xiao.
\newblock Dual averaging methods for regularized stochastic learning and online
  optimization.
\newblock \emph{Journal of Machine Learning Research}, 11\penalty0
  (Oct):\penalty0 2543--2596, 2010.

\bibitem[Xiao and Zhang(2013)]{xiao2013proximal}
Lin Xiao and Tong Zhang.
\newblock A proximal-gradient homotopy method for the sparse least-squares
  problem.
\newblock \emph{SIAM Journal on Optimization}, 23\penalty0 (2):\penalty0
  1062--1091, 2013.

\bibitem[Xiao and Zhang(2014)]{xiao2014proximal}
Lin Xiao and Tong Zhang.
\newblock A proximal stochastic gradient method with progressive variance
  reduction.
\newblock \emph{SIAM Journal on Optimization}, 24\penalty0 (4):\penalty0
  2057--2075, 2014.

\bibitem[Yuan and Lin(2006)]{yuan2006model}
Ming Yuan and Yi~Lin.
\newblock Model selection and estimation in regression with grouped variables.
\newblock \emph{Journal of the Royal Statistical Society: Series B (Statistical
  Methodology)}, 68\penalty0 (1):\penalty0 49--67, 2006.

\bibitem[Zhang and Zhang(2012)]{zhang2012general}
Cun-Hui Zhang and Tong Zhang.
\newblock A general theory of concave regularization for high-dimensional
  sparse estimation problems.
\newblock \emph{Statistical Science}, pages 576--593, 2012.

\end{thebibliography}
\bibliographystyle{plainnat}

\end{document}